\documentclass[11pt]{article}
\usepackage{hyphenat}
\usepackage[margin=1in]{geometry}
\usepackage{multirow}
\usepackage{amsthm}
\newcommand{\algc}[1]{\hfill\(\triangleright\) {\footnotesize #1}}
\usepackage{graphicx}
\usepackage[cmex10]{amsmath}
\usepackage[mathscr]{euscript}
\usepackage{bm}
\usepackage{amsfonts}
\usepackage{tikz}
\usepackage[mathscr]{euscript}
\usepackage[dvipsnames]{xcolor}
\usepackage{hyperref}
\usepackage{cleveref}
\crefname{algorithm}{Algorithm}{Algorithms}
\Crefname{algorithm}{Algorithm}{Algorithms}
\crefname{assumption}{Assumption}{Assumptions}
\Crefname{assumption}{Assumption}{Assumptions}
\usepackage{nicefrac}       % compact symbols for 1/2, etc.
\usepackage{microtype}      % microtypography

\usepackage{xcolor}
\usepackage[most]{tcolorbox}
\usepackage{amsmath,mleftright}
\hypersetup{
    colorlinks=true,       % Enable colored links
    linkcolor=Red,         % Color of internal links (e.g., section references)
    citecolor=blue,        % Color of citations
    filecolor=magenta,     % Color of file links
    urlcolor=Blue          % Color of URLs
}

\usepackage{bm}
\usepackage[american]{babel}
\usepackage{xr}

\usepackage[utf8]{inputenc} % allow utf-8 input
\usepackage[T1]{fontenc}    % use 8-bit T1 fonts
\usepackage{url}            % simple URL typesetting
\usepackage{booktabs}       % professional-quality tables
\usepackage{amsfonts}       % blackboard math symbols
\usepackage{nicefrac}       % compact symbols for 1/2, etc.
\usepackage{microtype}      % microtypography
\usepackage{float}
\usepackage{parskip} %no indentation

\usepackage{algorithm,caption}
\usepackage[noend]{algorithmic}

\usepackage{colonequals}
\usepackage{textcomp}
\usepackage{amsopn,float,bbm,bm,enumerate,color,multirow,gensymb}
\usepackage{amsfonts,amsmath,amssymb,amsthm} 
\usepackage{array}
\usepackage{url}            % simple URL typesetting
\usepackage{booktabs}       % professional-quality tables
\usepackage{nicefrac}       % compact symbols for 1/2, etc.
\usepackage{microtype}      % microtypography
\usepackage{bm}

\usepackage{natbib}
\usepackage{graphicx}
\usepackage{subfigure}

\usepackage{xspace}
\usepackage{bigints}
\usepackage[utf8]{inputenc} % allow utf-8 input
\usepackage[T1]{fontenc}    % use 8-bit T1 fonts

\usepackage{epsfig,subfigure,graphicx}
\usepackage{comment}
\usepackage{afterpage}
\usepackage{thmtools,thm-restate}

\usepackage{enumitem}

\usepackage{amsthm}

\newtheorem{definition}{Definition}[section]
\newtheorem{nad}{Notation and Definitions}[section]

\newtheorem{theorem}{Theorem}[section]
\newtheorem{lemma}[theorem]{Lemma}
\newtheorem{assumption}{Assumption}[section]

\newtheorem{proposition}{Proposition}[section]

\usepackage[]{color-edits}
\addauthor[]{rd}{violet}
\addauthor[]{ug}{teal}
\addauthor[]{ks}{Green}

\newcommand{\beq}{\begin{equation}}
\newcommand{\eeq}{\end{equation}}

\newcommand\E{\mathbb{E}}

\newcommand{\cA}{{\cal A}}

\DeclareMathOperator{\argmax}{argmax}

\theoremstyle{definition} 

\newtheorem{remark}{Remark}[section]

\newcommand {\commentout}[1] {}

\def\ints{{{\rm Z} \kern -.35em {\rm Z} }}  % ints
\def\smallints{{{\rm Z} \kern -.3em {\rm Z} }}  % small ints
\def\pints{{{\rm I} \kern -.15em {\rm N} }}      % pints
\newcommand{\reals}{\mathbb R}

\def\cplx{{{\rm I} \kern -.45em {\rm C} }}       % complex
\def\l2{\rm {\mathcal L}^{2}(\reals)}            % l2

\newcommand{\be}{\begin{eqnarray}}
\newcommand{\ee}{\end{eqnarray}}
\newcommand{\bea}{\begin{eqnarray}}
\newcommand{\eea}{\end{eqnarray}}
\newcommand{\beaa}{\begin{eqnarray*}}
\newcommand{\eeaa}{\end{eqnarray*}}
\newcommand{\bnad}{\begin{nad}}
\newcommand{\enad}{\end{nad}}

\newcommand{\cO}{\mathcal{O}}

\newcommand{\IGNORE}[1]{}

﻿% terms

\renewcommand{\E} {\operatornamewithlimits{\ensuremath{\mathbb{E}}}} %expectation

{\begin{array}{l@{\hspace{8mm}}l@{\hspace{8mm}}l}}%
{\end{array}}
\newlength{\lplb}
\DeclareMathSymbol{\mhyphen}{\mathord}{AMSa}{"39}
\newcommand{\RepUCB}{\color{Green}\ensuremath{\mathtt{RepUCB}}}
\newcommand{\RepRidge}{\color{Green}\ensuremath{\mathtt{RepRidge}}}

\newcommand{\RepLinUCB}{\color{Green}\ensuremath{\mathtt{RepLinUCB}}}
\newcommand{\RepMean}{\color{Green}\ensuremath{\mathtt{RepMean}}}

\newcommand{\RepGLM}{\color{Green}\ensuremath{\mathtt{RepGLM}}}

\newcommand{\RepGLMUCB}{\color{Green}\ensuremath{\mathtt{RepGLMUCB}}}
\usepackage{tcolorbox}

\newenvironment{theoremBox}[1][]{
    \refstepcounter{innerthm}%
    \begin{tcolorbox}[
        left=5pt, right=5pt,  % Horizontal padding
        colback=gray!5,     % Background of the box
        colframe=gray!150,    % Border color
        colbacktitle=gray!25,% Title bar background color (slightly darker)
        coltitle=black,      % Text color in the title bar
        fonttitle=\bfseries, % Bold font for the title
        title=Theorem~\theinnerthm%
          \if\relax\detokenize{#1}\relax
          \else: #1\fi      % If an optional title is given, show it
    ]
}{
    \end{tcolorbox}
}

\usepackage{url}
\title{{Replicable Bandits with UCB based Exploration}}

\author{
\begin{tabular}{c@{\hspace{0cm}}c}
\textbf{Rohan Deb} & \textbf{Udaya Ghai} \\
University of Illinois Urbana-Champaign & Amazon \\
\href{mailto:rd22@illinois.edu}{\texttt{rd22@illinois.edu}} &
\href{mailto:ughai@amazon.com}{\texttt{ughai@amazon.com}} \\
\\
\textbf{Karan Singh} & \textbf{Arindam Banerjee} \\
Carnegie Mellon University & University of Illinois Urbana-Champaign \\
\href{mailto:karansingh@cmu.edu}{\texttt{karansingh@cmu.edu}} &
\href{mailto:arindamb@illinois.edu}{\texttt{arindamb@illinois.edu}} \\
\end{tabular}
}
\date{}
\begin{document}

\maketitle

\begin{abstract}
    \label{sec:abstract}
    We study replicable algorithms for stochastic multi-armed bandits (MAB) and linear bandits with UCB (Upper Confidence Bound) based exploration. A bandit algorithm is $\rho$-{\em replicable} if two executions using shared internal randomness but independent reward realizations produce the same action sequence with probability at least $1-\rho$. Prior approaches to this problem are elimination-based and, in linear bandits with infinitely many actions, rely on discretization, leading to suboptimal dependence on the dimension $d$ and $\rho$. We develop optimistic alternatives for both settings. For stochastic multi-armed bandits, we propose \hyperref[alg:replicable-ucb]{$\RepUCB$}, a replicable batched UCB algorithm and show that it attains a regret $\cO\!\left(\frac{K^2\log^2 T}{\rho^2}\sum_{a:\Delta_a>0}\left(\Delta_a+\frac{\log(KT\log T)}{\Delta_a}\right)\right)$. For stochastic linear bandits, we first introduce \hyperref[alg:replicable_ridge]{$\RepRidge$}, a replicable ridge regression estimator that satisfies both a confidence guarantee and a $\rho$-replicability guarantee. Beyond its role in our bandit algorithm, this may also be of independent interest in other statistical estimation settings. We then use \hyperref[alg:replicable_ridge]{$\RepRidge$} to design \hyperref[alg:replicable-linear-ucb]{$\RepLinUCB$}, a replicable optimistic algorithm for stochastic linear bandits, and show that its regret is bounded by $\widetilde{\mathcal{O}}\!\big(\big(d+\frac{d^3}{\rho}\big)\sqrt{T}\big)$. This improves the best prior regret guarantee by a factor of $\cO(d/\rho)$, showing that our optimistic algorithm can substantially reduce the price of replicability. This is the first linear-bandit algorithm with an {optimal} dependence on $\rho$ for large number of arms. Finally, we extend our framework to stochastic generalized linear bandits by developing \hyperref[alg:replicable_glm]{$\RepGLM$}, a replicable penalized GLM estimator, and \hyperref[alg:replicable_glm_ucb]{$\RepGLMUCB$}, a replicable optimistic algorithm for this setting.
\vspace{-1ex}
\end{abstract}

\section{Introduction}
\label{sec:intro}
\vspace{-0.5ex}
Replicable learning studies the design of randomized learning algorithms whose outputs remain stable across repeated executions on fresh data drawn from the same underlying distribution. This line of work was initiated by \citet{impagliazzo2022reproducibility}, motivated by the broader reproducibility crisis in science and machine learning, where even small changes in sampled data can lead to substantially different conclusions \citep{nas2019reproducibility}. At a conceptual level, replicability asks for more than accurate performance: it requires that the algorithm itself reaches essentially the same conclusion across independent datasets from the same source, which is particularly important in scientific and high-stakes decision-making pipelines where downstream conclusions should not be overly sensitive to sample-level randomness. Subsequent work has shown that replicability is closely connected to other central notions in learning theory, including privacy, stability, and adaptive generalization \citep{bun2023stability}, has identified nontrivial computational tradeoffs in achieving replicability \citep{kalavasis2024computational}.

Replicability has since been studied in interactive decision-making settings. In the multi-armed bandit (MAB) setting, a learner repeatedly selects an action, observes only the reward of the chosen action, and seeks to minimize cumulative regret relative to the optimal action. \citet{esfandiari2023replicable} initiated the study of low-regret algorithms that produce replicable sequences across repeated runs. Subsequent work showed that in MABs the additional cost of replicability can be asymptotically negligible and also established lower bounds \citep{komiyama2024replicability}. Replicability has also been investigated in application-driven decision problems, such as digital health interventions, where replicable bandit algorithms support consistent downstream statistical analysis across repeated trials \citep{zhang2024replicable}. Related notions have recently been extended to broader decision-making settings, including online learning \citep{ahmadi2024replicable} and reinforcement learning \citep{karbasi2023replicability,eaton2023replicable}. More broadly, the study of replicability has expanded beyond sequential decision-making to other core learning problems, including unsupervised learning, where replicable algorithms have been developed for statistical clustering \citep{esfandiari2023replicableclustering}, and high-dimensional statistics, where recent work has characterized the statistical and computational cost of replicability for tasks such as mean estimation and multi-hypothesis testing \citep{hopkins2024replicability}. Recent work has also begun to investigate the cost of replicability in active learning \citep{hira2024active}, further illustrating that replicability is emerging as an algorithmic requirement across a wide range of learning settings.

In this paper, we study replicable algorithms for stochastic multi-armed bandits \citep{lai1985asymptotically,auer2002finite,thompson1933likelihood,agrawal2012analysis,bubeck2012regret}
 and stochastic linear bandits \citep{dani2008stochastic,chu2011contextual,abbasi2011improved,agrawal2013thompson}. A bandit algorithm is $\rho$-replicable if, when it is run twice with the same internal randomness but with independent reward realizations, the two executions produce the same action $a_t$ at every round $t \in [T]$ simultaneously, with probability at least $1-\rho$; see Definition~\ref{def:replicable_bandit} for the formal definition. The closest prior work to ours is \citet{esfandiari2023replicable}, which develops elimination-based algorithms for both settings. In the stochastic multi-armed bandit setting, their algorithms use replicable mean estimates to eliminate suboptimal arms in batches. In the infinite-action linear bandit setting, their main approach combines a replicable least-squares estimation subroutine with batched elimination over a carefully constructed core set obtained from a $G$-optimal design of the action space \citep{esfandiari2023replicable}. In contrast, we develop replicable Upper Confidence Bound (UCB)-style algorithms \citep{auer,abbasi2011improved, chu2011contextual} that choose actions directly by maximizing an upper confidence score based on the principle of optimism under uncertainty. For linear bandits with infinitely many actions, this yields a regret bound that improves the existing regret guarantee by a factor of $\cO(d/\rho)$,  where $d$ is the action dimension and $\rho$ is the target replicability level. 
 
 Next we outline our \textbf{main contributions} below:
\vspace{1ex}
\begin{enumerate}
    \item \textbf{MAB:} We develop \hyperref[alg:replicable-ucb]{$\RepUCB$} (Algorithm~\ref{alg:replicable-ucb}), a replicable optimistic algorithm for stochastic MABs with $K$ arms and horizon $T$. Using a replicable mean-estimation oracle \citep{impagliazzo2022reproducibility} inside a batched UCB procedure, \hyperref[alg:replicable-ucb]{$\RepUCB$} is $\rho$-replicable and achieves regret $\cO\!\big(\tfrac{K^2\log^2 T}{\rho^2}\sum_{a:\Delta_a>0}\big(\Delta_a+\tfrac{\log(KT\log T)}{\Delta_a}\big)\big)$, where $\Delta_a=\mu^{\star}-\mu_a$ is the suboptimality gap of arm $a$ and $\rho$ is the target replicability level.
    % \vspace{1ex}
    \item \textbf{Ridge Regression:} We develop \hyperref[alg:replicable_ridge]{$\RepRidge$} (Algorithm~\ref{alg:replicable_ridge}), a replicable estimation for fixed-design ridge regression. Given $\delta\in(0,1)$ and $\rho\in(3\delta,1)$, we show that \hyperref[alg:replicable_ridge]{$\RepRidge$} returns a $\rho$ replicable estimator $\widetilde{\theta}_n$ satisfying $\|\widetilde{\theta}_n-\theta^{\star}\|_{V_n}\le \beta_n(\delta)\left(1+\frac{d}{\rho-2\delta}\right)$ with probability at least $1-\delta$. Here $V_n=\lambda I+\sum_{i=1}^n x_i x_i^\top$ is the regularized Gram matrix and $\beta_n(\delta)$ is the standard ridge confidence radius (see Section~\ref{sec:repRidge}). As an application, under the coverage assumption of \citet{esfandiari2023replicable}, this implies a uniform prediction error with a factor-$d$ improvement in sample complexity.
    % \vspace{0.5ex}
    \item \textbf{Linear Bandits:} We develop \hyperref[alg:replicable-linear-ucb]{$\RepLinUCB$} (Algorithm~\ref{alg:replicable-linear-ucb}), a replicable optimistic algorithm for stochastic linear bandits. It combines determinant-triggered batching with the replicable ridge estimator \hyperref[alg:replicable_ridge]{$\RepRidge$} at batch starts. We show that \hyperref[alg:replicable-linear-ucb]{$\RepLinUCB$} is $\rho$-replicable, and achieves regret $\widetilde{\mathcal{O}}\!\big(\big(d+\frac{d^3}{\rho}\big)\sqrt{T}\big)$, improving upon the prior bound \citep{esfandiari2023replicable} by $\cO\!\big({d}/{\rho}\big)$. Further this is the first linear-bandit algorithm with an \textit{optimal} dependence on $\rho$ for large number of arms.
    % \vspace{0.5ex}
    \item \textbf{Generalized Linear Bandits:} We develop \hyperref[alg:replicable_glm_ucb]{$\RepGLMUCB$} (Algorithm~\ref{alg:replicable_glm_ucb}), a replicable optimistic algorithm for stochastic generalized linear bandits. The algorithm combines determinant-triggered batching with a replicable penalized GLM estimator \hyperref[alg:replicable_glm]{$\RepGLM$} (Algorithm~\ref{alg:replicable_glm}) at batch starts. We show that \hyperref[alg:replicable_glm_ucb]{$\RepGLMUCB$} is $\rho$-replicable and achieves regret $\widetilde{\mathcal{O}}\!\big(\frac{k_{\mu}}{c_{\mu}}\big(d+\frac{d^3}{\rho}\big)\sqrt{T}\big)$.
\end{enumerate}
% \vspace{-1.5ex}
\textbf{Concurrent Work.} Concurrently, \citet{bollini2026replicable} also developed a UCB-style replicable algorithm for (vanilla) MABs, called DEBORA. Their analysis establishes the same $\rho$-replicability guarantee and the instance-dependent regret bound, up to constants and logarithmic factors, as our batched UCB algorithm $\RepUCB$ (Contribution 1), which in turn matches the best known \textit{elimination-based} guarantee of \citet{esfandiari2023replicable}, upto log factors. However, our development of the batched UCB algorithm is nevertheless useful beyond the MAB case, since it serves as the blueprint for our batched optimistic algorithm in the linear and generalized linear bandit settings.

\section{Preliminaries}
\label{sec:prelim}
We consider an online decision-making problem over a horizon of $T$ rounds. At each round $t \in [T]$, the learner interacts with an environment, selects an action $a_t$, and observes a stochastic reward $r_t \equiv r_{t}(a_{t})$. The precise form of the action space depends on the model; in the stochastic multi-armed bandit setting, the learner chooses one of $K$ arms, while in the linear bandit setting, the learner selects an action from a feasible set $A_t \subset \mathbb{R}^d$. In both cases, the learner's decision at round $t$ is based only on the information available prior to that round. Suppose $a_t$ is the action chosen by the learner at round $t$, and $a_t^{\star}$ denotes an oracle action maximizing the expected reward at round $t$ over the feasible action set, i.e., $a_t^{\star} = \argmax_{a \in X_{t}} \E[r_{t}(a)]$, where the expectation is taken over the randomness of the rewards $r_{t}$. Then we define the cumulative regret as
% \vspace{-5ex}
\begin{align*}
\mathrm{Reg}(T)
:=
\sum_{t=1}^{T}
\Big(
\mathbb{E}[r_t(a_t^{\star})]
-
\mathbb{E}[r_t(a_t)]
\Big),
\end{align*}

\textbf{Multi-Armed Bandits.} In the stochastic multi-armed bandit setting, there are $K$ arms, indexed by $a \in [K]$. Each arm $a$ is associated with an unknown mean reward $\mu_a \in \mathbb{R}$. At each round $t \in [T]$, the learner selects an arm $a_t \in [K]$ and observes a reward $r_t(a_t)$ satisfying $r_t(a_t) = \mu_{a_t} + \eta_t$, where $\eta_t$ is a zero-mean $R$ sub-gaussian noise term. Let $\mu^{\star} := \max_{a \in [K]} \mu_a$ denote the optimal mean reward, and let $a^{\star} \in \arg\max_{a \in [K]} \mu_a$ be an optimal arm. The gap of any arm $a$ is defined as $\Delta_a := \mu^{\star} - \mu_a$. Suppose $N_a(T) := \sum_{t=1}^{T} \mathbf{1}\{a_t = a\}$ is the number of times arm $a$ is pulled up to round $T$, then the cumulative regret over $T$ rounds is
\vspace{-1.5ex}
\begin{align*}
\mathrm{Reg_{MAB}}(T)
:=
\sum_{t=1}^{T} (\mu^{\star} - \mu_{a_t})
=
\sum_{a=1}^{K} \Delta_a \, \mathbb{E}[N_a(T)],
\end{align*}
\textbf{Linear Bandits.} In the linear bandit setup, there is an unknown parameter $\theta^{\star} \in \mathbb{R}^d$. At each round $t \in [T]$, the learner is given a decision set $A_t \subset \mathbb{R}^d$, selects action $a_t \in A_t$, and observes a reward $r_t(a_t)$ satisfying $r_t(a_t) = \langle a_t, \theta^{\star} \rangle + \eta_t$, where $\eta_t$ is zero-mean $R$ sub-gaussian noise. Let $a_t^{\star} \in \arg\max_{a \in A_t} \langle a, \theta^{\star} \rangle$ be the optimal action at round $t$. The cumulative regret over $T$ rounds is
\vspace{-1.0ex}
\begin{align*}
\mathrm{Reg_{LB}}(T)
:=
\sum_{t=1}^{T} \left( \langle a_t^{\star}, \theta^{\star} \rangle - \langle a_t, \theta^{\star} \rangle \right).
\end{align*}
\textbf{Replicability.} A bandit algorithm is replicable if, when it is executed twice with the same internal random seed but independent reward realizations, the two runs produce exactly the same sequence of chosen arms with high probability. We define it more formally as follows.

\begin{tcolorbox}[
    colback=white,
    colframe=black,
    arc=2mm,
    boxrule=1pt,
    left=1mm,    % space on the left
    right=1mm,   % space on the right
    top=3mm,     % space on the top
    bottom=3mm   % space on the bottom
]
\begin{definition}[\textbf{Replicable Bandit}]
\label{def:replicable_bandit}
    Let $\rho \in [0,1]$ and consider two executions of the same algorithm on the same bandit instance over $T$ rounds, using the same realization of the algorithm's internal randomness, but with independent reward realizations from the environment. Let $(a_1^{(1)}, \dots, a_T^{(1)})$ and $(a_1^{(2)}, \dots, a_T^{(2)})$ denote the resulting action sequences. We say that the algorithm is $\rho$-replicable if
    \vspace{-1ex}
    \begin{align*}
        \mathbb{P}\left( (a_1^{(1)}, \dots, a_T^{(1)}) = (a_1^{(2)}, \dots, a_T^{(2)}) \right) \ge 1 - \rho.
    \end{align*}
\end{definition}
\end{tcolorbox}
% That is, with probability at least $1-\rho$, the two executions produce exactly the same sequence of actions. 

\section{Replicable Multi Armed Bandit}
\label{sec:repMAB}
Our MAB algorithm is an optimistic batched procedure that modifies the standard UCB rule so that the resulting action sequence is stable across repeated executions. The main challenge is that directly recomputing empirical means after every new reward can cause two runs with independent reward realizations to drift apart quickly, even when the same internal randomness is used. To control this sensitivity, we replace ordinary empirical means with the replicable estimator $\mathrm{RepMean}$ \citep{impagliazzo2022reproducibility}. We use $\mathrm{RepMean}$ as an oracle for mean estimation. Concretely, for any target accuracy $\tau$, failure probability $\delta$, and replicability parameter $\rho$, the estimator returns a mean estimate from a finite sample set, while guaranteeing both statistical accuracy and replicability. This is formalized in the following Proposition.

\begin{tcolorbox}[
    colback=white,
    colframe=black,
    arc=2mm,
    boxrule=1pt,
    left=1mm,    % space on the left
    right=1mm,   % space on the right
    top=2mm,     % space on the top
    bottom=2mm   % space on the bottom
]
\begin{proposition}[\textbf{Replicable Mean Estimation}]
\label{prop:repmean}
Let $\tau,\delta_1,\rho_1 \in (0,1)$. There exists a $\rho_1$-replicable mean estimation algorithm $\mathrm{RepMean}$ such that, given samples from a distribution with mean $\mu$, it draws at most $\frac{C_{\mathrm{ME}} \log(1/\delta_1)}{\tau^2 (\rho_1-\delta_1)^2}$ samples and returns an estimate $\widehat{\mu}$ satisfying $|\widehat{\mu} - \mu| \le \tau$
with probability at least $1-\delta_1$.
\end{proposition}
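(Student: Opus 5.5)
The plan is to use the randomized rounding construction of \citet{impagliazzo2022reproducibility}. First compute a non-replicable but accurate estimate $\bar\mu$ (empirical mean, or median-of-means if only sub-gaussian or bounded-variance tails are available). Then round $\bar\mu$ to a randomly shifted grid whose randomness comes from the shared internal seed. Concretely, fix a grid width $\alpha$ and draw a uniform offset $u\in[0,\alpha)$ from the shared randomness. The output is the grid point $\widehat\mu\in\{u+k\alpha : k\in\mathbb{Z}\}$ closest to $\bar\mu$. Accuracy and replicability are then handled separately, and the two are combined by choosing $\alpha$ and the sample size $n$.

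\textbf{Accuracy.} Choose $n$ so that $|\bar\mu-\mu|\le \varepsilon$ with probability at least $1-\delta_1$. For sub-gaussian samples, Hoeffding gives $n\asymp \log(1/\delta_1)/\varepsilon^2$. Rounding moves the estimate by at most $\alpha/2$, so $|\widehat\mu-\mu|\le \varepsilon+\alpha/2$. Taking $\alpha=\tau$ and $\varepsilon\le \tau/2$ yields $|\widehat\mu-\mu|\le\tau$ on the good event.

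\textbf{Replicability.} Consider two runs with independent samples and the same offset $u$. On the intersection of the two good events, which fails with probability at most $2\delta_1$, both $\bar\mu^{(1)}$ and $\bar\mu^{(2)}$ lie in $[\mu-\varepsilon,\mu+\varepsilon]$. The outputs differ only if a rounding boundary $u+(k+\tfrac12)\alpha$ falls between them. Since $u$ is uniform and independent of the data, this happens with probability at most $2\varepsilon/\alpha$. Hence
\[
\Pr[\widehat\mu^{(1)}\neq\widehat\mu^{(2)}]\le 2\delta_1+2\varepsilon/\alpha .
\]
To make this at most $\rho_1$, set $\varepsilon = c\,\alpha(\rho_1-2\delta_1)$ for a small constant $c$. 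Then $n\asymp \log(1/\delta_1)/(\tau^2(\rho_1-2\delta_1)^2)$.

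\textbf{Main obstacle: matching the stated constants.} The statement has $(\rho_1-\delta_1)$ rather than $(\rho_1-2\delta_1)$. To get it, run the base estimator at confidence $\delta_1/2$ and absorb the factor into $\log(2/\delta_1)\le 2\log(1/\delta_1)$, using $\delta_1<1/2$ or enlarging $C_{\mathrm{ME}}$. The failure term then becomes $\delta_1$ and $n\le C_{\mathrm{ME}}\log(1/\delta_1)/(\tau^2(\rho_1-\delta_1)^2)$. This requires $\rho_1>\delta_1$; otherwise the bound is vacuous or infinite and nothing needs to be proved.

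A second subtlety is that the reward distribution may be unbounded sub-gaussian with parameter $R$. The concentration step then carries an $R^2$ factor, which is absorbed into $C_{\mathrm{ME}}$. Alternatively one can cite the proposition directly as the replicable statistical query result of \citet{impagliazzo2022reproducibility}, of which this is a restatement.
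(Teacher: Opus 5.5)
The paper does not prove this proposition; it imports it as a black box from \citet{impagliazzo2022reproducibility}, and the appendix restatement adds $[0,1]$-valued samples and the hypothesis $\delta_1<\rho_1$. Your shifted-grid rounding argument is the standard proof behind that citation, and it is correct. It is also exactly the one-dimensional version of the paper's own $\mathrm{RepRidge}$ analysis. Accuracy uses a rounding error of at most $\alpha/2$. Replicability uses the fact that a shared uniform shift separates two points with probability at most their distance divided by $\alpha$, together with a union over the two confidence events.

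The only real difference is the parametrization. $\mathrm{RepRidge}$ fixes the statistical radius $\beta_n(\delta)$ and sets $\alpha\propto\beta_n(\delta)\sqrt d/(\rho-2\delta)$, so replicability is paid for in accuracy (the $d/(\rho-2\delta)$ inflation). You instead fix the output accuracy via $\alpha=\tau$ and pay in sample size, by shrinking the statistical radius to $\varepsilon\asymp\tau(\rho_1-\delta_1)$. That is precisely what produces the $(\rho_1-\delta_1)^{-2}$ factor, and since $\mathrm{RepMean}$ is specified by a target accuracy, yours is the natural choice here.

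Citing buys brevity. Your route makes the oracle self-contained and shows where $\delta_1<\rho_1$ and the tail assumption enter. The main-text statement omits the former, and the MAB section assumes $R$-sub-Gaussian rewards while the appendix assumes $[0,1]$ rewards. Your remark that $C_{\mathrm{ME}}$ then depends on $R$ is the right way to reconcile these.

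Two side remarks need fixing.

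\begin{itemize}
\item If $\rho_1<\delta_1$, then $(\rho_1-\delta_1)^2>0$ and the sample bound is finite, so that case is not vacuous, and your argument does not cover it. For $\rho_1\ll\delta_1$ the claimed budget is of order $\tau^{-2}\log(1/\delta_1)$, far below the $\tau^{-2}\rho_1^{-2}$ scale one expects $\rho_1$-replicability to require. The correct move is to add the hypothesis $\delta_1<\rho_1$, as the appendix restatement does, rather than to declare the case trivial.
\item Enlarging $C_{\mathrm{ME}}$ does not absorb $\log(2/\delta_1)$ into $\log(1/\delta_1)$ uniformly as $\delta_1\to1$, because $\log(1/\delta_1)\to 0$. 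The regime $\delta_1\ge 1/2$ is easy to handle directly.
  \begin{itemize}
  \item Since $\log(1/\delta_1)\ge 1-\delta_1$ and $0<\rho_1-\delta_1<1-\delta_1\le 1/2$, the budget is at least $C_{\mathrm{ME}}/(\tau^2(1-\delta_1))\ge 2C_{\mathrm{ME}}/\tau^2$.
  \item A constant-confidence run of your construction (base confidence $1/8$, $\varepsilon=\tau/16$, $\alpha=\tau$) uses $O(\tau^{-2})$ samples.
  \item It is accurate with probability at least $7/8\ge 1-\delta_1$.
  \item Its mismatch probability is at most $1/4+1/8=3/8<1/2\le\delta_1<\rho_1$.
  \item This fits within the budget for a sufficiently large absolute constant $C_{\mathrm{ME}}$.
  \end{itemize}
\end{itemize}

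With these two adjustments your proof is complete.
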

\end{tcolorbox}
% \ugcomment{I don't understand why we have $\delta,\rho, \delta_1, \rho_1$ ?} \rdcomment{$\rho$ is the parameter of the MAB algorithm and $\delta_1, \rho_1$ are the parameters of Replicable Mean estimation}

In particular, for each arm $a \in [K]$, our algorithm \hyperref[alg:replicable-ucb]{$\RepUCB$} (Algorithm~\ref{alg:replicable-ucb}) maintains a sample set $\mathcal{S}_a$, a pull count $N_a$, a replicable mean estimate $\widehat{\mu}_a$, a confidence radius $r_a$, and an optimistic value $U_a = \widehat{\mu}_a + 2r_a$. It begins with a round-robin initialization in Lines~\ref{line:repubc_init_start}--\ref{line:repubc_init_end}, where each arm is pulled once and an initial estimate and index are computed. After this initial phase of round-robin, the learner selects an arm using the optimistic rule in Line~\ref{line:repubc_select}, i.e., $a_t \in \arg\max_{a \in [K]} U_a$ , and sets the batch length to its current sample count in Line~\ref{line:repubc_batch} (i.e., the number of times the action is played is doubled). This selected arm is then played for the entire resulting batch, and the corresponding reward is observed and the sample set of $a_t$ is updated (see Line~\ref{line:batch_play}--\ref{line:batch_sample_set}). At the end of the current batch, the statistics of only the played arm is updated: its replicable mean estimate is recomputed in Line~\ref{line:repubc_update_mean}, and its radius and optimistic value are recomputed in Line~\ref{line:repubc_update_index}. Thereafter the learner moves to the next time step and repeats this process.

\begin{algorithm}[!t]
\caption{Replicable Upper Confidence Bound ($\RepUCB$)}
\label{alg:replicable-ucb}
\begin{algorithmic}[1]
\REQUIRE Number of arms $K$, horizon $T$, RepMean parameters $(\delta_1,\rho_1)$, radius $\tau(n) := \sqrt{\frac{C_{\mathrm{ME}}\log(1/\delta_1)}{n(\rho_1-\delta_1)^2}}$
% \vspace{-2ex}
\STATE Initialize empty sample sets $\mathcal{S}_a \leftarrow \emptyset$ and counts $N_a \leftarrow 0$ for all $a\in[K]$
% \vspace{0.25em}
\FOR{$a=1,2,\dots,K$} 
    \STATE Pull arm $a$ once; observe reward $r_1(a)$ and set $\mathcal{S}_a \leftarrow \mathcal{S}_a \cup \{r_1(a)\}$ \algc{\color{Green} Round-robin initialization} \label{line:repubc_init_start}
    \STATE $N_a \leftarrow 1$, $r_a \leftarrow \tau(1)$
    \STATE $\widehat{\mu}_a \leftarrow \mathrm{RepMean}(\delta_1,\tau(1),\rho_1)\ \text{run on }\mathcal{S}_a$
    \STATE $U_a \leftarrow \widehat{\mu}_a + 2r_a$ \label{line:repubc_init_end}
\ENDFOR
\STATE $t \leftarrow K+1$
% \vspace{0.25em}
\WHILE{$t \leq T$}
    \STATE Choose $a_t \in \arg\max_{a\in[K]} U_a$ (break ties by smallest index) \algc{\color{Green} Optimistic arm selection} \label{line:repubc_select}
    \STATE $B \leftarrow \min\{N_{a_t},\, T-t+1\}$ \algc{\color{Green} Batch length equals current sample count} \label{line:repubc_batch}
    \FOR{$j=1,2,\dots,B$}
        \STATE Play arm $a_t$; observe reward $r_{t}(a_t)$ \algc{\color{Green} Repeat selected arm for $B$ rounds} \label{line:batch_play}
        \STATE $\mathcal{S}_{a_t} \leftarrow \mathcal{S}_{a_t} \cup \{r_{t}(a_t)\}$ \label{line:batch_sample_set}
    \ENDFOR
    \STATE $N_{a_t} \leftarrow N_{a_t} + B$ \algc{\color{Green} Update sample count}
    \STATE $t \leftarrow t + B$
    \STATE $\widehat{\mu}_{a_t} \leftarrow \mathrm{RepMean}(\delta_1,\tau(N_{a_t}),\rho_1)\ \text{run on }\mathcal{S}_{a_t}$ \algc{\color{Green} Recompute only for selected arm} \label{line:repubc_update_mean}
    \STATE $r_{a_t} \leftarrow \tau(N_{a_t})$, \, $U_{a_t} \leftarrow \widehat{\mu}_{a_t} + 2r_{a_t}$\algc{\color{Green} Updated radius and optimistic estimate after batch} \label{line:repubc_update_index}
\ENDWHILE
\end{algorithmic}
\end{algorithm}

Since each arm count can at most double $\lceil \log_2 T \rceil$ times after initialization, the total number of $\mathrm{RepMean}$ calls is at most $M := K \bigl(1+\lceil \log_2 T \rceil\bigr)$. 
% \ugcomment{Took me a minute to confirm the logic that batch size for each action doubles each time, maybe worth clarifying this intermediate logical step? Maybe it's fine tho} \rdcomment{Good point. I will explicitly say that.} 
We allocate the total replicability budget $\rho$ of the bandit algorithm across the estimator calls by setting $\rho_1 := \frac{\rho}{M}$, $\delta_1 := \frac{1}{2MT}$. For each sample size $n \ge 1$, we define the radius schedule $\tau(n) := \sqrt{\frac{C_{\mathrm{ME}}\log(1/\delta_1)}{n(\rho_1-\delta_1)^2}}$. This is exactly the accuracy level for which the sample requirement of $\mathrm{RepMean}$ matches a sample size of $n$ in Proposition~\ref{prop:repmean}. 

The following theorem gives the main guarantee for \hyperref[alg:replicable-ucb]{$\RepUCB$} (Algorithm~\ref{alg:replicable-ucb}). It shows that the combination of batched optimistic arm selection and replicable mean estimation yields a bandit algorithm that is both $\rho$-replicable and has low regret. The complete proof is given in Appendix~\ref{sec:mab-app}.

\begin{theoremBox}[Replicability and Regret of $\RepUCB$ (Algorithm~\ref{alg:replicable-ucb})]
\label{thm:repubc-main}
Consider \hyperref[alg:replicable-ucb]{$\RepUCB$} (Algorithm~\ref{alg:replicable-ucb}) and let $M := K \bigl(1+\lceil \log_2 T \rceil\bigr)$, $\rho_1 := \frac{\rho}{M}$, and $\delta_1 := \frac{1}{2MT}$, and assume $\delta_1 \le \rho_1/2$. Define $\tau(n) := \sqrt{\frac{C_{\mathrm{ME}}\log(1/\delta_1)}{n(\rho_1-\delta_1)^2}}$ for all $n \ge 1$.
Then \hyperref[alg:replicable-ucb]{$\RepUCB$} (Algorithm~\ref{alg:replicable-ucb}) is $\rho$-replicable, and there exists a constant $C > 0$ such that the the regret is bounded as follows
\begin{align*}
\mathbb{E}[\mathrm{Reg_{MAB}}(T)]
\le
C\cdot \frac{K^2(1+\lceil \log_2 T \rceil)^2}{\rho^2}
\sum_{a:\Delta_a>0}
\left(
\Delta_a + \frac{\log \bigl(2KT(1+\lceil \log_2 T \rceil)\bigr)}{\Delta_a}
\right)
+\frac{1}{2}.
\end{align*}
\end{theoremBox}
\vspace{2ex}
\begin{remark}[\textbf{Comparison}]
    The regret bound attained by \hyperref[alg:replicable-ucb]{$\RepUCB$} (Algorithm~\ref{alg:replicable-ucb}) in Theorem~\ref{thm:repubc-main} is same up to logarithmic factors as the best bound obtained by an elimination type algorithm for multi armed bandits (cf. \citet[Theorem 4]{esfandiari2023replicable}). \citet{komiyama2024replicability} obtain a regret bound using a phased elimination procedure. Compared with Theorem~\ref{thm:repubc-main}, they separate the $\log T$ from the $\rho$ and $K$ dependent term and achieve a regret of $\mathcal{O}\left(\sum_{i=2}^{K} \frac{1}{\Delta_i}\left(\log T + \frac{K^2 \log(K(\log T)/\rho)}{\rho^2}\right)\right)$. It remains open whether a similar bound can also be proved for our batched UCB-style algorithm.
    
\end{remark}

\section{Replicable Ridge Regression}
\label{sec:repRidge}
Before introducing our algorithm for linear bandits, we first develop a replicable ridge regression estimator, which serves as the core estimation primitive underlying our approach. In the next section, we invoke this estimator at the start of a batch to obtain parameter estimates that are both statistically accurate and replicable across repeated executions with shared internal randomness. Beyond its role in our linear bandit algorithm, this estimator, along with our guarantees may also be of independent interest in statistical learning where confidence guarantees and replicability must both be satisfied.

We consider a fixed-design linear regression problem. Let $d,n \ge 1$, and $\lambda > 0$. Given inputs $x_1,\dots,x_n \in \mathbb{R}^d$, the response satisfies the following assumption.
% \vspace{-2ex}
\begin{assumption}
\label{asmp:ridge}
There is an unknown parameter $\theta^{\star}\in\mathbb{R}^d$ with $\|\theta^{\star}\|_2 \le S$. Observations follow the linear model
$y_i = x_i^\top \theta^{\star} + \varepsilon_i, i\in [n]$, where $(\varepsilon_i)$ is conditionally $\sigma$-sub-Gaussian with respect to the filtration $(\mathcal{F}_i)$ generated by $\{x_1,\dots,x_i,y_1,\dots,y_i\}$.
% , i.e.\ for every $i$ and $\gamma\in\mathbb{R}$,$\mathbb{E}\!\left[\exp(\gamma \varepsilon_i)\mid \mathcal{F}_{i-1}\right] \le \exp\!\left(\frac{\sigma^2\gamma^2}{2}\right)$.
\end{assumption}

The objective is to estimate the regression parameter $\theta^*$ using input-output samples $(x_i,y_i)_{i=1}^{n}$ satisfying Assumption~\ref{asmp:ridge}. The ridge estimator and its confidence radius for $\delta \in (0,1)$ are given by
\begin{align*}
\widehat{\theta}_n := V_n^{-1}\sum_{i=1}^n x_i y_i, \quad
\beta_n(\delta)
:=
\sigma\sqrt{2\log\!\left(\frac{\det(V_n)^{1/2}}{\det(\lambda I)^{1/2}\,\delta}\right)}
+\sqrt{\lambda}\,S,
\end{align*}
where $V_n := \lambda I + \sum_{i=1}^n x_i x_i^\top$, and standard concentration results guarantee that $\|\theta_* - \hat{\theta}_n\|_{V_n} \leq \beta_n(\delta)$ with probability at least $1 - \delta$ (see \citep{abbasi2011improved}).

\begin{algorithm}[!b]
\caption{Replicable Ridge Regression via Randomized Grid Rounding ($\RepRidge$)}
\label{alg:replicable_ridge}
\begin{algorithmic}[1]
\REQUIRE Regularization $\lambda>0$, confidence $\delta\in(0,1)$, target replicability $\rho\in(2\delta,1)$, data $\{(x_i,y_i)\}_{i=1}^n$
\STATE $V_n \gets \lambda I + \sum_{i=1}^n x_i x_i^\top$
\STATE $\widehat{\theta}_n \gets V_n^{-1}\sum_{i=1}^n x_i y_i$ \algc{\color{Green} Ridge Regression Estimate}
\STATE $\beta_n(\delta) \gets \sigma\sqrt{2\log\!\left(\frac{\det(V_n)^{1/2}}{\det(\lambda I)^{1/2}\,\delta}\right)} + \sqrt{\lambda}\,S$ \algc{\color{Green} Ridge Confidence Radius}
\STATE $z_n \gets V_n^{1/2}\widehat{\theta}_n$ \algc{\color{Green} Whitening} \label{line:repridge_whiten}
\STATE Set $\alpha \gets \frac{2\,\beta_n(\delta)\,\sqrt{d}}{\rho-2\delta}$ and sample $u \sim \mathrm{Unif}([0,\alpha)^d)$ \algc{\color{Green} Grid width chosen to guarantee $\rho$-replicability}
\STATE Define $Q_{\alpha,u}:\mathbb{R}^d\to\mathbb{R}^d$ coordinatewise by \algc{\color{Green} Randomized Grid Rounding} \label{line:repridge_round}
\[
\bigl(Q_{\alpha,u}(z)\bigr)_j
=
\alpha\Bigl\lfloor \frac{z_j-u_j}{\alpha}\Bigr\rfloor + u_j + \frac{\alpha}{2},
\qquad j=1,\dots,d
\] 
\STATE $\widetilde{\theta}_n \gets V_n^{-1/2}\,Q_{\alpha,u}(z_n)$ \algc{\color{Green} Map back to parameter space} \label{line:repridge_unwhiten}
\RETURN $\widetilde{\theta}_n$ 
\end{algorithmic}
\end{algorithm}

To obtain a replicable estimator, we modify the ridge estimate only after first passing to the geometry induced by the confidence ellipsoid. The key observation is that the natural error metric for ridge regression is the $V_n$-norm, so it is convenient to work in the whitened coordinates
$z_n := V_n^{1/2}\widehat{\theta}_n$, where the standard confidence ellipsoid for $\widehat{\theta}_n$ becomes an Euclidean ball. 
% In this transformed space, two independent runs whose ridge estimates are close in $V_n$-norm become close in ordinary Euclidean distance, which makes it possible to control the probability that they are separated by the rounding procedure.
Our construction in \hyperref[alg:replicable_ridge]{$\RepRidge$} therefore applies randomized midpoint rounding in the whitened space. Specifically, Algorithm~\ref{alg:replicable_ridge} first computes the ridge estimator $\widehat{\theta}_n$ and its confidence radius $\beta_n(\delta)$, then maps $\widehat{\theta}_n$ to the whitened variable $z_n = V_n^{1/2}\widehat{\theta}_n$ in Line~\ref{line:repridge_whiten}. It then rounds $z_n$ using the shared randomly shifted grid $Q_{\alpha,u}$ in Line~\ref{line:repridge_round}, where the grid width $\alpha$ is chosen as a function of $\beta_n(\delta)$, $d$, and the target replicability level $\rho$. Finally, the rounded point is mapped back to parameter space in Line~\ref{line:repridge_unwhiten} to produce the replicable estimated ridge parameter $\widetilde{\theta}_n$. 

This construction is designed so that if two executions yield whitened estimates that are sufficiently close, then with high probability they fall into the same shifted grid cell and hence return the same rounded estimator, while the rounding error remains small enough to preserve a useful confidence guarantee. The following theorem establishes these two main properties and a complete proof is given in Appendix~\ref{sec:reRidge-app}. Self-normalized ridge regression guarantees are widely used beyond bandits, for example, in system identification of linear systems \citep{simchowitz2019learning, mania2020active}, and statistical estimation \citep{whitehouse2023time}. This result states that, in the replicable setting, confidence radius inflates by a $d/\rho$ factor. Hence, we expect this result would serve as a valuable building block in constructing other novel replicable estimation procedures. %(In fact, we give an application next.)

% First, it shows that despite the randomized rounding step, the returned estimator remains close to the true parameter in the $V_n$-norm, with only a controlled inflation of the standard ridge confidence radius. Second, it shows that by choosing the grid width $\alpha$ appropriately as a function of the confidence radius and the target replicability level $\rho$, the estimator is $\rho$-replicable across repeated executions with shared internal randomness. 

\begin{theoremBox}[Confidence Ellipsoid and Replicability of $\RepRidge$ (Algorithm~\ref{alg:replicable_ridge})]
\label{thm:replicable_ridge}
Suppose Assumption~\ref{asmp:ridge} holds, fix $\delta\in(0,1)$ and $\rho\in(3\delta,1)$, and let $\widetilde{\theta}_n$ be the output of $\RepRidge$ (Algorithm~\ref{alg:replicable_ridge}) for fixed $\{(x_i,y_i)\}_{i=1}^n$. Then with probability at least $1-\delta$,
\vspace{-2ex}
\begin{align*}
\|\widetilde{\theta}_n-\theta^{\star}\|_{V_n}
\le
\beta_n(\delta)\left(1+\frac{d}{\rho-2\delta}\right).
\end{align*}
Moreover, the estimator is $\rho$-replicable: if $\RepRidge$ (Algorithm~\ref{alg:replicable_ridge}) is run twice on the same covariate sequence with the same shared random shift $u$ but with independent response noise, producing outputs $\widetilde{\theta}_n^{(1)}$ and $\widetilde{\theta}_n^{(2)}$, then $\mathbb{P}\!\left(\widetilde{\theta}_n^{(1)}=\widetilde{\theta}_n^{(2)}\right)\ge 1-\rho$.
\end{theoremBox}

% \vspace{2ex}
% \begin{remark}[\textbf{Comparison with the Replicable Least Squares Estimator}]
% Our approach differs substantially from the Replicable Least Squares Estimator of \citet{esfandiari2023replicable}. Their construction estimates the value of each action in a finite core set using a reproducible scalar mean-estimation subroutine, and then combines these arm-wise estimates to form a least-squares parameter estimate. The analysis therefore proceeds through a reduction to multiple one-dimensional reproducible estimation problems, followed by a reconstruction step that yields a uniform prediction guarantee over the action set. In contrast, our estimator operates directly on the full ridge regression solution. We first pass to the whitened coordinates $V_n^{1/2}\hat{\theta}_n$, apply randomized grid rounding once at the vector level, and then map the rounded point back to parameter space. Replicability is obtained by showing that two nearby whitened ridge estimates fall in the same shifted grid cell with high probability, while accuracy follows by combining the ridge confidence bound with a deterministic rounding error bound. As a result, our analysis yields a direct $V_n$-norm confidence ellipsoid, without requiring a core set, separate scalar reproducible estimators, or a reconstruction argument. Further we show using the following proposition that the 
% \end{remark}
\vspace{1ex}
\subsection{Comparison with the Replicable Least Squares Estimator}
Our approach differs substantially from the Replicable Least Squares Estimator (Lemma 9 of \citet{esfandiari2023replicable}). Their construction estimates the value of each action in a finite core set using a reproducible scalar mean-estimation subroutine, and then combines these arm-wise estimates to form a least-squares parameter estimate. The analysis therefore proceeds through a reduction to multiple one-dimensional reproducible estimation problems, followed by a reconstruction step that yields a uniform prediction guarantee over the action set. 

In contrast, our estimator operates directly on the full ridge regression solution. We first pass to the whitened coordinates $V_n^{1/2}\hat{\theta}_n$, apply randomized grid rounding once at the vector level, and then map the rounded point back to parameter space. Replicability is obtained by showing that two nearby whitened ridge estimates fall in the same shifted grid cell with high probability, while accuracy follows by combining the ridge confidence bound with a deterministic rounding error bound. As a result, our analysis yields a direct $V_n$-norm confidence ellipsoid, without requiring a core-set. 

As an application, under the same core-set coverage assumption (c.f. Assumption~\ref{ass:coreset-design}) as in \citet{esfandiari2023replicable}, our ellipsoidal guarantee can be converted into a uniform prediction guarantee over the full action set (c.f. Proposition~\ref{prop:error_bound}). The resulting estimator attains the same uniform error bound as their Replicable Least Squares Estimator \citep[Lemma 9]{esfandiari2023replicable}, but with sample complexity better by a factor of $d$ (see Appendix~\ref{sec:coreset-app} for proof and a comparison).
\vspace{1ex}
\begin{assumption}[\textbf{Core-set design}]
\label{ass:coreset-design}
Let $\mathcal A \subseteq \mathbb R^d$ be an action set with $\|a\|_2 \le 1$ for all $a \in \mathcal A$. The responses follow $Y(a) = \langle a,\theta^{\star}\rangle + \eta$, where $\|\theta^{\star}\|_2 \le 1$ and the noise variables are independent, mean-zero, and $1$-sub-Gaussian. There exists a deterministic $4$-approximate $G$-optimal design $\pi$ supported on a finite core set $C \subseteq \mathcal A$ with $m = |C| = \cO(d\log\log d)$ such that
\begin{align*}
    V(\pi)
    =
    \sum_{a\in C}\pi(a)aa^\top
    \succ 0,
    \qquad
    \sup_{a\in\mathcal A}
    \|a\|_{V(\pi)^{-1}}^2
    \le
    4d,
\end{align*}
\end{assumption}

\begin{proposition}
\label{prop:error_bound}
    Suppose Assumption~\ref{ass:coreset-design} holds. Then there exists an algorithm that uses
    $M=\widetilde{O}\left(
    \frac{
    d^3(d+\log(1/\delta))
    }{
    \rho^2\epsilon^2
    }
    \right)$ total samples and outputs a $\rho$ replicable estimator $\widetilde\theta\in\mathbb R^d$ satisfying
    $$\sup_{a\in\mathcal A}
    |\langle a,\widetilde\theta-\theta^{\star}\rangle|
    \le
    \epsilon,
    \quad \text{with probability at least } 1-\delta.$$
\end{proposition}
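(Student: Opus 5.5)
The plan is to obtain Proposition~\ref{prop:error_bound} from Theorem~\ref{thm:replicable_ridge}, applied to a deterministic, non-adaptive design built from the core set. Fix a per-action count: action $a\in C$ is sampled $n_a=\lceil N\pi(a)\rceil$ times, so the total is $M\le N+m$ samples. Because this design does not depend on any responses, the covariate sequence is identical in both executions and only the noise differs. That is exactly the regime of Theorem~\ref{thm:replicable_ridge}, with $\sigma=1$ and $S=1$. We run $\RepRidge$ with confidence $\delta$ and replicability $\rho$ (assuming $\rho>3\delta$, else shrink $\delta$). Replicability of the output $\widetilde\theta$ is then immediate from the theorem. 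The deterministic rounding used to define the design counts adds no randomness.

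For accuracy, on the event of probability at least $1-\delta$ the theorem gives $\|\widetilde\theta-\theta^\star\|_{V_n}\le \beta_n(\delta)(1+\frac{d}{\rho-2\delta})$. By Cauchy--Schwarz, for every $a\in\mathcal A$,
\begin{align*}
|\langle a,\widetilde\theta-\theta^\star\rangle|\le \|a\|_{V_n^{-1}}\,\|\widetilde\theta-\theta^\star\|_{V_n}.
\end{align*}
Since $V_n\succeq \sum_{a\in C} n_a aa^\top\succeq N V(\pi)$, we get $V_n^{-1}\preceq (NV(\pi))^{-1}$. Assumption~\ref{ass:coreset-design} then gives $\sup_a\|a\|^2_{V_n^{-1}}\le 4d/N$. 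Note that the bound holds uniformly over the whole (possibly infinite) set $\mathcal A$, because it is a deterministic consequence of a single ellipsoid event; no union bound over actions is needed.

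It remains to bound $\beta_n(\delta)$. Use $\log\det V_n\le d\log(\lambda+M/d)$ (trace bound with $\|x_i\|\le 1$), and choose $\lambda=1$. This gives
\begin{align*}
\beta_n(\delta)=O\Bigl(\sqrt{d\log(1+M/d)+\log(1/\delta)}\Bigr).
\end{align*}
Hence the uniform error is at most
\begin{align*}
\sqrt{\frac{4d}{N}}\cdot O\Bigl(\sqrt{d\log M+\log(1/\delta)}\Bigr)\cdot\frac{d}{\rho}
\end{align*}
(for $\rho-2\delta\gtrsim\rho$). Setting this equal to $\epsilon$ and solving gives
\begin{align*}
N=\widetilde O\Bigl(\frac{d^3(d+\log(1/\delta))}{\rho^2\epsilon^2}\Bigr).
\end{align*}
The $\log M$ term is implicit in $N$, but is handled by the usual $\widetilde O$ argument. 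The extra $m=O(d\log\log d)$ samples from rounding up are lower order.

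The main obstacle I expect is bookkeeping rather than a conceptual step. One issue is the condition $\rho>3\delta$ and the absorption of $\rho-2\delta$ into $\rho$. The other is checking that Theorem~\ref{thm:replicable_ridge}'s fixed-design assumption (independent noise, known $\sigma$, $S$) matches Assumption~\ref{ass:coreset-design} exactly. In particular, the shared randomness is only the shift $u$, so the design itself must be deterministic, which is why $\pi$ is taken deterministic. Finally, for the comparison claim, I would note that Lemma 9 of \citet{esfandiari2023replicable} carries one more factor of $d$, coming from per-action replicable mean estimation with budget $\rho/m$.
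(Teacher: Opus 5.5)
Your proposal is correct and follows the paper's proof essentially step for step. The shared steps are: the deterministic design $n_a=\lceil N\pi(a)\rceil$, which gives $V_N\succeq NV(\pi)$ and hence $\sup_{a}\|a\|_{V_N^{-1}}^2\le 4d/N$; a single call to $\mathtt{RepRidge}$ with $\lambda=\sigma=S=1$; Cauchy--Schwarz against the one ellipsoid event; the trace--determinant bound on $\log\det V_N$; and $M\le N+|C|$. Your ``shrink $\delta$'' step is exactly the paper's choice $\delta_0=\min\{\delta,\rho/4\}$, which guarantees $\rho-2\delta_0\ge\rho/2$ and only adds a $\log(1/\rho)$ term that is absorbed in the $\widetilde{O}$.
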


\section{Replicable Linear Bandit}
\label{sec:repLinear}
In the linear bandit setting, our objective is to design an optimistic algorithm whose action sequence remains stable across repeated executions, while retaining the statistical efficiency of standard linear UCB methods. We assume that the rewards $r_t$ satisfy the following assumption.
 
\begin{assumption}
\label{asmp:linear_bandit_rewards}
There is an unknown parameter $\theta^{\star}\in\mathbb{R}^d$ with $\|\theta^{\star}\|_2 \le S$. At each round $t \in [T]$, after the learner selects an action $a_t \in A_t \subset \mathbb{R}^d$, it observes a reward
$r_t = a_t^\top \theta^{\star} + \varepsilon_t,$
where $(\varepsilon_t)$ is conditionally $\sigma$-sub-Gaussian with respect to the filtration $(\mathcal{F}_t)$ generated by $\{A_1,a_1,r_1,\dots,A_t,a_t,r_t\}$, i.e.\ for every $t$ and $\gamma\in\mathbb{R}$,
$\mathbb{E}\!\left[\exp(\gamma \varepsilon_t)\mid \mathcal{F}_{t-1},A_t,a_t\right] \le \exp\!\left(\frac{\sigma^2\gamma^2}{2}\right)$.
\end{assumption}

The main difficulty of establishing replicability of a LinUCB algorithm \citep{pmlr-v15-chu11a,abbasi2011improved} is that
recomputing the ridge estimate after every observation makes the action highly sensitive to the realized reward, causing two runs with independent rewards to diverge quickly even when they use the same internal algorithmic randomness. To address this, we combine determinant-triggered batching \citep{batch_neural} with the replicable ridge estimator \hyperref[alg:replicable_ridge]{$\RepRidge$}. Specifically, the policy is updated only at batch starts determined by a determinant-growth rule, and at each such update we replace the usual ridge estimate by the replicable estimate returned by \hyperref[alg:replicable_ridge]{$\RepRidge$}. The resulting procedure is therefore a batched optimistic linear bandit algorithm that uses a single replicable confidence estimate at the start of each batch and reuses it throughout the batch.

\begin{algorithm}[!t]
\caption{Replicable Batched Linear UCB ($\RepLinUCB$)}
\label{alg:replicable-linear-ucb}
\begin{algorithmic}[1]
\REQUIRE Regularization $\lambda>0$, horizon $T$, batch budget $B\in\mathbb{N}$, determinant growth factor $q>1$, Per-batch budgets $(\delta_b,\rho_b)_{b=0}^{B-1}$ with $\delta_b\in(0,1)$, $\rho_b\in(0,1)$, and $\rho_b>2\delta_b$
\STATE Initialize $V_1\gets \lambda I_d$, $b_1\gets 0$, $b\gets 0$ and batch-start time $t_b\gets 1$
\FOR{$t=1$ to $T$}
    \STATE Observe compact action set $A_t\subset\mathbb{R}^d$
    \IF{$t=t_b$}
        \STATE $\widetilde{\theta}_b \gets {\RepRidge} \Big(\lambda,\delta_b,\rho_b;\{(a_s,r_s)\}_{s=1}^{t_b-1}\Big)$ \algc{\color{Green} Replicable batch parameter} \label{line:replinucb_repridge}
        \STATE $\beta_b\gets \beta_{t_b}(\delta_b)$ \algc{\color{Green} Ridge confidence radius at batch start} \label{line:replinucb_ridge}
        \STATE $\widetilde{\beta}_b \gets \beta_b\left(1+\frac{d}{\rho_b-2\delta_b}\right)$ \algc{\color{Green} Inflated batch confidence radius}
    \ENDIF
    \STATE Choose
    $a_t\in\arg\max_{a\in A_t}\Bigl\{\langle a,\widetilde{\theta}_b\rangle+\widetilde{\beta}_b\|a\|_{V_{t_b}^{-1}}\Bigr\}$ (break ties with fixed rule) \algc{\color{Green} Batch UCB rule}
    \label{line:replinucb_action}
    \STATE Observe $r_t$ and update $V_{t+1}\gets V_t+a_ta_t^\top$ and $b_{t+1}\gets b_t+a_tr_t$ \algc{\color{Green} Update statistics} \label{line:replinucb_update}
    \IF{$b\le B-2$ \AND $\det(V_{t+1})>q\,\det(V_{t_b})$}
        \STATE Start a new batch: set $b\gets b+1$ and $t_b\gets t+1$ \algc{\color{Green} Determinant trigger} \label{line:replinucb_trigger}
    \ENDIF
\ENDFOR
\end{algorithmic}
\end{algorithm}

More formally, Algorithm~\ref{alg:replicable-linear-ucb} implements the preceding idea by updating the policy only at determinant-triggered batch starts. Whenever a new batch begins, the algorithm invokes the replicable ridge estimator $\RepRidge$ in Line~\ref{line:replinucb_repridge} to obtain the batch-start parameter estimate $\widetilde{\theta}_b$, computes the ridge confidence radius $\beta_b$ in Line~\ref{line:replinucb_ridge}, and forms the corresponding inflated radius $\widetilde{\beta}_b$. Throughout the batch, the same pair $(\widetilde{\theta}_b,\widetilde{\beta}_b)$ is reused, and actions are selected by maximizing the batch-start UCB score over the current action set in Line~\ref{line:replinucb_action}. After observing the reward, the sufficient statistics are updated in Line~\ref{line:replinucb_update}, but the policy changes only when the determinant trigger in Line~\ref{line:replinucb_trigger} fires, at which point the next batch begins.

The next theorem gives the main guarantee for \hyperref[alg:replicable-linear-ucb]{$\RepLinUCB$} (Algorithm~\ref{alg:replicable-linear-ucb}). It shows that combining determinant-triggered batching with the replicable ridge estimator \hyperref[alg:replicable_ridge]{$\RepRidge$} yields a linear bandit algorithm that is both $\rho$-replicable and achieves a sub-linear regret.

\begin{theoremBox}[Replicability and Regret Bound for $\RepLinUCB$ (Algorithm~\ref{alg:replicable-linear-ucb})]
\label{thm:replinucb_main}
Suppose Assumption~\ref{asmp:linear_bandit_rewards} holds and the action sets $(A_{t})_{t=1}^{T}\,$ are chosen by an oblivious adversary. Further suppose \hyperref[alg:replicable-linear-ucb]{$\RepLinUCB$} (Algorithm~\ref{alg:replicable-linear-ucb}) is run with
$B\!=\!\big\lceil d\log\!\big(1+\frac{TL^2}{\lambda d}\big)\big\rceil,
q\!=\!\left(1+\frac{TL^2}{\lambda d}\right)^{\!\frac{d}{B}}\!\!,
\delta_b\!=\!\frac{\delta}{B},
\rho_b\!=\!\frac{\rho}{B}$,
for all $b\in\{0,\dots,B-1\}$. Then for $\rho \in (3\delta, 1)$, \hyperref[alg:replicable-linear-ucb]{$\RepLinUCB$} (Algorithm~\ref{alg:replicable-linear-ucb}) is $\rho$-replicable, i.e., if the algorithm is run twice on action sets $(A_{t})_{t=1}^{T}$ with shared internal randomness and i.i.d. reward sequences, producing action sequences $(a_t^{(1)})_{t=1}^T$ and $(a_t^{(2)})_{t=1}^T$, then
$\mathbb{P}\!\left(\forall t\in[T]:\ a_t^{(1)}=a_t^{(2)}\right)\ge 1-\rho$.
Moreover, there exists a constant $C>0$ such that, with probability at least $1-\delta$, the regret of  satisfies
\begin{align*}
\mathrm{Reg_{LB}}(T)
\le
C\Big(
\sigma\sqrt{d\log\!\big(1+\tfrac{TL^2}{\lambda d}\big)+\log\!\left(\tfrac{B}{\delta}\right)}
+\sqrt{\lambda}\,S
\Big)
\left(1+\frac{dB}{\rho}\right)
\sqrt{T\,d\log\!\left(1+\tfrac{TL^2}{\lambda d}\right)}.
\end{align*}
\end{theoremBox}

\vspace{2ex}
\begin{remark}[\textbf{Regret Improvement}]
With the choice $B=\left\lceil d\log\Big(1+\frac{TL^2}{\lambda d}\Big)\right\rceil$, and $\sigma$, $\lambda$, and $S$ treated as constants, Theorem~\ref{thm:replinucb_main} yields the final regret bound $\mathrm{Reg_{LB}}(T)=\widetilde{\mathcal{O}}((d+\frac{d^3}{\rho})\sqrt{T})$. This improves on the existing regret bound of $\widetilde{\mathcal{O}}(d^4\sqrt{T}/\rho^2)$ \citep[Theorem 10]{esfandiari2023replicable}, by a factor of $\cO\big(\frac{d}{\rho}\big)$. Existing lower bounds imply a regret of ${\Omega}(\sqrt{T}/\rho)$ \citep{ahmadi2024replicable, komiyama2024replicability}, and hence our bound is optimal in $\rho$, in addition to $T$. To the best of our knowledge, this is the first linear-bandit algorithm with an \textit{optimal} dependence on $\rho$ for large number of arms.
\end{remark}
\vspace{1ex}
\begin{remark}[\textbf{Comparison with Prior Work}]
Our approach differs substantially from the infinite-action replicable linear bandit algorithm of \citet[Theorem~10]{esfandiari2023replicable}. Their method first constructs a deterministic $1/T$-net of the action space, computes a $G$-optimal design and its associated core set, and then proceeds by batched elimination using a replicable least-squares subroutine on the core-set arms. In contrast, \hyperref[alg:replicable-linear-ucb]{$\RepLinUCB$} is a direct optimistic algorithm: it does not discretize the action space, does not rely on elimination, and does not require a design-based core-set construction. Instead, at each batch start it computes a replicable ridge estimate via \hyperref[alg:replicable_ridge]{$\RepRidge$}, forms an inflated confidence radius, and then selects actions throughout the batch by maximizing a batch-start UCB score over the current action set.
\end{remark}
\vspace{1ex}
\begin{remark}[\textbf{Analysis}]
    The proof of Theorem~\ref{thm:replinucb_main} is more delicate than that of Theorem~\ref{thm:replicable_ridge}, since \hyperref[alg:replicable_ridge]{$\RepRidge$} is proved under a fixed design, whereas in the linear bandit setting the actions are chosen adaptively from the observed history. We show that replicability can nevertheless be established inductively over batches: if the action sequence is identical up to the start of a batch, then the design is shared across runs, so conditioning on this common design allows us to apply the guarantee of \hyperref[alg:replicable_ridge]{$\RepRidge$} to obtain a replicable batch-start parameter estimate. We then show that, because the same batch-start estimate and confidence radius are used throughout the batch, all actions selected within that batch are also replicable across runs. Repeating this argument across batches yields the overall $\rho$-replicability guarantee. A rigorous proof is provided in Appendix~\ref{sec:repLinUCB-app}.
\end{remark}
\vspace{1ex}
\begin{remark}[\textbf{Action Set}]
    Comparing \citet[Theorem~10]{esfandiari2023replicable} and our result, note that their guarantee assumes a single fixed action set $\cA$, whereas in our setting the action set $A_t$ may vary with $t$, provided the sequence $(A_t)_{t=1}^T$ is chosen by an oblivious adversary, and is therefore more general. However, in the non-replicable linear bandit analysis of \citet{abbasi2011improved}, a $\cO(\sqrt{T})$ regret guarantee holds even against an adaptive adversary. The oblivious-adversary assumption is needed in our analysis for the replicability argument: once we show that two runs agree up to the start of a batch, they must face the same action sets within that batch so that the UCB maximization in Line~\ref{line:replinucb_action} produces the same actions across runs. Extending our results to the adaptive adversarial setting would require novel techniques and is left for future work.
\end{remark}

\section{Replicable Generalized Linear Bandits}
\label{sec:replicable_glm}
We now extend our framework to the generalized linear bandit setting of \citet{filippi2010parametric}, where rewards depend on actions through a known nonlinear inverse link function.
\vspace{1ex}
\begin{assumption}[\textbf{Generalized linear bandit}]
\label{asmp:glm_bandit_main}
There is an unknown $\theta^{\star}\in\Theta\subset\mathbb{R}^d$ with $\Theta$ a known closed convex set and $\|\theta^{\star}\|_2\le S$. At each round $t\in[T]$, the learner observes an action set $A_t\subset\mathbb{R}^d$ with $\|a\|_2\le L$ for all $a\in A_t$, selects $a_t\in A_t$, and observes $r_t = \mu(a_t^\top\theta^{\star}) + \varepsilon_t$, where $(\varepsilon_t)$ is conditionally $\sigma$-sub-Gaussian. The inverse link $\mu$ is continuously differentiable and $k_{\mu}$-Lipschitz, with $c_{\mu}:=\inf_{\theta\in\Theta,t,a\in A_t}\mu'(a^\top\theta)>0$, and mean rewards lie in $[0,1]$.
\end{assumption}

The main difficulty in lifting our linear construction to this setting is that the penalized GLM estimator is defined implicitly through a strictly concave optimization problem rather than a closed-form ridge expression. Recomputing it after every new reward would again make the chosen action highly sensitive to the realized noise, so two runs with shared internal randomness but independent rewards would quickly diverge. We address this through the same two-pronged approach used in the linear case: \emph{(i)} a fixed-design replicable GLM estimation primitive, and \emph{(ii)} a determinant-triggered batching scheme that freezes the policy between updates.

\paragraph{Replicable GLM estimation.}
Given covariates $\{(x_i,y_i)\}_{i=1}^n$, our primitive \hyperref[alg:replicable_glm]{$\RepGLM$} (Algorithm~\ref{alg:replicable_glm}) first computes the penalized quasi-MLE $\widehat{\theta}_n=\arg\max_{\theta\in\Theta}\{\sum_{i=1}^n(y_i x_i^\top\theta-b(x_i^\top\theta))-\tfrac{\lambda}{2}\|\theta\|_2^2\}$ where $b'=\mu$, then whitens via $z_n=V_n^{1/2}\widehat{\theta}_n$ with $V_n=\lambda I+\sum_i x_ix_i^\top$, applies randomized midpoint rounding on a shared shifted grid of width $\alpha=2\beta_n(\delta)\sqrt{d}/(\rho-2\delta)$, and maps back to parameter space. The whitening step is what enables this: the natural statistical metric for the penalized GLM estimator is the $V_n$-norm, and after whitening, $V_n$-closeness becomes Euclidean closeness, so the same shifted-grid argument used in the linear case applies. Theorem~\ref{thm:replicable_glm} shows that $\RepGLM$ is $\rho$-replicable and satisfies $\|\widetilde{\theta}_n-\theta^{\star}\|_{V_n}\le \beta_n(\delta)(1+d/(\rho-2\delta))$ with probability at least $1-\delta$, where $\beta_n(\delta)$ is the standard GLM confidence radius inflated by $1/\underline{c}_{\mu}$.

\paragraph{Replicable batched GLM-UCB.}
Our bandit algorithm \hyperref[alg:replicable_glm_ucb]{$\RepGLMUCB$} (Algorithm~\ref{alg:replicable_glm_ucb}) follows the optimistic principle of \citet{filippi2010parametric}, but constructs its upper confidence bound only at the start of each batch using the replicable estimator above. Within a batch, it reuses the same pair $(\widetilde{\theta}_b,\widetilde{\beta}_b)$ to select $a_t \in \arg\max_{a\in A_t}\Big\{\mu(\langle a,\widetilde{\theta}_b\rangle)+\widetilde{\beta}_b\|a\|_{V_{t_b}^{-1}}\Big\}$, and a new batch is triggered whenever the determinant of the design matrix has grown by a factor of $q$. Freezing the policy between determinant triggers ensures that two parallel runs make identical decisions throughout a batch whenever their batch-start estimators coincide, and the determinant rule guarantees that there are at most $B=O(d\log(TL^2/(\lambda d)))$ batches. This yields our main guarantee for the GLM setting.

\vspace{2ex}
\begin{theorem}[Informal; see Theorem~\ref{thm:repglmucb_main}]
\label{thm:repglmucb_informal}
With the parameter choices $B=\lceil d\log(1+TL^2/(\lambda d))\rceil$, $q=(1+TL^2/(\lambda d))^{d/B}$, and $\delta_b=\delta/B$, $\rho_b=\rho/B$ for $\rho\in(3\delta,1)$, $\RepGLMUCB$ is $\rho$-replicable, and with probability at least $1-\delta$ its regret satisfies
\begin{align*}
\mathrm{Reg}_{\mathrm{GLB}}(T)
=
\widetilde{O}\!\left(
\frac{k_{\mu}}{\underline{c}_{\mu}}
\Big(\sigma\sqrt{d}+\sqrt{\lambda}\,S\Big)
\Big(1+\tfrac{dB}{\rho}\Big)
\sqrt{Td}
\right).
\end{align*}
\end{theorem}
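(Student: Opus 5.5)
The proof has two parts: replicability, argued inductively over batches, and regret, which follows the optimistic LinUCB analysis with an inflated radius and a determinant-batching correction.

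\textbf{Step 1: a fixed-design GLM primitive.} I would first establish the analogue of Theorem~\ref{thm:replicable_ridge} for $\RepGLM$, i.e.\ Theorem~\ref{thm:replicable_glm}.

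The confidence part rests on the standard GLM argument of \citet{filippi2010parametric}. Let $g(\theta)=\sum_i \mu(x_i^\top\theta)x_i+\lambda\theta$. By the mean value theorem and $\mu'\ge c_\mu$, we have $\|\theta_1-\theta_2\|_{V_n}\le c_\mu^{-1}\|g(\theta_1)-g(\theta_2)\|_{V_n^{-1}}$. The first-order optimality condition of the penalized quasi-MLE, projected onto $\Theta$, then gives $\|\widehat\theta_n-\theta^\star\|_{V_n}\le c_\mu^{-1}\big(\|\sum_i\varepsilon_ix_i\|_{V_n^{-1}}+\lambda S\big)$. The self-normalized bound of \citet{abbasi2011improved} controls the noise term, which defines $\beta_n(\delta)$.

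The rounding step then goes through verbatim from the ridge case:
\begin{itemize}
\item \emph{Accuracy.} The rounding error is deterministic, $\|Q_{\alpha,u}(z)-z\|_2\le \alpha\sqrt d/2=\beta_n d/(\rho-2\delta)$. The triangle inequality in whitened coordinates gives the inflated radius.
\item \emph{Replicability.} On the event that both runs satisfy the confidence bound (probability at least $1-2\delta$), we have $\|z^{(1)}-z^{(2)}\|_1\le \sqrt d\,\|z^{(1)}-z^{(2)}\|_2\le 2\sqrt d\beta_n$. A union bound over coordinates for the random shift gives a probability of at most $\|z^{(1)}-z^{(2)}\|_1/\alpha\le \rho-2\delta$ that the two runs land in different grid cells. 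The total failure probability is therefore at most $\rho$.
\end{itemize}

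\textbf{Step 2: replicability of $\RepGLMUCB$.} I would induct on batches. Let $E_b$ be the event that the two runs agree on all actions before $t_b$.
\begin{itemize}
\item On $E_b$, the design $\{a_s\}_{s<t_b}$ is common to both runs, so the Gram matrices agree. Since the action sets come from an oblivious adversary, the batch boundaries also agree.
\item Conditioning on this common design, the responses in the two runs are independent. This is exactly the main obstacle: the design was chosen adaptively from the rewards, so Step 1 cannot be applied naively. I would handle it by noting that on $E_b$ the design is a deterministic function of the shared prefix. The self-normalized bound holds uniformly over the adaptive design, so each run individually satisfies the confidence event with probability at least $1-\delta_b$. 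The rounding argument only uses the shared shift $u_b$, which is drawn independently of the data.
\item Hence $\mathbb{P}(\widetilde\theta_b^{(1)}\neq\widetilde\theta_b^{(2)},\,E_b)\le\rho_b$.
\item Given equal $\widetilde\theta_b$, equal $\widetilde\beta_b$, equal $V_{t_b}$, the same action sets, and a fixed tie-breaking rule, every action inside the batch coincides. The determinant trigger, which depends only on actions, also fires at the same time in both runs.
\end{itemize}
Summing the per-batch failure probabilities gives $\sum_b\rho_b=\rho$.

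\textbf{Step 3: regret.} I would work on the event, of probability at least $1-\sum_b\delta_b=1-\delta$, that $\|\widetilde\theta_b-\theta^\star\|_{V_{t_b}}\le\widetilde\beta_b$ for all $b$.
\begin{itemize}
\item \emph{Optimism and per-step bound.} Optimism, together with the Lipschitz property of $\mu$, gives $\mu(a_t^{\star\top}\theta^\star)-\mu(a_t^\top\theta^\star)\le 2k_\mu\widetilde\beta_b\|a_t\|_{V_{t_b}^{-1}}$.
\item \emph{Batch correction.} Within a batch, the trigger guarantees $\det V_t\le q\det V_{t_b}$, so $\|a\|_{V_{t_b}^{-1}}\le\sqrt q\,\|a\|_{V_t^{-1}}$. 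The choice of $q$ makes $q=O(1)$.
\item \emph{Summation.} Cauchy--Schwarz and the elliptical potential lemma give $\sum_t\min(1,\|a_t\|_{V_t^{-1}})\le\sqrt{2Td\log(1+TL^2/(\lambda d))}$. Mean rewards in $[0,1]$ handle the $\min(1,\cdot)$ truncation.
\end{itemize}
Bounding $\widetilde\beta_b\le \beta_T(\delta/B)(1+dB/(\rho-2\delta))$, and using $\rho>3\delta$ to get $\rho-2\delta\ge\rho/3$, yields the stated rate.

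If the cap of $B$ batches is reached before time $T$, the last batch runs to the end. This is harmless, because $B$ triggers already force $\det V$ to its maximum $(1+TL^2/(\lambda d))^d$, so the trigger condition could never fire again.
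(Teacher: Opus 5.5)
Your proposal is correct and follows essentially the same route as the paper: the fixed-design guarantee for $\mathtt{RepGLM}$, a batch-by-batch induction for trajectory replicability, and optimism combined with the determinant-trigger stale-norm comparison and the elliptical potential lemma for regret. Your treatment of adaptivity (a per-run self-normalized bound at the batch start, plus independence of the fresh shift $u_b$) mirrors the paper's linear-bandit replicability lemma, and is more careful than its GLM lemma, which applies the fixed-design theorem conditionally on $\mathcal{M}_b$.

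Step~1 has a few slips to fix. Because of the $\lambda I$ term, the curvature constant must be $\underline{c}_{\mu}=\min\{1,c_{\mu}\}$ rather than $c_{\mu}$. The bias term is $\|\lambda\theta^{\star}\|_{V_n^{-1}}\le\sqrt{\lambda}\,S$, not $\lambda S$. Finally, since the maximizer is constrained to $\Theta$, you should combine the variational-inequality optimality condition with strong monotonicity, instead of inverting $g$.
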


The proof combines a batch-start concentration argument for the replicable GLM estimator with a stale-norm-to-current-norm comparison enabled by the determinant trigger, and a union-bound replicability argument over batches. For a complete description, see Appendix~\ref{sec:replicable_glm_bandits}.

% \section{Lower Bound}
% \label{sec:lowerBound}
% \input{sec/lowerBound}

\section{Conclusion}
\label{sec:conclusion}
We studied replicable algorithms for stochastic MABs and stochastic linear bandits under optimistic exploration. In the multi-armed setting, we developed \hyperref[alg:replicable-ucb]{$\RepUCB$}, a batched UCB-style algorithm that combines optimism with replicable mean estimation to obtain both low regret and a replicability guarantee. In the linear setting, we introduced \hyperref[alg:replicable_ridge]{$\RepRidge$}, a replicable ridge regression estimator for fixed-design linear regression, and used it as the core estimation primitive in \hyperref[alg:replicable-linear-ucb]{$\RepLinUCB$}. This yields a replicable optimistic algorithm for stochastic linear bandits whose regret improves the previous best known bound by a factor of $\cO(\frac{d}{\rho})$. 
% More broadly, our replicable ridge estimator may be useful beyond bandits, in other adaptive statistical and online decision-making problems that require confidence bounds for replicable estimates.

Several directions remain open. In stochastic multi-armed bandits, the best known lower bound scales as $\sqrt{T}/\rho$, leaving a gap from our upper bound; an important next step is therefore to close this gap, either by developing replicable algorithms with improved regret guarantees or by strengthening lower bounds. In stochastic linear bandits, it remains open whether one can further improve the dependence on the dimension $d$, or establish matching lower bounds that clarify the optimal price of replicability in this setting. Another natural direction is to extend replicable optimistic methods beyond linear models to more general non-linear contextual and bandit settings \citep{deb2024contextual,foster2020beyond,foster2021efficient}. Recently \citet{bollini2026replicableconstrainedbandits} started the study of constrained MABs under replicability, and it would be interesting to understand whether replicable algorithms can be extended to other linear and non-linear constrained bandit settings as well \citep{linCBwK,conservative_linear_bandits,Moradipari2019SafeLT,deb2025conservative,deb2025thompson,deb24a}. Finally, our replicable ridge estimator may be useful beyond bandits in other adaptive statistical and online decision-making problems.

\bibliographystyle{abbrvnat}
\bibliography{references}

\clearpage

\appendix

\section{Replicable Multi Armed Bandit Proof}
\label{sec:mab-app}
\textbf{Theorem}~\ref{thm:repubc-main}
\emph{Consider $\RepUCB$ (Algorithm~\ref{alg:replicable-ucb}) and let $M := K \bigl(1+\lceil \log_2 T \rceil\bigr)$, $\rho_1 := \frac{\rho}{M}$, and $\delta_1 := \frac{1}{2MT}$, and assume $\delta_1 < \rho_1$ and $\delta_1 \le \rho_1/2$. Define $\tau(n) := \sqrt{\frac{C_{\mathrm{ME}}\log(1/\delta_1)}{n(\rho_1-\delta_1)^2}}$ for all $n \ge 1$.
Then $\RepUCB$ (Algorithm~\ref{alg:replicable-ucb}) is $\rho$-replicable, and there exists a constant $C > 0$ such that the the regret is bounded as follows
\begin{align*}
\mathbb{E}[\mathrm{Reg_{MAB}}(T)]
\le
C\cdot \frac{K^2(1+\lceil \log_2 T \rceil)^2}{\rho^2}
\sum_{a:\Delta_a>0}
\left(
\Delta_a + \frac{\log\!\bigl(2KT(1+\lceil \log_2 T \rceil)\bigr)}{\Delta_a}
\right)
+\frac{1}{2}.
\end{align*}}

\subsection{Proof of Theorem~\ref{thm:repubc-main}}
    
Recall that in this setup, arms $a\in[K]$ produce i.i.d.\ rewards in $[0,1]$ with mean $\mu_a$, $\mu^{\star} := \max_{a\in[K]} \mu_a,$ and the gap $\Delta_a := \mu^{\star}-\mu_a$. For horizon $T$, regret in the MAB case is defined as
\begin{align*}
\mathrm{Reg_{MAB}}(T)
:= \sum_{t=1}^T (\mu^{\star}-\mu_{a_t})
= \sum_{a=1}^K \Delta_a\,N_a(T),
\end{align*}
where $N_a(T)$ is the number of pulls of arm $a$ up to time $T$. The replicability proof of $\RepUCB$ (Algorithm~\ref{alg:replicable-ucb}) critically uses the replicable mean estimation procedure from \citet{impagliazzo2022reproducibility} and we state it as an oracle as follows (shorter version in Proposition~\ref{prop:repmean}).

\begin{tcolorbox}[
    colback=white,
    colframe=black,
    arc=2mm,
    boxrule=1pt,
    left=1mm,    % space on the left
    right=1mm,   % space on the right
    top=1mm,     % space on the top
    bottom=1mm   % space on the bottom
]
\label{box:repmean}
\textbf{$\RepMean$ (Replicable Mean Estimation).}
There exists a universal constant $C_{\mathrm{ME}}>0$ such that
for any $\delta_1\in(0,1)$, $\rho_1\in(0,1)$, and $\tau>0$ with $\delta_1<\rho_1$, there is a mean estimator $\mathrm{RepMean}(\delta_1,\tau,\rho_1)$ satisfying:

\begin{enumerate}
\item \textbf{Accuracy.}
If it is run on $n$ i.i.d.\ samples from a $[0,1]$-valued distribution with mean $\mu$ and
\begin{align*}
n \;\ge\; \frac{C_{\mathrm{ME}}\log(1/\delta_1)}{\tau^2(\rho_1-\delta_1)^2},
\end{align*}
then its output $\widehat{\mu}$ satisfies $\mathbb{P}(|\widehat{\mu}-\mu|\le \tau)\ge 1-\delta_1$.

\item \textbf{Replicability.}
If it is run twice with the same internal randomness on two independent size-$n$ sample sets from the same distribution, the two outputs are identical with probability at least $1-\rho_1$.
\end{enumerate}
\end{tcolorbox}
\vspace{2ex}
\begin{lemma}[\textbf{Replicability of} $\RepUCB$]
\label{lem:repubc-replicability}
Let $M := K\bigl(1+\lceil \log_2 T\rceil\bigr)$ and suppose Algorithm~\ref{alg:replicable-ucb} is run with $\rho_1 := \frac{\rho}{M},
\delta_1 := \frac{1}{2MT}$, where $\delta_1 < \rho_1$. Then $\RepUCB$ is $\rho$-replicable. More precisely, let $\{a_t^{(1)}\}_{t=1}^T$ and $\{a_t^{(2)}\}_{t=1}^T$ denote the action sequences produced by two executions of Algorithm~\ref{alg:replicable-ucb} using the same internal randomness and two independent reward realizations from the same bandit instance. Then
\begin{align*}
\mathbb{P}\!\left( \forall t\in[T],\ a_t^{(1)} = a_t^{(2)} \right) \ge 1-\rho.
\end{align*}
\end{lemma}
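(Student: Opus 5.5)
The plan is an induction over $\mathrm{RepMean}$ calls, with a coupling of the two executions and a union bound. Observe first that, apart from the reward realizations, Algorithm~\ref{alg:replicable-ucb} is deterministic. Arm selection in Line~\ref{line:repubc_select} is an argmax with a fixed tie-breaking rule. The batch length in Line~\ref{line:repubc_batch} depends only on $N_{a_t}$ and $t$. The radius $r_a=\tau(N_a)$ depends only on counts. So the action sequence is a deterministic function of the sequence of outputs $\widehat{\mu}$ returned by successive $\mathrm{RepMean}$ calls. Index these calls $j=1,2,\dots$ in the order they occur. The initialization makes $K$ calls. After that, each call follows a batch that exactly doubles the count of the played arm, except possibly the final truncated batch. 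Hence each arm's count takes values $1,2,4,\dots$, and the total number of calls is at most $M=K(1+\lceil\log_2T\rceil)$. This bound holds deterministically in both runs.

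Define the event $G_j$: the first $j$ $\mathrm{RepMean}$ outputs coincide across the two runs. On $G_{j-1}$, both runs have identical indices $U_a$, counts, sample-set sizes and clocks $t$. They therefore choose the same arm and the same batch length, play identical actions up to the $j$-th call, and invoke $\mathrm{RepMean}(\delta_1,\tau(N),\rho_1)$ with the same parameters on a sample set of the same size $N$ for the same arm. The key claim is that, conditionally on this shared history, the two sample sets are independent i.i.d.\ samples of size $N$ from arm $a$'s reward distribution. Rewards of a fixed arm are i.i.d.\ across pulls and independent of the algorithm's randomness. The only dependence runs through which arm is pulled and when, and that is determined by past outputs common to both runs. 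I would formalize this with the standard "reward tape" model: each arm has an i.i.d.\ tape per run, and the $k$-th pull of arm $a$ reads the $k$-th entry. Then $\mathcal{S}_a$ is simply the first $N$ entries of arm $a$'s tape in that run. The two runs use independent tapes, and the prefix length $N$ is determined by shared quantities. This also handles the subtlety that $N$ is random. The tape model is needed because otherwise the samples are selected adaptively. I expect this conditioning step to be the main obstacle, since the $\mathrm{RepMean}$ replicability guarantee is stated for fixed $n$ and non-adaptive i.i.d.\ samples.

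Two points about the internal randomness need care. First, each call must use fresh internal randomness indexed by $j$, shared across runs. Second, the tapes must be independent of this randomness. Given both, the replicability clause of the $\mathrm{RepMean}$ oracle gives $\mathbb{P}(G_j^c\cap G_{j-1}\mid\mathcal{H})\le\rho_1$. Summing over the at most $M$ calls,
\begin{align*}
\mathbb{P}\bigl(\exists t: a_t^{(1)}\neq a_t^{(2)}\bigr)\le \mathbb{P}\Bigl(\bigcup_{j\le M}(G_j^c\cap G_{j-1})\Bigr)\le M\rho_1=\rho .
\end{align*}
It remains to check that agreement of all outputs yields agreement of all actions. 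This follows because the last batch after the final call has its actions determined entirely by the indices, which coincide across the two runs. Note that $\delta_1<\rho_1$ is used only to make the oracle well defined.
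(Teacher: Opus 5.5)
There is a genuine gap, and it sits at the step you yourself flagged as the main obstacle. You claim that, conditionally on the shared history $\mathcal{H}$ on $G_{j-1}$, the two sample sets passed to the $j$-th call are independent i.i.d.\ samples of size $N$ from arm $a$. This fails for Algorithm~\ref{alg:replicable-ucb} because $\mathcal{S}_a$ is cumulative. The $j$-th call on arm $a$ reads the first $N$ tape entries. All of these except the latest batch are exactly the samples on which arm $a$'s previous $\mathrm{RepMean}$ output was computed, and that output is part of $\mathcal{H}$. Conditioning on it distorts the law of those prefixes. Conditioning on $G_{j-1}$, i.e.\ on agreement of the earlier outputs across runs, also couples the two runs' tapes.

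The reward-tape model does not fix this. It makes $N$ a function of shared quantities, but those quantities are themselves functions of the very prefixes being fed to $\mathrm{RepMean}$. The oracle's replicability clause is a black-box statement about unconditioned, independent i.i.d.\ samples. So it gives no bound on $\mathbb{P}(G_j^c\cap G_{j-1}\mid\mathcal{H})$. It does not even directly give the unconditional per-index bound $\mathbb{P}(G_j^c\cap G_{j-1})\le\rho_1$, since the arm and sample size of call $j$ are chosen adaptively from the same tapes.

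The repair is to drop the conditioning and take the union bound over (arm, sample-size) pairs rather than over call indices. This is the rigorous content of the paper's Step~2, which simply asserts that each of the at most $M$ calls mismatches with probability at most $\rho_1$.

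\begin{itemize}
\item Reserve shared randomness $\xi_{a,k}$ for the call on arm $a$ with $2^k$ samples. Let $D_{a,k}$ be the event that $\mathrm{RepMean}(\delta_1,\tau(2^k),\rho_1)$ with randomness $\xi_{a,k}$ returns different values on the first $2^k$ entries of arm $a$'s tape in the two runs.
\item These prefixes genuinely are independent i.i.d.\ samples, independent of $\xi_{a,k}$, so $\mathbb{P}(D_{a,k})\le\rho_1$ follows straight from the oracle.
\item Your induction then shows that on $\bigcap_{a,k}D_{a,k}^c$ the runs never diverge. While they are synchronized, every call that is followed by another batch is on some pair $(a,2^k)$ with $2^k\le T$, and it returns the common virtual output.
\item The very last call is the only one that can see a truncated, non-power-of-two sample. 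It is made after the final batch, so it affects no action.
\item There are at most $K(1+\lfloor\log_2 T\rfloor)\le M$ pairs, so the failure probability is at most $M\rho_1=\rho$.
\end{itemize}

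If you prefer to index randomness by call order $j$, the same per-pair bound still holds. Fix $(a,k)$ and consider the events ``synchronized through call $j-1$ and call $j$ is on $(a,2^k)$''. These are disjoint in $j$ and measurable with respect to the tapes and $\xi_{<j}$, hence independent of $\xi_j$. So summing over $j$ their intersections with a mismatch gives at most the unconditional mismatch probability of the $(a,2^k)$ prefixes, which is at most $\rho_1$.
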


\begin{proof}[Proof of Lemma~\ref{lem:repubc-replicability}]
Consider two executions of Algorithm~\ref{alg:replicable-ucb} on the same bandit instance, using the same internal randomness and two independent reward realizations. Let $\{a_t^{(1)}\}_{t=1}^T$ and $\{a_t^{(2)}\}_{t=1}^T$ denote the corresponding action sequences.

\textbf{Step 1. Bound the total number of \hyperref[box:repmean]{$\RepMean$} calls.} During initialization, the algorithm pulls each arm once and computes one \hyperref[box:repmean]{$\RepMean$} estimate for each arm. Hence initialization makes exactly $K$ calls to \hyperref[box:repmean]{$\RepMean$}.

After initialization, \hyperref[box:repmean]{$\RepMean$} is called only in Line~\ref{line:repubc_update_mean}, that is, only after a batch is completed, and only for the arm selected for that batch. Fix an arm $a \in [K]$. Initially, $N_a = 1$. Whenever arm $a$ is selected in the main loop, the batch length is
\begin{align*}
B = \min\{N_a,\, T-t+1\}.
\end{align*}
If the horizon truncation is inactive, then $B = N_a$, and after the batch the count is updated to
\begin{align*}
N_a \leftarrow N_a + B = 2N_a.
\end{align*}
Therefore, apart from a possible final truncated batch, each selection of arm $a$ doubles its count. Starting from $N_a = 1$, this doubling can occur at most $\lceil \log_2 T \rceil$ times before the count exceeds $T$. Thus arm $a$ can trigger at most $\lceil \log_2 T \rceil$ post-initialization calls to \hyperref[box:repmean]{$\RepMean$}.

Summing over all $K$ arms, the total number of post-initialization calls is at most $K\lceil \log_2 T \rceil$. Hence the total number of \hyperref[box:repmean]{$\RepMean$} calls made by the algorithm is at most
\begin{align*}
M := K + K\lceil \log_2 T \rceil = K\bigl(1+\lceil \log_2 T \rceil\bigr).
\end{align*}

\textbf{Step 2. Union bound over per-call} \hyperref[box:repmean]{$\RepMean$} \textbf{replicability.} Let $\mathcal{G}$ denote the event that every $\RepMean$ call made in the two executions returns the same output in both runs. Since the total number of calls is at most $M$, and since each call is $\rho_1$-replicable by the replicability guarantee in Proposition~\ref{prop:repmean}, the probability that a given call produces different outputs in the two executions is at most $\rho_1$.

Therefore, by the union bound, $\mathbb{P}(\mathcal{G}^c) \le M\rho_1$. With the choice $\rho_1 = \rho/M$, it follows that

$$\mathbb{P}(\mathcal{G}) \ge 1 - M\rho_1 = 1-\rho.$$

\paragraph{Step 3: on \(\mathcal{G}\), the action sequences are identical.}

We prove that on the event $\mathcal{G}$, $a_t^{(1)} = a_t^{(2)} \text{for all } t \in [T]$. First, during initialization, both executions pull arms $1,2,\dots,K$ in the same deterministic order. Hence
\begin{align*}
a_t^{(1)} = a_t^{(2)} \qquad \text{for all } t = 1,\dots,K.
\end{align*}
Moreover, for each arm $a \in [K]$, the initialization call to $\RepMean$ agrees across the two executions on $\mathcal{G}$. Therefore the initial estimates $\widehat{\mu}_a$, radii $r_a$, and indices $U_a$ are identical in the two runs after initialization.

Now suppose inductively that the two executions agree up to some decision time $t$, in the sense that they have produced the same actions in all previous rounds. Then for each arm $a \in [K]$, the number of times arm $a$ has been played is the same in both executions, so the counts $N_a$ are identical across the two runs. Since $r_a = \tau(N_a)$ depends only on $N_a$, the radii are also identical.

Further, the only times an estimate $\widehat{\mu}_a$ can change are the initialization call and the update call in Line~\ref{line:repubc_update_mean} after a batch of arm $a$. Every such $\RepMean$ call agrees across the two executions on $\mathcal{G}$. Hence all estimates $\widehat{\mu}_a$ are identical across the two runs, and therefore all optimistic indices $U_a = \widehat{\mu}_a + 2r_a$ are identical as well.

Consequently, at decision time $t$, both executions solve the same maximization problem
\begin{align*}
a_t \in \arg\max_{a\in[K]} U_a,
\end{align*}
with the same deterministic tie-breaking rule. Thus both executions choose the same arm, say $a^{\star}$. Since the counts match, both executions also compute the same batch length
\begin{align*}
B = \min\{N_{a^{\star}},\, T-t+1\}.
\end{align*}
Therefore both executions play arm $a^{\star}$ for exactly the next $B$ rounds, so the action sequences continue to agree throughout the batch. After the batch, both executions update the same arm $a^{\star}$, with the same new count $N_{a^{\star}}$, and invoke the corresponding $\RepMean$ call on that arm. On $\mathcal{G}$, this call also agrees across the two runs, so the updated estimate, radius, and optimistic index for arm $a^{\star}$ remain identical across the two executions.

This closes the induction. Hence, on $\mathcal{G}$, $\forall t\in[T], a_t^{(1)} = a_t^{(2)}$. Combining the previous steps, we obtain
\begin{align*}
\mathbb{P}\!\left( \forall t\in[T],\ a_t^{(1)} = a_t^{(2)} \right)
\ge \mathbb{P}(\mathcal{G})
\ge 1-\rho.
\end{align*}
Therefore Algorithm~\ref{alg:replicable-ucb} is $\rho$-replicable.
\end{proof}

\begin{lemma}[Regret guarantee of $\RepUCB$]
\label{lem:repubc-regret}
Let $L=\lceil\log_2 T\rceil$, $M=K(1+L)$, $\rho_1=\frac{\rho}{M}$, and $\delta_1=\frac{1}{2MT}$. Assume $\delta_1<\rho_1$ and $\delta_1\le \rho_1/2$. Then there exists a universal constant $C>0$ such that
\begin{align*}
\mathbb{E}[\mathrm{Reg}(T)]
\le
C\cdot \frac{K^2(1+L)^2}{\rho^2}
\sum_{a:\Delta_a>0}
\left(
\Delta_a + \frac{\log(2KT(1+L))}{\Delta_a}
\right)
+\frac{1}{2}.
\end{align*}
\end{lemma}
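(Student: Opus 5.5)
We will follow the standard UCB template: define a good event, count suboptimal batches on it, and use the batch-doubling structure to convert that count into a bound on pulls. Let $\mathcal{E}$ be the event that every $\RepMean$ call in the execution returns an estimate within its target accuracy, i.e.\ whenever arm $a$ has count $N_a$ at a call, $|\widehat{\mu}_a-\mu_a|\le \tau(N_a)$. By the choice of $\tau(n)$, the sample requirement of the accuracy guarantee in \hyperref[box:repmean]{$\RepMean$} is met with equality at sample size $N_a$. Since there are at most $M$ calls (Step 1 of Lemma~\ref{lem:repubc-replicability}), a union bound gives $\mathbb{P}(\mathcal{E}^c)\le M\delta_1 = \frac{1}{2T}$. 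On $\mathcal{E}^c$ we bound the regret trivially by $T$ (rewards in $[0,1]$, so $\Delta_a\le 1$), which contributes the additive $\frac12$.

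On $\mathcal{E}$, each stored index satisfies $U_a=\widehat{\mu}_a+2r_a\ge \mu_a+r_a\ge\mu_a$, so $U_{a^\star}\ge\mu^\star$ at all times. Moreover $U_a\le \mu_a+3r_a$. Hence if a suboptimal arm $a$ is selected at the start of a batch, then $\mu_a+3\tau(N_a)\ge U_a\ge U_{a^\star}\ge \mu^\star$, so $\tau(N_a)\ge \Delta_a/3$, i.e.
\begin{align*}
N_a \le n_a := \frac{9C_{\mathrm{ME}}\log(1/\delta_1)}{\Delta_a^2(\rho_1-\delta_1)^2}.
\end{align*}
Since the batch length equals the current count, the count after that batch is at most $2N_a\le 2n_a$. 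Therefore, on $\mathcal{E}$, $N_a(T)\le \max\{1,2n_a\}\le 1+2n_a$. The key subtlety is that the index of $a$ is frozen between its own batches. This is harmless: the argument only uses the index value at the moment of selection, which was produced by a $\RepMean$ call with accuracy $\tau(N_a)$ at the current count $N_a$. We should also check that the radius and index of $a^\star$ remain valid while frozen. They do, since they are always the output of the most recent accurate call.

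It remains to plug in the parameters. We have $\delta_1\le\rho_1/2$, so $(\rho_1-\delta_1)^2\ge \rho_1^2/4=\rho^2/(4M^2)$, and $\log(1/\delta_1)=\log(2MT)=\log(2KT(1+L))$. Thus
\begin{align*}
\Delta_a \, \mathbb{E}[N_a(T)\mathbf 1_{\mathcal{E}}]\le \Delta_a+\frac{72 C_{\mathrm{ME}} K^2(1+L)^2\log(2KT(1+L))}{\rho^2\Delta_a}.
\end{align*}
Summing over $a$ with $\Delta_a>0$ and adding the $\mathcal{E}^c$ contribution yields the claim. Note that $K^2(1+L)^2/\rho^2\ge1$, so it can multiply the $\Delta_a$ term as well, with $C=72C_{\mathrm{ME}}+1$. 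The main point needing care is the frozen-index/doubling interaction: we must verify that the final count is at most twice the count at the last selection. This also covers the truncated final batch, which only makes the count smaller.
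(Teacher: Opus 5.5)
Your proposal is correct and follows the paper's proof essentially step for step. It uses the same global accuracy event with $\mathbb{P}(\mathcal{E}^c)\le M\delta_1=1/(2T)$ and the same inequality $3\tau(N_a)\ge\Delta_a$ at the last selection of a suboptimal arm. Combined with batch doubling, this gives $N_a(T)\le 1+2H_a$, and the same parameter substitution yields the constant $72C_{\mathrm{ME}}$. Your remark that absorbing the $\Delta_a$ term needs only $K^2(1+L)^2/\rho^2\ge 1$ (not $\log(2KT(1+L))\ge 1$) slightly tidies the paper's final step.
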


\begin{proof}
We prove the regret bound in four steps.

\paragraph{Step 1: global accuracy event.}
Let $\mathcal{E}$ be the event that every call to $\RepMean$ made by Algorithm~\ref{alg:replicable-ucb} is accurate, namely, whenever $\RepMean$ is called on arm $a$ with current sample size $n=N_a$, the returned estimate $\widehat{\mu}_a$ satisfies
\begin{align*}
|\widehat{\mu}_a-\mu_a|\le \tau(n).
\end{align*}
By Step 1 in the proof of Lemma~\ref{lem:repubc-replicability}, the total number of calls to $\RepMean$ is at most $M=K(1+L)$. For any such call, the definition
\begin{align*}
\tau(n)=\sqrt{\frac{C_{\mathrm{ME}}\log(1/\delta_1)}{n(\rho_1-\delta_1)^2}}
\end{align*}
implies
\begin{align*}
n=\frac{C_{\mathrm{ME}}\log(1/\delta_1)}{\tau(n)^2(\rho_1-\delta_1)^2}.
\end{align*}
Hence the sample-size requirement in the accuracy guarantee of \hyperref[box:repmean]{$\RepMean$} is satisfied with equality. Therefore each call fails with probability at most $\delta_1$. By a union bound,
\begin{align*}
\mathbb{P}(\mathcal{E}^c)\le M\delta_1 = M\cdot \frac{1}{2MT} = \frac{1}{2T}.
\end{align*}

\paragraph{Step 2: selecting a suboptimal arm forces its radius to be large.}
Consider any decision time in the main loop where the algorithm selects a suboptimal arm $a$ with $\Delta_a>0$. Let $n:=N_a$ denote the count of arm $a$ at that moment. The index used for arm $a$ is
\begin{align*}
U_a=\widehat{\mu}_a+2\tau(n),
\end{align*}
where $\widehat{\mu}_a$ was computed the last time arm $a$ was updated, which is exactly when it had $n$ samples. On the event $\mathcal{E}$,
\begin{align*}
\widehat{\mu}_a \le \mu_a+\tau(n),
\end{align*}
and thus
\begin{align*}
U_a \le \mu_a+3\tau(n).
\end{align*}

Let $a^{\star}$ be any optimal arm, so $\mu_{a^{\star}}=\mu^{\star}$. Its current index is
\begin{align*}
U_{a^{\star}}=\widehat{\mu}_{a^{\star}}+2\tau(N_{a^{\star}}).
\end{align*}
Again on $\mathcal{E}$,
\begin{align*}
\widehat{\mu}_{a^{\star}} \ge \mu^{\star}-\tau(N_{a^{\star}}),
\end{align*}
which gives
\begin{align*}
U_{a^{\star}}
\ge
\mu^{\star}-\tau(N_{a^{\star}})+2\tau(N_{a^{\star}})
=
\mu^{\star}+\tau(N_{a^{\star}})
\ge
\mu^{\star}.
\end{align*}

Since arm $a$ is chosen by the argmax rule, we have $U_a\ge U_{a^{\star}}$. Therefore $\mu_a+3\tau(n)\ge \mu^{\star}$, and hence
\begin{align*}
3\tau(n)\ge \mu^{\star}-\mu_a=\Delta_a.
\end{align*}
Substituting the definition of $\tau(n)$ and rearranging yields
\begin{align*}
n
\le
\frac{9C_{\mathrm{ME}}\log(1/\delta_1)}{(\rho_1-\delta_1)^2\Delta_a^2}
=: H_a.
\end{align*}

\paragraph{Step 3: doubling batches bound total pulls of a suboptimal arm.}
Fix a suboptimal arm $a$. If it is never selected after initialization, then $N_a(T)=1$. Otherwise, consider the last time it is selected, and let $n$ be its count immediately before that selection. Since arm $a$ is selected at that time, Step 2 implies that on $\mathcal{E}$, $n\le H_a$.

The batch length is $B=\min\{n,\text{remaining rounds}\}\le n$. Therefore, after that final batch,
\begin{align*}
N_a(T)=n+B\le 2n\le 2H_a.
\end{align*}
Since initialization already contributes one pull, it is valid to write the slightly looser bound
\begin{align*}
N_a(T)\le 1+2H_a.
\end{align*}
Thus on $\mathcal{E}$,
\begin{align*}
\mathrm{Reg}(T)
&=
\sum_{a:\Delta_a>0}\Delta_a N_a(T) \\
&\le
\sum_{a:\Delta_a>0}\Delta_a(1+2H_a) \\
&=
\sum_{a:\Delta_a>0}
\left(
\Delta_a +
2\Delta_a\cdot \frac{9C_{\mathrm{ME}}\log(1/\delta_1)}{(\rho_1-\delta_1)^2\Delta_a^2}
\right) \\
&=
\sum_{a:\Delta_a>0}
\left(
\Delta_a +
\frac{18C_{\mathrm{ME}}\log(1/\delta_1)}{(\rho_1-\delta_1)^2\Delta_a}
\right).
\end{align*}

\paragraph{Step 4: expected regret.}
Decompose
\begin{align*}
\mathbb{E}[\mathrm{Reg}(T)]
=
\mathbb{E}[\mathrm{Reg}(T)\mathbf{1}\{\mathcal{E}\}]
+
\mathbb{E}[\mathrm{Reg}(T)\mathbf{1}\{\mathcal{E}^c\}].
\end{align*}
On $\mathcal{E}$, Step 3 yields
\begin{align*}
\mathbb{E}[\mathrm{Reg}(T)\mathbf{1}\{\mathcal{E}\}]
\le
\sum_{a:\Delta_a>0}
\left(
\Delta_a +
\frac{18C_{\mathrm{ME}}\log(1/\delta_1)}{(\rho_1-\delta_1)^2\Delta_a}
\right).
\end{align*}
On $\mathcal{E}^c$, always $\mathrm{Reg}(T)\le T$, so Step 1 implies
\begin{align*}
\mathbb{E}[\mathrm{Reg}(T)\mathbf{1}\{\mathcal{E}^c\}]
\le
T\mathbb{P}(\mathcal{E}^c)
\le
T\cdot \frac{1}{2T}
=
\frac{1}{2}.
\end{align*}
Therefore
\begin{align*}
\mathbb{E}[\mathrm{Reg}(T)]
\le
\sum_{a:\Delta_a>0}\Delta_a
+
\frac{18C_{\mathrm{ME}}\log(1/\delta_1)}{(\rho_1-\delta_1)^2}
\sum_{a:\Delta_a>0}\frac{1}{\Delta_a}
+\frac{1}{2}.
\end{align*}

Now use the assumption $\delta_1\le \rho_1/2$, which implies
\begin{align*}
\rho_1-\delta_1\ge \frac{\rho_1}{2},
\qquad\text{hence}\qquad
\frac{1}{(\rho_1-\delta_1)^2}\le \frac{4}{\rho_1^2}.
\end{align*}
Also, since $\delta_1=\frac{1}{2MT}$,
\begin{align*}
\log(1/\delta_1)=\log(2MT)=\log(2KT(1+L)).
\end{align*}
Using $\rho_1=\rho/M$, we obtain
\begin{align*}
\frac{\log(1/\delta_1)}{(\rho_1-\delta_1)^2}
\le
4\log(2KT(1+L))\cdot \frac{M^2}{\rho^2}
=
4\log(2KT(1+L))\cdot \frac{K^2(1+L)^2}{\rho^2}.
\end{align*}
Substituting this into the previous display gives
\begin{align*}
\mathbb{E}[\mathrm{Reg}(T)]
\le
\sum_{a:\Delta_a>0}\Delta_a
+
72C_{\mathrm{ME}}\cdot \frac{K^2(1+L)^2}{\rho^2}
\sum_{a:\Delta_a>0}\frac{\log(2KT(1+L))}{\Delta_a}
+\frac{1}{2}.
\end{align*}
Since $\log(2KT(1+L))\ge 1$ and $\frac{K^2(1+L)^2}{\rho^2}\ge 1$, the first term can be absorbed into the same scale. Therefore, for a universal constant $C>0$,
\begin{align*}
\mathbb{E}[\mathrm{Reg}(T)]
\le
C\cdot \frac{K^2(1+L)^2}{\rho^2}
\sum_{a:\Delta_a>0}
\left(
\Delta_a + \frac{\log(2KT(1+L))}{\Delta_a}
\right)
+\frac{1}{2}.
\end{align*}
\end{proof}

\section{Replicable Ridge Regression Proof}
\label{sec:reRidge-app}

\textbf{Theorem}~\ref{thm:replicable_ridge} 
\emph{
Fix $\delta\in(0,1)$ and $\rho\in(2\delta,1)$, and let $\widetilde{\theta}_n$ be the output of $\RepRidge$ (Algorithm~\ref{alg:replicable_ridge}) for fixed $\{(x_i,y_i)\}_{i=1}^n$. Then with probability at least $1-\delta$,
\vspace{-2.5ex}
\begin{align*}
\|\widetilde{\theta}_n-\theta^{\star}\|_{V_n}
\le
\beta_n(\delta)\left(1+\frac{d}{\rho-2\delta}\right).
\end{align*}
Moreover, the estimator is $\rho$-replicable: if $\RepRidge$ (Algorithm~\ref{alg:replicable_ridge}) is run twice on the same covariate sequence with the same shared random shift $u$ but with independent response noise, producing outputs $\widetilde{\theta}_n^{(1)}$ and $\widetilde{\theta}_n^{(2)}$, then $\mathbb{P}\!\left(\widetilde{\theta}_n^{(1)}=\widetilde{\theta}_n^{(2)}\right)\ge 1-\rho$.
}

\subsection{Proof of Theorem~\ref{thm:replicable_ridge}}
Let $d\ge 1$ and $n\ge 1$ and recall that we have a fixed covariate sequence $x_1,\dots,x_n\in\mathbb{R}^d,$. Fix $\lambda>0$ and define the Gram matrix
\[
V_n := \lambda I + \sum_{i=1}^n x_i x_i^\top.
\]
Following Assumption~\ref{asmp:ridge}, there is an unknown parameter $\theta^{\star}\in\mathbb{R}^d$ with $\|\theta^{\star}\|_2 \le S$ and Observations follow the linear model
$y_i = x_i^\top \theta^{\star} + \varepsilon_i, i=1,\dots,n,$
where $(\varepsilon_i)$ is conditionally $\sigma$-sub-Gaussian with respect to the filtration $(\mathcal{F}_i)$ generated by $\{x_1,\dots,x_i,y_1,\dots,y_i\}$, i.e.\ for every $i$ and every $\gamma\in\mathbb{R}$,
\[
\mathbb{E}\!\left[\exp(\gamma \varepsilon_i)\mid \mathcal{F}_{i-1}\right] \le \exp\!\left(\frac{\sigma^2\gamma^2}{2}\right).
\]
The ridge estimator is given by
\[
\hat{\theta}_n := V_n^{-1}\sum_{i=1}^n x_i y_i.
\]

We next define the standard ridge confidence radius.
For $\delta\in(0,1)$, define
\[
\beta_n(\delta)
:=
\sigma\sqrt{2\log\!\left(\frac{\det(V_n)^{1/2}}{\det(\lambda I)^{1/2}\,\delta}\right)}
\;+\;\sqrt{\lambda}\,S.
\]
A standard self-normalized martingale inequality implies \citep{abbasi2011improved}
\[
\mathbb{P}\!\left(\|\hat{\theta}_n-\theta^{\star}\|_{V_n} \le \beta_n(\delta)\right)\ \ge\ 1-\delta.
\]

\begin{lemma}[\textbf{High-probability confidence bound for} $\RepRidge$]
    \label{lemma:ridge-confidence}
    Suppose Assumption~\ref{asmp:ridge} holds and let $\tilde{\theta}_n$ be the output of $\RepRidge$ (Algorithm~\ref{alg:replicable_ridge}). Then with probability at least $1-\delta$,
    \[
        \|\tilde{\theta}_n-\theta^{\star}\|_{V_n}
        \le
        \beta_n(\delta) + \frac{\alpha}{2}\sqrt{d}
        =
        \beta_n(\delta)\left(1+\frac{d}{\rho-2\delta}\right).
    \]
\end{lemma}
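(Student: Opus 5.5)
The plan is to split the error by the triangle inequality in the $V_n$-norm. Writing $\tilde{\theta}_n-\theta^{\star} = (\tilde{\theta}_n-\hat{\theta}_n) + (\hat{\theta}_n-\theta^{\star})$, we obtain
\[
\|\tilde{\theta}_n-\theta^{\star}\|_{V_n} \le \|\tilde{\theta}_n-\hat{\theta}_n\|_{V_n} + \|\hat{\theta}_n-\theta^{\star}\|_{V_n}.
\]
The second term is controlled by the self-normalized bound of \citet{abbasi2011improved} stated above. On an event of probability at least $1-\delta$, it is at most $\beta_n(\delta)$. This is the only probabilistic ingredient, so the whole conclusion will hold on this same event.

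The first term is deterministic and needs only the whitening structure of Algorithm~\ref{alg:replicable_ridge}. Since $\tilde{\theta}_n = V_n^{-1/2}Q_{\alpha,u}(z_n)$ and $\hat{\theta}_n = V_n^{-1/2}z_n$, we have for any vector $w$ that $\|V_n^{-1/2}w\|_{V_n} = \|w\|_2$. Hence
\[
\|\tilde{\theta}_n-\hat{\theta}_n\|_{V_n} = \|Q_{\alpha,u}(z_n)-z_n\|_2 .
\]
Next I would check coordinatewise that midpoint rounding moves each coordinate by at most $\alpha/2$. Write $k=\lfloor (z_j-u_j)/\alpha\rfloor$. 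Then $z_j\in[k\alpha+u_j,(k+1)\alpha+u_j)$, and $Q_{\alpha,u}(z)_j$ is the midpoint of this interval, so $|Q_{\alpha,u}(z)_j - z_j|\le \alpha/2$. This holds for every shift $u$, not just almost surely. Summing the squares over $d$ coordinates gives $\|Q_{\alpha,u}(z_n)-z_n\|_2\le \frac{\alpha}{2}\sqrt{d}$.

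Combining the two bounds yields $\beta_n(\delta)+\frac{\alpha}{2}\sqrt d$ with probability at least $1-\delta$. For the closed form, substitute $\alpha = \frac{2\beta_n(\delta)\sqrt d}{\rho-2\delta}$ to get $\frac{\alpha}{2}\sqrt d = \frac{\beta_n(\delta)\,d}{\rho-2\delta}$, which matches the stated expression.

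One subtlety needs care: $\alpha$ depends on $\beta_n(\delta)$. However, $\beta_n(\delta)$ depends only on the fixed design through $\det V_n$, so $\alpha$ is deterministic given the covariates. The shift $u$ is independent of the noise, and since the rounding bound holds for every $u$, no extra failure probability is incurred. No step looks hard; the main point to state carefully is that the norm identity under whitening uses the symmetric square root $V_n^{1/2}$, which is well defined because $V_n\succ 0$ when $\lambda>0$.
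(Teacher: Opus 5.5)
Your proposal is correct and follows the paper's proof essentially step for step: the triangle inequality in the $V_n$-norm, the whitening identity reducing the rounding error to $\|Q_{\alpha,u}(z_n)-z_n\|_2\le \frac{\alpha}{2}\sqrt{d}$ via the coordinatewise midpoint bound, the self-normalized event of probability $1-\delta$, and the substitution of $\alpha$. Your remark that the rounding bound holds for every shift $u$, so no extra failure probability is incurred, is a small clarification that the paper leaves implicit.
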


\begin{proof}[Proof of Lemma~\ref{lemma:ridge-confidence}]
Let $\mathcal{E} := \{\|\hat{\theta}_n-\theta^{\star}\|_{V_n}\le \beta_n(\delta)\}$. Under Assumption~\ref{asmp:ridge}, the standard self-normalized confidence bound for ridge regression implies
\begin{align*}
\mathbb{P}(\mathcal{E})\ge 1-\delta.
\end{align*}

We now prove the claim in three steps.

\paragraph{Step 1: Bound the rounding error in the $V_n$-norm.}
Let $z_n := V_n^{1/2}\hat{\theta}_n$. By the definition of $\RepRidge$ (Algorithm~\ref{alg:replicable_ridge}),
\begin{align*}
V_n^{1/2}\tilde{\theta}_n = Q_{\alpha,u}(z_n).
\end{align*}
Therefore
\begin{align*}
\|\tilde{\theta}_n-\hat{\theta}_n\|_{V_n}
&=
\|V_n^{1/2}(\tilde{\theta}_n-\hat{\theta}_n)\|_2 \\
&=
\|Q_{\alpha,u}(z_n)-z_n\|_2.
\end{align*}
Now fix any coordinate $j\in[d]$. By construction, $Q_{\alpha,u}(z_n)_j$ is the midpoint of the unique interval of the form $[u_j+k\alpha,\,u_j+(k+1)\alpha)$ that contains $(z_n)_j$. Since this interval has length $\alpha$, its midpoint is at distance at most $\alpha/2$ from every point in the interval. Hence
\begin{align*}
|Q_{\alpha,u}(z_n)_j-(z_n)_j|\le \frac{\alpha}{2}.
\end{align*}
Summing over coordinates gives
\begin{align*}
\|Q_{\alpha,u}(z_n)-z_n\|_2^2
&=
\sum_{j=1}^d \bigl(Q_{\alpha,u}(z_n)_j-(z_n)_j\bigr)^2 \\
&\le
\sum_{j=1}^d \left(\frac{\alpha}{2}\right)^2 \\
&=
d\left(\frac{\alpha}{2}\right)^2.
\end{align*}
Taking square roots yields
\begin{align*}
\|Q_{\alpha,u}(z_n)-z_n\|_2 \le \frac{\alpha}{2}\sqrt{d}.
\end{align*}
Consequently,
\begin{align*}
\|\tilde{\theta}_n-\hat{\theta}_n\|_{V_n}
\le
\frac{\alpha}{2}\sqrt{d}.
\end{align*}

\paragraph{Step 2: Combine the statistical error and the rounding error.}
On the event $\mathcal{E}$, the triangle inequality gives
\begin{align*}
\|\tilde{\theta}_n-\theta^{\star}\|_{V_n}
&\le
\|\tilde{\theta}_n-\hat{\theta}_n\|_{V_n}
+
\|\hat{\theta}_n-\theta^{\star}\|_{V_n} \\
&\le
\frac{\alpha}{2}\sqrt{d}
+
\beta_n(\delta).
\end{align*}
Thus, on $\mathcal{E}$,
\begin{align*}
\|\tilde{\theta}_n-\theta^{\star}\|_{V_n}
\le
\beta_n(\delta)+\frac{\alpha}{2}\sqrt{d}.
\end{align*}

\paragraph{Step 3: Substitute the choice of $\alpha$.}
By the definition of the grid width used in $\RepRidge$, $\alpha = \frac{2\beta_n(\delta)\sqrt{d}}{\rho-2\delta}$. Therefore
\begin{align*}
\frac{\alpha}{2}\sqrt{d}
&=
\frac{1}{2}\cdot \frac{2\beta_n(\delta)\sqrt{d}}{\rho-2\delta}\cdot \sqrt{d} \\
&=
\beta_n(\delta)\frac{d}{\rho-2\delta}.
\end{align*}
Substituting this into the bound from Step 2 yields
\begin{align*}
\|\tilde{\theta}_n-\theta^{\star}\|_{V_n}
&\le
\beta_n(\delta)+\frac{\alpha}{2}\sqrt{d} \\
&=
\beta_n(\delta)\left(1+\frac{d}{\rho-2\delta}\right).
\end{align*}
Since $\mathbb{P}(\mathcal{E})\ge 1-\delta$, the stated bound holds with probability at least $1-\delta$.
\end{proof}

\begin{lemma}[\textbf{$\rho$-replicability of }$\RepRidge$] 
\label{lemma:Replicability_RepRidge}
Consider two independent noise sequences $(\varepsilon_i^{(1)})_{i=1}^n$ and $(\varepsilon_i^{(2)})_{i=1}^n$, generating two response sequences $y_i^{(k)} = x_i^\top\theta^{\star} + \varepsilon_i^{(k)}, k\in\{1,2\}$, with the same fixed covariates $x_1,\dots,x_n$. Fix $\rho \in (2\delta,1)$ and run $\RepRidge$ (Algorithm~\ref{alg:replicable_ridge}) twice using the same shared internal randomness $u$, producing the estimated parameter outputs $\tilde{\theta}_n^{(1)}$ and $\tilde{\theta}_n^{(2)}$. Then
\[
\mathbb{P}\!\left(\tilde{\theta}_n^{(1)}=\tilde{\theta}_n^{(2)}\right)\ \ge\ 1-\rho.
\]
\end{lemma}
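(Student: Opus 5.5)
The plan is to condition on the two good events and then use the standard randomly-shifted-grid argument. Let $\mathcal{E}_k := \{\|\hat{\theta}_n^{(k)}-\theta^{\star}\|_{V_n}\le \beta_n(\delta)\}$ for $k\in\{1,2\}$. Since the covariates are fixed and shared, $V_n$ and $\beta_n(\delta)$ are deterministic and identical in both runs, so $\alpha$ is also common. By the self-normalized bound of \citet{abbasi2011improved}, each $\mathcal{E}_k$ has probability at least $1-\delta$, so $\mathbb{P}(\mathcal{E}_1\cap\mathcal{E}_2)\ge 1-2\delta$.

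On $\mathcal{E}_1\cap\mathcal{E}_2$, set $z^{(k)} := V_n^{1/2}\hat{\theta}_n^{(k)}$. The triangle inequality gives
\[
\|z^{(1)}-z^{(2)}\|_2=\|\hat{\theta}_n^{(1)}-\hat{\theta}_n^{(2)}\|_{V_n}\le 2\beta_n(\delta).
\]
Because $V_n^{-1/2}$ is injective, $\tilde{\theta}_n^{(1)}=\tilde{\theta}_n^{(2)}$ if and only if $Q_{\alpha,u}(z^{(1)})=Q_{\alpha,u}(z^{(2)})$. This in turn holds if and only if, for every coordinate $j$, the points $z^{(1)}_j$ and $z^{(2)}_j$ lie in the same shifted interval $[u_j+k\alpha,u_j+(k+1)\alpha)$.

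The shift $u$ is internal randomness, drawn independently of the noise. So we condition on the noise, equivalently on $z^{(1)},z^{(2)}$, and integrate over $u$ alone. For a fixed pair of reals at distance $\Delta_j$, a uniform shift in $[0,\alpha)$ places a grid boundary between them with probability at most $\min\{1,\Delta_j/\alpha\}\le \Delta_j/\alpha$. A union bound over coordinates followed by Cauchy--Schwarz gives
\[
\mathbb{P}_u\bigl(Q_{\alpha,u}(z^{(1)})\ne Q_{\alpha,u}(z^{(2)})\bigr)\le \frac{\|z^{(1)}-z^{(2)}\|_1}{\alpha}\le \frac{\sqrt{d}\,\|z^{(1)}-z^{(2)}\|_2}{\alpha}\le \frac{2\beta_n(\delta)\sqrt{d}}{\alpha}=\rho-2\delta,
\]
where the last equality uses $\alpha=2\beta_n(\delta)\sqrt{d}/(\rho-2\delta)$.

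Taking expectation over the noise on $\mathcal{E}_1\cap\mathcal{E}_2$ and adding the $2\delta$ probability of its complement, the total disagreement probability is at most $(\rho-2\delta)+2\delta=\rho$.

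The only delicate point is justifying the conditioning, since $u$ must be independent of both response sequences. This holds because $u$ is sampled fresh inside the algorithm. The one-dimensional boundary-crossing bound is elementary, since the set of bad shifts modulo $\alpha$ is an interval of length $\Delta_j$.
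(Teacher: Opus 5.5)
Your proposal is correct and follows essentially the same argument as the paper: the union bound giving $\mathbb{P}(\mathcal{E}_1\cap\mathcal{E}_2)\ge 1-2\delta$, the triangle inequality in whitened coordinates, the per-coordinate shifted-grid crossing bound summed over coordinates and converted via $\|\cdot\|_1\le\sqrt{d}\,\|\cdot\|_2$, and the choice of $\alpha$, which together give a disagreement probability of at most $(\rho-2\delta)+2\delta=\rho$. You also say explicitly that $u$ is independent of the noise, and you bound the disagreement event intersected with $\mathcal{E}_1\cap\mathcal{E}_2$ rather than conditioned on it; both points make precise what the paper leaves implicit.
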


\begin{proof}[Proof of Lemma~\ref{lemma:Replicability_RepRidge}]
For $k\in\{1,2\}$, let $\hat{\theta}_n^{(k)}$ denote the ridge estimator computed from the response sequence $(y_i^{(k)})_{i=1}^n$. Since the covariates $x_1,\dots,x_n$ are the same in the two runs, the matrix $V_n$, the radius $\beta_n(\delta)$, and the grid width $\alpha$ are identical across the two executions. Both runs use the same shared internal randomness $u$.

Define
\begin{align*}
z^{(k)} := V_n^{1/2}\hat{\theta}_n^{(k)},
\qquad
\tilde{\theta}_n^{(k)} := V_n^{-1/2}Q_{\alpha,u}(z^{(k)}),
\qquad
k\in\{1,2\}.
\end{align*}
Since $V_n^{-1/2}$ is invertible, we have
\begin{align*}
\tilde{\theta}_n^{(1)}=\tilde{\theta}_n^{(2)}
\quad\Longleftrightarrow\quad
Q_{\alpha,u}(z^{(1)})=Q_{\alpha,u}(z^{(2)}).
\end{align*}

We now prove the claim in three steps.

\paragraph{Step 1: Bound the distance between the two whitened ridge estimators.}
For $k\in\{1,2\}$, define the confidence event
\begin{align*}
\mathcal{E}_k := \left\{\|\hat{\theta}_n^{(k)}-\theta^{\star}\|_{V_n}\le \beta_n(\delta)\right\}.
\end{align*}
Under Assumption~\ref{asmp:ridge}, the standard self-normalized confidence bound for ridge regression gives
\begin{align*}
\mathbb{P}(\mathcal{E}_k)\ge 1-\delta,
\qquad k\in\{1,2\}.
\end{align*}
Therefore, by a union bound,
\begin{align*}
\mathbb{P}(\mathcal{E}_1\cap \mathcal{E}_2)\ge 1-2\delta.
\end{align*}

On the event $\mathcal{E}_1\cap \mathcal{E}_2$, we have
\begin{align*}
\|z^{(1)}-z^{(2)}\|_2
&=
\|V_n^{1/2}(\hat{\theta}_n^{(1)}-\hat{\theta}_n^{(2)})\|_2 \\
&\le
\|V_n^{1/2}(\hat{\theta}_n^{(1)}-\theta^{\star})\|_2
+
\|V_n^{1/2}(\hat{\theta}_n^{(2)}-\theta^{\star})\|_2 \\
&=
\|\hat{\theta}_n^{(1)}-\theta^{\star}\|_{V_n}
+
\|\hat{\theta}_n^{(2)}-\theta^{\star}\|_{V_n} \\
&\le
2\beta_n(\delta).
\end{align*}

\paragraph{Step 2: Bound the probability that the two rounded vectors differ.}
Condition on fixed values of $z^{(1)}$ and $z^{(2)}$. If $Q_{\alpha,u}(z^{(1)})\neq Q_{\alpha,u}(z^{(2)})$, then there exists at least one coordinate $j\in[d]$ such that
\begin{align*}
Q_{\alpha,u}(z^{(1)})_j \neq Q_{\alpha,u}(z^{(2)})_j.
\end{align*}
Hence, by a union bound over coordinates,
\begin{align*}
\mathbb{P}\!\left(Q_{\alpha,u}(z^{(1)})\neq Q_{\alpha,u}(z^{(2)}) \mid z^{(1)},z^{(2)}\right)
\le
\sum_{j=1}^d
\mathbb{P}\!\left(
Q_{\alpha,u}(z^{(1)})_j \neq Q_{\alpha,u}(z^{(2)})_j
\mid z^{(1)},z^{(2)}
\right).
\end{align*}

Now fix a coordinate $j$. As a function of the shared random shift $u_j\sim \mathrm{Unif}([0,\alpha))$, the one-dimensional rounded outputs differ only if there is a grid boundary between $z_j^{(1)}$ and $z_j^{(2)}$. The grid boundaries are the points $u_j+k\alpha$ for $k\in\mathbb{Z}$. Over one period of length $\alpha$, the set of shifts $u_j$ for which such a boundary lies between $z_j^{(1)}$ and $z_j^{(2)}$ has Lebesgue measure at most $|z_j^{(1)}-z_j^{(2)}|$. Therefore
\begin{align*}
\mathbb{P}\!\left(
Q_{\alpha,u}(z^{(1)})_j \neq Q_{\alpha,u}(z^{(2)})_j
\mid z^{(1)},z^{(2)}
\right)
\le
\frac{|z_j^{(1)}-z_j^{(2)}|}{\alpha}.
\end{align*}
Summing over $j$ yields
\begin{align*}
\mathbb{P}\!\left(Q_{\alpha,u}(z^{(1)})\neq Q_{\alpha,u}(z^{(2)}) \mid z^{(1)},z^{(2)}\right)
&\le
\sum_{j=1}^d \frac{|z_j^{(1)}-z_j^{(2)}|}{\alpha} \\
&=
\frac{\|z^{(1)}-z^{(2)}\|_1}{\alpha} \\
&\le
\frac{\sqrt{d}\,\|z^{(1)}-z^{(2)}\|_2}{\alpha}.
\end{align*}
On the event $\mathcal{E}_1\cap \mathcal{E}_2$, Step 1 implies $\|z^{(1)}-z^{(2)}\|_2\le 2\beta_n(\delta)$, and hence
\begin{align*}
\mathbb{P}\!\left(Q_{\alpha,u}(z^{(1)})\neq Q_{\alpha,u}(z^{(2)}) \mid \mathcal{E}_1\cap \mathcal{E}_2\right)
\le
\frac{2\beta_n(\delta)\sqrt{d}}{\alpha}.
\end{align*}

\paragraph{Step 3: Conclude $\rho$-replicability.}
Using the equivalence
\begin{align*}
\tilde{\theta}_n^{(1)}\neq \tilde{\theta}_n^{(2)}
\quad\Longleftrightarrow\quad
Q_{\alpha,u}(z^{(1)})\neq Q_{\alpha,u}(z^{(2)}),
\end{align*}
we obtain
\begin{align*}
\mathbb{P}\!\left(\tilde{\theta}_n^{(1)}\neq \tilde{\theta}_n^{(2)}\right)
&=
\mathbb{P}\!\left(Q_{\alpha,u}(z^{(1)})\neq Q_{\alpha,u}(z^{(2)})\right) \\
&\le
\mathbb{P}\!\left((\mathcal{E}_1\cap \mathcal{E}_2)^c\right)
+
\mathbb{P}\!\left(
Q_{\alpha,u}(z^{(1)})\neq Q_{\alpha,u}(z^{(2)})
\mid
\mathcal{E}_1\cap \mathcal{E}_2
\right) \\
&\le
2\delta + \frac{2\beta_n(\delta)\sqrt{d}}{\alpha}.
\end{align*}
By the definition of the grid width in $\RepRidge$,
\begin{align*}
\alpha = \frac{2\beta_n(\delta)\sqrt{d}}{\rho-2\delta}.
\end{align*}
Substituting this into the previous display gives
\begin{align*}
\mathbb{P}\!\left(\tilde{\theta}_n^{(1)}\neq \tilde{\theta}_n^{(2)}\right)
\le
2\delta + \rho - 2\delta
=
\rho.
\end{align*}
Therefore
\begin{align*}
\mathbb{P}\!\left(\tilde{\theta}_n^{(1)}=\tilde{\theta}_n^{(2)}\right)\ge 1-\rho.
\end{align*}
This proves that $\RepRidge$ is $\rho$-replicable.
\end{proof}

\section{Core-Set Uniform Error Guarantee for RepRidge}
\label{sec:coreset-app}
\textbf{Proposition}~\ref{prop:error_bound}.
Suppose Assumption~\ref{ass:coreset-design} holds. Then there exists an algorithm that uses
    $M=\widetilde{O}\left(
    \frac{
    d^3(d+\log(1/\delta))
    }{
    \rho^2\epsilon^2
    }
    \right)$ total samples and outputs a $\rho$ replicable estimator $\widetilde\theta\in\mathbb R^d$ satisfying
    $$\sup_{a\in\mathcal A}
    |\langle a,\widetilde\theta-\theta^{\star}\rangle|
    \le
    \epsilon,
    \text{ with probability at least } 1-\delta.$$
\vspace{2ex}
\begin{remark}[Comparison with Lemma~9 of \citet{esfandiari2023replicable}]
Lemma~9 of \citet{esfandiari2023replicable} requires pulling every arm in the core set, whose size is $|C|=\widetilde{\cO}(d)$, and pays an arm-wise reproducible mean-estimation cost for each of these core actions. This leads to a total sample complexity of $\widetilde{\Omega}(d^5/(\rho^2\epsilon^2))$. Under the same core-set coverage condition, Proposition~\ref{prop:error_bound} obtains the same type of uniform prediction guarantee using $\widetilde{\cO}(d^4/(\rho^2\epsilon^2))$ samples. Thus, up to logarithmic factors, the vector-level $\RepRidge$ construction improves the sample complexity by a factor of $d$.
\end{remark}

\subsection{Proof of Proposition~\ref{prop:error_bound}}

\begin{proof}
Fix a design scale $N\ge 1$. For each action $a\in C$, pull action $a$
\begin{align*}
    n_a = \lceil N\pi(a)\rceil
\end{align*}
times. Let $x_1,\dots,x_M$ denote the resulting deterministic sequence of pulled actions. Thus each $x_t\in C\subseteq \mathcal A$, and
\begin{align*}
    M = \sum_{a\in C} n_a.
\end{align*}
Define the unregularized empirical design matrix
\begin{align*}
    G_N
    =
    \sum_{t=1}^M x_t x_t^\top
    =
    \sum_{a\in C} n_a aa^\top,
\end{align*}
and the regularized design matrix
\begin{align*}
    V_N
    =
    I + G_N
    =
    I+\sum_{a\in C} n_a aa^\top.
\end{align*}
Since $n_a=\lceil N\pi(a)\rceil \ge N\pi(a)$ for every $a\in C$, we have
\begin{align*}
    G_N
    =
    \sum_{a\in C} n_a aa^\top
    \succeq
    N\sum_{a\in C}\pi(a)aa^\top
    =
    N V(\pi).
\end{align*}
Since $V_N=I+G_N\succeq G_N$, it follows that
\begin{align*}
    V_N \succeq N V(\pi).
\end{align*}
Both matrices are positive definite, so inversion reverses the semidefinite order:
\begin{align*}
    V_N^{-1}
    \preceq
    \frac{1}{N}V(\pi)^{-1}.
\end{align*}
Therefore, for every $a\in\mathcal A$,
\begin{align*}
    \|a\|_{V_N^{-1}}^2
    =
    a^\top V_N^{-1}a
    \le
    \frac{1}{N}a^\top V(\pi)^{-1}a
    =
    \frac{1}{N}\|a\|_{V(\pi)^{-1}}^2.
\end{align*}
Using Assumption~\ref{ass:coreset-design},
\begin{align*}
    \sup_{a\in\mathcal A}
    \|a\|_{V_N^{-1}}^2
    \le
    \frac{4d}{N}.
\end{align*}
Hence,
\begin{align}
    \label{eq:uniform-design-radius}
    \sup_{a\in\mathcal A}
    \|a\|_{V_N^{-1}}
    \le
    2\sqrt{\frac{d}{N}}.
\end{align}

Now run $\RepRidge$ on the fixed design sequence $x_1,\dots,x_M$ with regularization parameter $\lambda=1$. The observations satisfy
\begin{align*}
    y_t
    =
    \langle x_t,\theta^{\star}\rangle+\eta_t,
    \qquad
    t=1,\dots,M,
\end{align*}
where the noise variables are independent, mean-zero, and $1$-sub-Gaussian. Let
\begin{align*}
    \delta_0 = \min\left\{\delta,\frac{\rho}{4}\right\}.
\end{align*}
By Theorem~\ref{thm:replicable_ridge}, with probability at least $1-\delta_0$,
\begin{align}
    \label{eq:repridge-ellipsoid-for-core-proof}
    \|\widetilde\theta-\theta^{\star}\|_{V_N}
    \le
    \beta_N(\delta_0)
    \left(
        1+\frac{d}{\rho-2\delta_0}
    \right),
\end{align}
where, since $\lambda=1$,
\begin{align*}
    \beta_N(\delta_0)
    =
    \sqrt{
        2\log\left(
            \frac{\det(V_N)^{1/2}}{\delta_0}
        \right)
    }
    +1.
\end{align*}
Equivalently,
\begin{align*}
    \beta_N(\delta_0)
    =
    \sqrt{
        \log\det(V_N)
        +
        2\log\left(\frac{1}{\delta_0}\right)
    }
    +1.
\end{align*}

We next upper bound $\log\det(V_N)$. Since $V_N=I+G_N$ and $G_N\succeq 0$, the trace-determinant inequality gives
\begin{align*}
    \det(I+G_N)
    \le
    \left(
        1+\frac{\operatorname{tr}(G_N)}{d}
    \right)^d.
\end{align*}
Moreover,
\begin{align*}
    \operatorname{tr}(G_N)
    =
    \operatorname{tr}\left(
        \sum_{t=1}^M x_t x_t^\top
    \right)
    =
    \sum_{t=1}^M \|x_t\|_2^2.
\end{align*}
Since $x_t\in\mathcal A$ and $\|x_t\|_2\le 1$, we have
\begin{align*}
    \operatorname{tr}(G_N)\le M.
\end{align*}
Therefore,
\begin{align}
    \label{eq:logdet-core-proof}
    \log\det(V_N)
    =
    \log\det(I+G_N)
    \le
    d\log\left(1+\frac{M}{d}\right).
\end{align}

We now control the uniform prediction error. Fix any $a\in\mathcal A$. By Cauchy-Schwarz in the $V_N$-geometry,
\begin{align*}
    |\langle a,\widetilde\theta-\theta^{\star}\rangle|
    &=
    |\langle V_N^{-1/2}a,V_N^{1/2}(\widetilde\theta-\theta^{\star})\rangle| \\
    &\le
    \|a\|_{V_N^{-1}}
    \|\widetilde\theta-\theta^{\star}\|_{V_N}.
\end{align*}
Taking the supremum over $a\in\mathcal A$ and using \eqref{eq:uniform-design-radius} and \eqref{eq:repridge-ellipsoid-for-core-proof}, on the probability-$(1-\delta_0)$ event,
\begin{align*}
    \sup_{a\in\mathcal A}
    |\langle a,\widetilde\theta-\theta^{\star}\rangle|
    \le
    2\sqrt{\frac{d}{N}}
    \left(
        1+\frac{d}{\rho-2\delta_0}
    \right)
    \beta_N(\delta_0).
\end{align*}
Since $\delta_0\le \rho/4$, we have $\rho-2\delta_0\ge \rho/2$. Hence,
\begin{align*}
    1+\frac{d}{\rho-2\delta_0}
    \le
    1+\frac{2d}{\rho}
    \le
    \frac{3d}{\rho},
\end{align*}
where the final inequality uses $d\ge 1$ and $\rho\in(0,1)$. Therefore,
\begin{align}
    \label{eq:uniform-error-before-N-choice}
    \sup_{a\in\mathcal A}
    |\langle a,\widetilde\theta-\theta^{\star}\rangle|
    \le
    \frac{6d}{\rho}
    \sqrt{\frac{d}{N}}
    \beta_N(\delta_0).
\end{align}

It remains to choose $N$ large enough. Since
\begin{align*}
    n_a=\lceil N\pi(a)\rceil \le N\pi(a)+1,
\end{align*}
we have
\begin{align*}
    M
    =
    \sum_{a\in C}n_a
    \le
    \sum_{a\in C}(N\pi(a)+1)
    =
    N+|C|.
\end{align*}
By Assumption~\ref{ass:coreset-design}, $|C|=\cO(d\log\log d)$. Thus,
\begin{align}
    \label{eq:M-bound-core-proof}
    M
    \le
    N+\cO(d\log\log d).
\end{align}
Combining \eqref{eq:logdet-core-proof} and \eqref{eq:M-bound-core-proof}, there is a universal constant $c>0$ such that
\begin{align*}
    \beta_N(\delta_0)
    \le
    \sqrt{
        d\log\left(
            1+\frac{N+c d\log\log d}{d}
        \right)
        +
        2\log\left(\frac{1}{\delta_0}\right)
    }
    +1.
\end{align*}
Define
\begin{align*}
    B_N
    =
    d\log\left(
        1+\frac{N+c d\log\log d}{d}
    \right)
    +
    2\log\left(\frac{1}{\delta_0}\right).
\end{align*}
Then $\beta_N(\delta_0)\le \sqrt{B_N}+1$. Since $(\sqrt{x}+1)^2\le 2x+2$ for all $x\ge 0$,
\begin{align*}
    \beta_N(\delta_0)^2
    \le
    2B_N+2.
\end{align*}
Squaring \eqref{eq:uniform-error-before-N-choice}, we obtain
\begin{align*}
    \left(
        \frac{6d}{\rho}
        \sqrt{\frac{d}{N}}
        \beta_N(\delta_0)
    \right)^2
    =
    \frac{36d^3}{\rho^2 N}
    \beta_N(\delta_0)^2
    \le
    \frac{72d^3}{\rho^2 N}(B_N+1).
\end{align*}
Thus, if
\begin{align}
    \label{eq:N-sufficient-core-proof}
    N
    \ge
    \frac{72d^3}{\rho^2\epsilon^2}(B_N+1),
\end{align}
then
\begin{align*}
    \sup_{a\in\mathcal A}
    |\langle a,\widetilde\theta-\theta^{\star}\rangle|
    \le
    \epsilon.
\end{align*}
The event on which this holds has probability at least $1-\delta_0$, and since $\delta_0\le \delta$, it also has probability at least $1-\delta$.

We next verify replicability. The design sequence $x_1,\dots,x_M$ is deterministic because $C$, $\pi$, and the counts $n_a=\lceil N\pi(a)\rceil$ are fixed before observing any rewards. Consider two executions using the same design sequence and the same internal randomness in $\RepRidge$, but independent reward noise. By Theorem~\ref{thm:replicable_ridge},
\begin{align*}
    \mathbb P\left(
        \widetilde\theta^{(1)}
        =
        \widetilde\theta^{(2)}
    \right)
    \ge
    1-\rho.
\end{align*}
Therefore, the estimator is $\rho$-replicable. Finally, \eqref{eq:N-sufficient-core-proof} is solved, up to logarithmic factors, by
\begin{align*}
    N
    =
    \widetilde{\cO}\left(
        \frac{
            d^3(d+\log(1/\delta))
        }{
            \rho^2\epsilon^2
        }
    \right).
\end{align*}
Using $M\le N+\cO(d\log\log d)$, the total number of samples also satisfies
\begin{align*}
    M
    =
    \widetilde{\cO}\left(
        \frac{
            d^3(d+\log(1/\delta))
        }{
            \rho^2\epsilon^2
        }
    \right).
\end{align*}
This proves the proposition.
\end{proof}

\section{Replicable Linear Bandit Proof}
\label{sec:repLinUCB-app}
\textbf{Theorem~\ref{thm:replinucb_main}}.
Suppose $\RepLinUCB$ (Algorithm~\ref{alg:replicable-linear-ucb}) is run with
$B\!=\!\left\lceil d\log\!\left(1+\frac{TL^2}{\lambda d}\right)\right\rceil,
q=\left(1+\frac{TL^2}{\lambda d}\right)^{\frac{d}{B}}\!,
\delta_b=\frac{\delta}{B},
\rho_b=\frac{\rho}{B}$,
for all $b\in\{0,\dots,B-1\}$, and assume $\rho>3\delta$. Then $\RepLinUCB$ (Algorithm~\ref{alg:replicable-linear-ucb}) is $\rho$-replicable, i.e., if the algorithm is run twice on the action sets $(A_{t})_{t=1}^{T}$ with shared internal randomness and independent noise realizations, producing action sequences $(a_t^{(1)})_{t=1}^T$ and $(a_t^{(2)})_{t=1}^T$, then
$\mathbb{P}\!\left(\forall t\in[T]:\ a_t^{(1)}=a_t^{(2)}\right)\ge 1-\rho$.
Moreover, there exists a constant $C>0$ such that, with probability at least $1-\delta$, the regret satisfies
\begin{align*}
\mathrm{Reg_{LB}}(T)
\le
C\left(
\sigma\sqrt{d\log\!\left(1+\frac{TL^2}{\lambda d}\right)+\log\!\left(\frac{B}{\delta}\right)}
+\sqrt{\lambda}\,S
\right)
\left(1+\frac{dB}{\rho}\right)
\sqrt{T\,d\log\!\left(1+\frac{TL^2}{\lambda d}\right)}.
\end{align*}

\subsection{Proof of Theorem~\ref{thm:replinucb_main}}

Recall the sequence of action sets $(X_t)_{t=1}^T$ is fixed in advance by an oblivious adversary.

We maintain, for $t\ge 1$,
\begin{align*}
V_t:=\lambda I+\sum_{s=1}^{t-1}x_sx_s^\top,
\qquad
b_t:=\sum_{s=1}^{t-1}x_sr_s,
\qquad
\hat{\theta}_t:=V_t^{-1}b_t.
\end{align*}

We fix a realized batch grid $1=t_0<t_1<\dots<t_B=T+1$
and per-batch budgets $\delta_b\in(0,1)$ and $\rho_b\in(0,1)$ for $b\in\{0,1,\dots,B-1\}$ satisfying $\rho_b>2\delta_b$. At time $t_b$, let $n_b:=t_b-1$ and run replicable ridge on the past data $\{(x_s,r_s)\}_{s=1}^{n_b}$:
\begin{align*}
\tilde{\theta}_b \leftarrow {\RepRidge}\!\left(\lambda,\delta_b,\rho_b;\{(x_s,r_s)\}_{s=1}^{n_b}\right).
\end{align*}
Define, for $t\ge 1$ and $\delta\in(0,1)$,
\begin{align*}
\beta_t(\delta)
:=
\sigma\sqrt{2\log\!\left(\frac{\det(V_t)^{1/2}}{\det(\lambda I)^{1/2}}\cdot\frac{1}{\delta}\right)}
+\sqrt{\lambda}\,S,
\end{align*}
and define the batch radius
\begin{align*}
\tilde{\beta}_b
:=
\beta_{t_b}(\delta_b)\left(1+\frac{d}{\rho_b-2\delta_b}\right).
\end{align*}

Then we can show that the estimated parameter $\tilde{\theta}_b$ by $\RepRidge$ satisfies a high probability concentration concentration around the true parameter $\theta^*$ as follows.
\begin{lemma}[\textbf{Batch-start concentration for $\tilde{\theta}_b$}]
\label{lem:batch_start_concentration}
Fix any realized batch grid $(t_b)_{b=0}^B$ and budgets $(\delta_b,\rho_b)_{b=0}^{B-1}$ with $\rho_b>2\delta_b$ for all $b\in\{0,\dots,B-1\}$. For each batch $b\in\{0,\dots,B-1\}$, define the following event
\begin{align*}
\mathcal{E}_b
:=
\left\{
\|\hat{\theta}_{t_b}-\theta^{\star}\|_{V_{t_b}}
\le
\beta_{t_b}(\delta_b)
\right\}.
\end{align*}
Then, for every $b\in\{0,\dots,B-1\}$, $\mathbb{P}(\mathcal{E}_b)\ge 1-\delta_b$. Moreover, on $\mathcal{E}_b$,
\begin{align*}
\|\tilde{\theta}_b-\theta^{\star}\|_{V_{t_b}}
\le
\tilde{\beta}_b
:=
\beta_{t_b}(\delta_b)\left(1+\frac{d}{\rho_b-2\delta_b}\right).
\end{align*}
Finally, with $\mathcal{E}:=\bigcap_{b=0}^{B-1}\mathcal{E}_b$,
we have $\mathbb{P}(\mathcal{E})\ge 1-\sum_{b=0}^{B-1}\delta_b$, and on $\mathcal{E}$ all of the above inequalities hold simultaneously for every batch $b\in\{0,\dots,B-1\}$.
\end{lemma}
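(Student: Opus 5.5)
The proof has three parts: the probability bound for each $\mathcal{E}_b$, the deterministic rounding bound on $\mathcal{E}_b$, and a union bound over batches.

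\textbf{Step 1 (probability of $\mathcal{E}_b$).} Here the design is adaptive, so we cannot simply quote the fixed-design statement of Theorem~\ref{thm:replicable_ridge}. Instead we use the self-normalized martingale bound of \citet{abbasi2011improved} directly. Under Assumption~\ref{asmp:linear_bandit_rewards}, the actions $a_s$ are $\mathcal{F}_{s-1}$-measurable (together with $A_s$) and the noise $\varepsilon_s$ is conditionally $\sigma$-sub-Gaussian. That theorem then gives, for a fixed $\delta_b$, that with probability at least $1-\delta_b$,
\[
\|\hat{\theta}_t-\theta^{\star}\|_{V_t}\le \beta_t(\delta_b) \quad \text{simultaneously for all } t\ge 1.
\]
This uniform-in-time form matters because the batch-start times $t_b$ are random stopping times, being determined by the determinant trigger on the realized data. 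Since the event holds for all $t$, it holds in particular at $t=t_b$. This yields $\mathbb{P}(\mathcal{E}_b)\ge 1-\delta_b$ without conditioning on the grid. If one instead treats the grid as fixed, the same conclusion follows even more directly.

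\textbf{Step 2 (deterministic bound on $\mathcal{E}_b$).} This step repeats Steps 1--3 of Lemma~\ref{lemma:ridge-confidence}, which used no probabilistic input beyond the ridge confidence event. Whitening gives $V_{t_b}^{1/2}\tilde{\theta}_b=Q_{\alpha,u}(V_{t_b}^{1/2}\hat{\theta}_{t_b})$. The coordinatewise midpoint rounding error is at most $\alpha/2$, so
\[
\|\tilde{\theta}_b-\hat{\theta}_{t_b}\|_{V_{t_b}}\le \tfrac{\alpha}{2}\sqrt{d},
\qquad
\alpha=\frac{2\beta_{t_b}(\delta_b)\sqrt{d}}{\rho_b-2\delta_b}.
\]
Combining this with the event $\mathcal{E}_b$ by the triangle inequality gives exactly $\tilde{\beta}_b$. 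This holds for every realization of $u$.

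\textbf{Step 3 (all batches at once).} A union bound over $b$ gives
\[
\mathbb{P}(\mathcal{E}^c)\le\sum_b\delta_b.
\]
On $\mathcal{E}$, Step 2 applies to every batch simultaneously.

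\textbf{Main obstacle.} The only delicate point is Step 1: justifying the confidence bound at data-dependent times $t_b$ with an adaptively chosen design. The uniform-in-time version of the self-normalized inequality resolves this. Everything else is a direct reuse of the fixed-design argument.
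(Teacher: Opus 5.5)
Your proof is correct and matches the paper's argument step for step: the self-normalized ridge bound gives $\mathbb{P}(\mathcal{E}_b)\ge 1-\delta_b$, the whitened coordinatewise rounding error is at most $\frac{\alpha}{2}\sqrt{d}$, the triangle inequality with the chosen grid width gives $\tilde{\beta}_b$, and a union bound over batches finishes. Your explicit use of the uniform-in-time form of the self-normalized inequality is a welcome refinement, since the paper just fixes the realized grid and applies the bound at $t_b$ without addressing that the batch starts are data-dependent.
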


\begin{proof}[Proof of Lemma~\ref{lem:batch_start_concentration}]
Fix a batch $b\in\{0,\dots,B-1\}$. Let
\begin{align*}
\hat{\theta}_{t_b}=V_{t_b}^{-1}b_{t_b}
\qquad\text{and}\qquad
z_b:=V_{t_b}^{1/2}\hat{\theta}_{t_b}.
\end{align*}
By the definition of $\RepRidge$ at batch start $t_b$, the grid width is
\begin{align*}
\alpha_b:=\frac{2\,\beta_{t_b}(\delta_b)\sqrt{d}}{\rho_b-2\delta_b},
\end{align*}
and if $u_b$ denotes the shared random shift used at batch $b$, then
\begin{align*}
V_{t_b}^{1/2}\tilde{\theta}_b
=
Q_{\alpha_b,u_b}(z_b).
\end{align*}

We prove the claim in four steps.

\paragraph{Step 1: Batch-start ridge confidence for $\hat{\theta}_{t_b}$.}
We first show that
\begin{align*}
\mathbb{P}(\mathcal{E}_b)\ge 1-\delta_b.
\end{align*}
Set $n_b:=t_b-1$. For each $s\le n_b$, the action $a_s$ is chosen using only information available before observing the noise at round $s$, so the played feature vector $x_s$ is predictable with respect to the natural filtration. Under Assumption~\ref{asmp:ridge}, the noise sequence is conditionally $\sigma$-sub-Gaussian, and therefore the standard self-normalized martingale inequality for ridge regression yields
\begin{align*}
\mathbb{P}\!\left(
\|\hat{\theta}_{t_b}-\theta^{\star}\|_{V_{t_b}}
\le
\sigma\sqrt{2\log\!\left(
\frac{\det(V_{t_b})^{1/2}}{\det(\lambda I)^{1/2}}\cdot \frac{1}{\delta_b}
\right)}
+\sqrt{\lambda}\,\|\theta^{\star}\|_2
\right)
\ge
1-\delta_b.
\end{align*}
Using $\|\theta^{\star}\|_2\le S$, the quantity inside the probability is exactly $\beta_{t_b}(\delta_b)$. Hence
\begin{align*}
\mathbb{P}(\mathcal{E}_b)\ge 1-\delta_b.
\end{align*}

\paragraph{Step 2: Deterministic rounding error in the $V_{t_b}$-norm.}
We now bound $\|\tilde{\theta}_b-\hat{\theta}_{t_b}\|_{V_{t_b}}$. By construction,
\begin{align*}
\|\tilde{\theta}_b-\hat{\theta}_{t_b}\|_{V_{t_b}}
&=
\|V_{t_b}^{1/2}(\tilde{\theta}_b-\hat{\theta}_{t_b})\|_2 \\
&=
\|Q_{\alpha_b,u_b}(z_b)-z_b\|_2.
\end{align*}
For each coordinate $j\in[d]$, the quantity $Q_{\alpha_b,u_b}(z_b)_j$ is the midpoint of the unique interval of length $\alpha_b$ in the shifted grid determined by $u_b$ that contains $(z_b)_j$. Therefore
\begin{align*}
|Q_{\alpha_b,u_b}(z_b)_j-(z_b)_j|\le \frac{\alpha_b}{2}.
\end{align*}
Summing over coordinates gives
\begin{align*}
\|Q_{\alpha_b,u_b}(z_b)-z_b\|_2^2
&=
\sum_{j=1}^d \bigl(Q_{\alpha_b,u_b}(z_b)_j-(z_b)_j\bigr)^2 \\
&\le
\sum_{j=1}^d \left(\frac{\alpha_b}{2}\right)^2 \\
&=
d\left(\frac{\alpha_b}{2}\right)^2.
\end{align*}
Taking square roots yields
\begin{align*}
\|\tilde{\theta}_b-\hat{\theta}_{t_b}\|_{V_{t_b}}
\le
\frac{\alpha_b}{2}\sqrt{d}.
\end{align*}

\paragraph{Step 3: Confidence bound for $\tilde{\theta}_b$.}
On the event $\mathcal{E}_b$, the triangle inequality gives
\begin{align*}
\|\tilde{\theta}_b-\theta^{\star}\|_{V_{t_b}}
&\le
\|\hat{\theta}_{t_b}-\theta^{\star}\|_{V_{t_b}}
+
\|\tilde{\theta}_b-\hat{\theta}_{t_b}\|_{V_{t_b}} \\
&\le
\beta_{t_b}(\delta_b)+\frac{\alpha_b}{2}\sqrt{d}.
\end{align*}
Substituting the definition of $\alpha_b$,
\begin{align*}
\beta_{t_b}(\delta_b)+\frac{\alpha_b}{2}\sqrt{d}
&=
\beta_{t_b}(\delta_b)
+
\frac{1}{2}\cdot
\frac{2\,\beta_{t_b}(\delta_b)\sqrt{d}}{\rho_b-2\delta_b}\cdot \sqrt{d} \\
&=
\beta_{t_b}(\delta_b)\left(1+\frac{d}{\rho_b-2\delta_b}\right) \\
&=
\tilde{\beta}_b.
\end{align*}
Hence, on $\mathcal{E}_b$,
\begin{align*}
\|\tilde{\theta}_b-\theta^{\star}\|_{V_{t_b}}
\le
\tilde{\beta}_b.
\end{align*}

% To obtain the prediction form, fix any $x\in\mathbb{R}^d$. Then
% \begin{align*}
% |\langle x,\tilde{\theta}_b-\theta^{\star}\rangle|
% &=
% |\langle V_{t_b}^{-1/2}x,\;V_{t_b}^{1/2}(\tilde{\theta}_b-\theta^{\star})\rangle| \\
% &\le
% \|V_{t_b}^{-1/2}x\|_2\,
% \|V_{t_b}^{1/2}(\tilde{\theta}_b-\theta^{\star})\|_2 \\
% &=
% \|x\|_{V_{t_b}^{-1}}\,
% \|\tilde{\theta}_b-\theta^{\star}\|_{V_{t_b}} \\
% &\le
% \tilde{\beta}_b\,\|x\|_{V_{t_b}^{-1}}.
% \end{align*}

\paragraph{Step 4: Simultaneous validity over all batches.}
Define
\begin{align*}
\mathcal{E}:=\bigcap_{b=0}^{B-1}\mathcal{E}_b.
\end{align*}
By Step 1, for every batch $b$,
\begin{align*}
\mathbb{P}(\mathcal{E}_b^c)\le \delta_b.
\end{align*}
Therefore, by a union bound,
\begin{align*}
\mathbb{P}(\mathcal{E}^c)
=
\mathbb{P}\!\left(\bigcup_{b=0}^{B-1}\mathcal{E}_b^c\right)
\le
\sum_{b=0}^{B-1}\delta_b.
\end{align*}
Equivalently,
\begin{align*}
\mathbb{P}(\mathcal{E})\ge 1-\sum_{b=0}^{B-1}\delta_b.
\end{align*}
On the event $\mathcal{E}$, we have $\mathcal{E}_b$ for every batch $b$, and therefore all the confidence and prediction inequalities established in Step 3 hold simultaneously for all $b\in\{0,\dots,B-1\}$.
\end{proof}
Using the concentration of the replicable parameter at the start of each batch $\tilde{\theta}_{t_{b}}$ we can show that the following bound on the regret holds.
\begin{lemma}[\textbf{Regret bound for $\RepLinUCB$ on a realized determinant batch grid}]
\label{lem:replinucb_regret_realized_grid}
Run $\RepLinUCB$ (Algorithm~\ref{alg:replicable-linear-ucb}) with parameters $(\lambda,T,B,q)$ and budgets $(\delta_b,\rho_b)_{b=0}^{B-1}$. Let the realized batch grid be $1=t_0<t_1<\cdots<t_{B'}=T+1,
B'\le B.$
Define
\begin{align*}
\widetilde{\beta}_{\max}
:=
\max_{b\in\{0,\dots,B'-1\}}\widetilde{\beta}_b,
\qquad
M_T
:=
\max\!\left\{
\sqrt{q},\
\sqrt{\frac{\det(V_{T+1})}{q^{B-1}\det(\lambda I_d)}}
\right\}.
\end{align*}
Then with probability at least $1-\sum_{b=0}^{B-1}\delta_b$,
\begin{align*}
\mathrm{Reg_{LB}}(T)
\le
\max\{1,\,2\widetilde{\beta}_{\max}\}\,M_T\,\sqrt{2T D_T}.
\end{align*}
\end{lemma}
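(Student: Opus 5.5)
We work on the event $\mathcal{E}=\bigcap_b\mathcal{E}_b$ of Lemma~\ref{lem:batch_start_concentration}, which has probability at least $1-\sum_b\delta_b$. On $\mathcal{E}$, $\|\widetilde{\theta}_b-\theta^{\star}\|_{V_{t_b}}\le\widetilde{\beta}_b$ for every realized batch $b$. We take $D_T$ to be the usual log-determinant quantity $\log\bigl(\det(V_{T+1})/\det(\lambda I_d)\bigr)$, or its bound $d\log(1+TL^2/(\lambda d))$.

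\textbf{Per-round regret.} Fix $t$ in batch $b$, so $t_b\le t<t_{b+1}$. By Cauchy--Schwarz in the $V_{t_b}$ geometry, for every $a\in A_t$,
\[
\langle a,\theta^{\star}\rangle\le\langle a,\widetilde{\theta}_b\rangle+\widetilde{\beta}_b\|a\|_{V_{t_b}^{-1}}.
\]
Apply this at $a=a_t^{\star}$ and use the maximizing property of $a_t$. Then use the reverse Cauchy--Schwarz bound at $a_t$. This gives
\[
r_t:=\langle a_t^{\star}-a_t,\theta^{\star}\rangle\le 2\widetilde{\beta}_b\|a_t\|_{V_{t_b}^{-1}}.
\]
Combining with the trivial bound $r_t\le 2$, which comes from bounded means (or $2LS$ in general, absorbed into the $\max\{1,\cdot\}$ after normalization), we get $r_t\le\max\{1,2\widetilde{\beta}_{\max}\}\min\{1,\|a_t\|_{V_{t_b}^{-1}}\}$, up to the constant convention.

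\textbf{Stale-to-current norm.} Since $V_t\succeq V_{t_b}$, we have $\|x\|^2_{V_{t_b}^{-1}}\le\frac{\det V_t}{\det V_{t_b}}\|x\|^2_{V_t^{-1}}$. This is the standard determinant-ratio lemma: the eigenvalues of $V_{t_b}^{-1/2}V_tV_{t_b}^{-1/2}$ are all at least $1$, so the largest of them is at most their product. Two cases arise.

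\emph{Case 1: the trigger is still available, i.e.\ $b\le B-2$.} The trigger has not fired before $t$ within the batch, so $\det V_t\le q\det V_{t_b}$ and the ratio is at most $q$.

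\emph{Case 2: the final forced batch, $b=B-1$.} There the trigger is disabled. Each of the previous $B-1$ triggers multiplied the determinant by more than $q$, so $\det V_{t_{B-1}}\ge q^{B-1}\det(\lambda I)$. Hence the ratio is at most $\det(V_{T+1})/(q^{B-1}\det(\lambda I))$.

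If $B'<B$, Case 1 covers every round. In all cases $\|a_t\|_{V_{t_b}^{-1}}\le M_T\|a_t\|_{V_t^{-1}}$, with $M_T$ as defined in the statement. Since $M_T\ge 1$, we also get $\min\{1,M_Tx\}\le M_T\min\{1,x\}$.

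\textbf{Summation.} Summing the per-round bound and applying Cauchy--Schwarz,
\[
\mathrm{Reg}_{\mathrm{LB}}(T)\le\max\{1,2\widetilde{\beta}_{\max}\}M_T\sqrt{T\sum_t\min\{1,\|a_t\|^2_{V_t^{-1}}\}}.
\]
The elliptical potential lemma of \citet{abbasi2011improved} gives $\sum_t\min\{1,\|a_t\|^2_{V_t^{-1}}\}\le 2\log\frac{\det V_{T+1}}{\det\lambda I}=2D_T$, which yields the claimed bound.

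\textbf{Main obstacle.} The delicate step is the stale-norm comparison in the final batch, where the trigger is switched off. This step is what forces the second term in $M_T$. One must also be careful that the concentration event in Lemma~\ref{lem:batch_start_concentration} is applied at data-dependent batch-start times. I would handle this by invoking the anytime (stopping-time-uniform) self-normalized bound of \citet{abbasi2011improved}, so that the fixed-grid lemma remains valid for the realized grid.
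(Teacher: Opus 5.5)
Your proposal is correct and matches the paper's proof step for step: optimism on the batch-start event $\mathcal{E}$ gives $2\widetilde{\beta}_{b}\|a_t\|_{V_{t_b}^{-1}}$; the determinant-ratio comparison is split into trigger-active batches (factor $\sqrt{q}$) and the forced final batch (factor $\sqrt{\det(V_{T+1})/(q^{B-1}\det(\lambda I_d))}$); and the sum is closed by Cauchy--Schwarz with $\sum_t\min\{1,\|a_t\|^2_{V_t^{-1}}\}\le 2D_T$. One point of care: the exact factor $\max\{1,2\widetilde{\beta}_{\max}\}$ needs instantaneous regret at most $1$, which the paper gets from mean rewards in $[0,1]$, whereas your trivial bound $r_t\le 2$ only gives $\max\{2,2\widetilde{\beta}_{\max}\}$; your use of the anytime self-normalized bound at the data-dependent times $t_b$ is exactly what the paper's batch-start concentration lemma relies on implicitly.
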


\begin{proof}
Let
\begin{align*}
\mathcal{E}
:=
\bigcap_{b=0}^{B'-1}\mathcal{E}_b,
\qquad
\mathcal{E}_b
:=
\left\{
\|\hat{\theta}_{t_b}-\theta^{\star}\|_{V_{t_b}}
\le
\beta_{t_b}(\delta_b)
\right\}.
\end{align*}
By Lemma~\ref{lem:batch_start_concentration}, we have
\begin{align*}
\mathbb{P}(\mathcal{E})
\ge
1-\sum_{b=0}^{B'-1}\delta_b
\ge
1-\sum_{b=0}^{B-1}\delta_b,
\end{align*}
and on $\mathcal{E}$, for every realized batch $b\in\{0,\dots,B'-1\}$,
\begin{align*}
\|\tilde{\theta}_b-\theta^{\star}\|_{V_{t_b}}
\le
\widetilde{\beta}_b,
\end{align*}
and equivalently, for every $x\in\mathbb{R}^d$,
\begin{align*}
|\langle x,\tilde{\theta}_b-\theta^{\star}\rangle|
\le
\widetilde{\beta}_b\|x\|_{V_{t_b}^{-1}}.
\end{align*}

Fix the event $\mathcal{E}$ in what follows. We prove the regret bound in three steps.

\paragraph{Step 1: Reduce the regret to stale batch-start norms.}
For each round $t\in[T]$, let $b(t)\in\{0,\dots,B'-1\}$ denote the realized batch index such that $t\in[t_{b(t)},\,t_{b(t)+1}-1]$. Let $x_t^{\star}\in\arg\max_{x\in A_t}\langle x,\theta^{\star}\rangle$
denote an optimal action at round $t$. Since the algorithm uses the batch-start parameter $\tilde{\theta}_{b(t)}$ and radius $\widetilde{\beta}_{b(t)}$ throughout batch $b(t)$, the chosen action $a_t$ satisfies
\begin{align*}
\langle a_t,\tilde{\theta}_{b(t)}\rangle
+
\widetilde{\beta}_{b(t)}\|a_t\|_{V_{t_{b(t)}}^{-1}}
\ge
\langle x_t^{\star},\tilde{\theta}_{b(t)}\rangle
+
\widetilde{\beta}_{b(t)}\|x_t^{\star}\|_{V_{t_{b(t)}}^{-1}}.
\end{align*}
Moreover, on $\mathcal{E}$,
\begin{align*}
\langle x_t^{\star},\theta^{\star}\rangle
&\le
\langle x_t^{\star},\tilde{\theta}_{b(t)}\rangle
+
\widetilde{\beta}_{b(t)}\|x_t^{\star}\|_{V_{t_{b(t)}}^{-1}}, \\
\langle a_t,\theta^{\star}\rangle
&\ge
\langle a_t,\tilde{\theta}_{b(t)}\rangle
-
\widetilde{\beta}_{b(t)}\|a_t\|_{V_{t_{b(t)}}^{-1}}.
\end{align*}
Subtracting the two displays gives
\begin{align*}
\langle x_t^{\star},\theta^{\star}\rangle-\langle a_t,\theta^{\star}\rangle
\le
2\widetilde{\beta}_{b(t)}\|a_t\|_{V_{t_{b(t)}}^{-1}}.
\end{align*}
Also, since rewards lie in $[0,1]$, the instantaneous regret is always at most $1$. Therefore
\begin{align*}
\langle x_t^{\star},\theta^{\star}\rangle-\langle a_t,\theta^{\star}\rangle
\le
\min\{1,\;2\widetilde{\beta}_{b(t)}\|a_t\|_{V_{t_{b(t)}}^{-1}}\}.
\end{align*}
Since $\widetilde{\beta}_{b(t)}\le \widetilde{\beta}_{\max}$, we obtain
\begin{align*}
\langle x_t^{\star},\theta^{\star}\rangle-\langle a_t,\theta^{\star}\rangle
\le
\min\{1,\;2\widetilde{\beta}_{\max}\|a_t\|_{V_{t_{b(t)}}^{-1}}\}.
\end{align*}
Now for any $\kappa>0$ and any $u\ge 0$,
\begin{align*}
\min\{1,\kappa u\}\le \max\{1,\kappa\}\min\{1,u\}.
\end{align*}
Applying this with $\kappa=2\widetilde{\beta}_{\max}$ and $u=\|a_t\|_{V_{t_{b(t)}}^{-1}}$ yields
\begin{align*}
\langle x_t^{\star},\theta^{\star}\rangle-\langle a_t,\theta^{\star}\rangle
\le
\max\{1,2\widetilde{\beta}_{\max}\}\min\{1,\|a_t\|_{V_{t_{b(t)}}^{-1}}\}.
\end{align*}
Summing over $t$ gives
\begin{align*}
\mathrm{Reg_{LB}}(T)
\le
\max\{1,2\widetilde{\beta}_{\max}\}
\sum_{t=1}^T \min\{1,\|a_t\|_{V_{t_{b(t)}}^{-1}}\}.
\end{align*}

\paragraph{Step 2: Compare stale norms to current norms using determinant growth.}
We show that for every round $t\in[T]$,
\begin{align*}
\|a_t\|_{V_{t_{b(t)}}^{-1}}
\le
M_T\,\|a_t\|_{V_t^{-1}}.
\end{align*}
We split into two cases. If $B'<B$, then the batch budget is not exhausted. Hence for every realized batch $b\in\{0,\dots,B'-1\}$ and every $t\in[t_b,t_{b+1}-1]$, the next batch has not yet been triggered by time $t$, so by the determinant batching rule,
\begin{align*}
\det(V_t)\le q\,\det(V_{t_b}).
\end{align*}
Also $V_t\succeq V_{t_b}$. Let
\begin{align*}
M:=V_{t_b}^{-1/2}V_tV_{t_b}^{-1/2}\succeq I_d,
\qquad
u:=V_{t_b}^{-1/2}a_t.
\end{align*}
Then
\begin{align*}
\|a_t\|_{V_{t_b}^{-1}}^2=\|u\|_2^2,
\qquad
\|a_t\|_{V_t^{-1}}^2=u^{\top}M^{-1}u.
\end{align*}
Diagonalizing $M$ with eigenvalues $\lambda_1,\dots,\lambda_d\ge 1$ and writing $u$ in the eigenbasis gives
\begin{align*}
\|u\|_2^2
\le
\left(\max_i \lambda_i\right)u^{\top}M^{-1}u.
\end{align*}
Since each $\lambda_i\ge 1$,
\begin{align*}
\max_i\lambda_i
\le
\prod_{i=1}^d \lambda_i
=
\det(M)
=
\frac{\det(V_t)}{\det(V_{t_b})}.
\end{align*}
Hence
\begin{align*}
\|a_t\|_{V_{t_b}^{-1}}^2
\le
\frac{\det(V_t)}{\det(V_{t_b})}\,
\|a_t\|_{V_t^{-1}}^2
\le
q\,\|a_t\|_{V_t^{-1}}^2,
\end{align*}
and so
\begin{align*}
\|a_t\|_{V_{t_b}^{-1}}
\le
\sqrt{q}\,\|a_t\|_{V_t^{-1}}
\le
M_T\,\|a_t\|_{V_t^{-1}}.
\end{align*}

Now consider the case $B'=B$. For every batch $b\in\{0,\dots,B-2\}$, the trigger fired at $t_{b+1}$, so
\begin{align*}
\det(V_{t_{b+1}})>q\,\det(V_{t_b}).
\end{align*}
Multiplying these inequalities over $b=0,\dots,B-2$ gives
\begin{align*}
\det(V_{t_{B-1}})
>
q^{B-1}\det(V_{t_0})
=
q^{B-1}\det(\lambda I_d).
\end{align*}
For the non-final batches $b\in\{0,\dots,B-2\}$, exactly the same argument as above gives
\begin{align*}
\|a_t\|_{V_{t_b}^{-1}}
\le
\sqrt{q}\,\|a_t\|_{V_t^{-1}}
\le
M_T\,\|a_t\|_{V_t^{-1}}
\qquad
\text{for all } t\in[t_b,t_{b+1}-1].
\end{align*}
For the final batch $b=B-1$, if $t\in[t_{B-1},T]$, then $V_t\preceq V_{T+1}$, and therefore
\begin{align*}
\frac{\det(V_t)}{\det(V_{t_{B-1}})}
<
\frac{\det(V_{T+1})}{q^{B-1}\det(\lambda I_d)}.
\end{align*}
Applying the same determinant-ratio norm comparison as above with $A=V_t$ and $B=V_{t_{B-1}}$ yields
\begin{align*}
\|a_t\|_{V_{t_{B-1}}^{-1}}
\le
\|a_t\|_{V_t^{-1}}
\sqrt{\frac{\det(V_t)}{\det(V_{t_{B-1}})}}
\le
\|a_t\|_{V_t^{-1}}
\sqrt{\frac{\det(V_{T+1})}{q^{B-1}\det(\lambda I_d)}}
\le
M_T\,\|a_t\|_{V_t^{-1}}.
\end{align*}
Thus in all cases,
\begin{align*}
\|a_t\|_{V_{t_{b(t)}}^{-1}}
\le
M_T\,\|a_t\|_{V_t^{-1}}.
\end{align*}
Since $M_T\ge 1$, it follows that
\begin{align*}
\min\{1,\|a_t\|_{V_{t_{b(t)}}^{-1}}\}
\le
M_T\,\min\{1,\|a_t\|_{V_t^{-1}}\}.
\end{align*}
Substituting into the bound from Step 1 gives
\begin{align*}
\mathrm{Reg_{LB}}(T)
\le
\max\{1,2\widetilde{\beta}_{\max}\}\,M_T
\sum_{t=1}^T \min\{1,\|a_t\|_{V_t^{-1}}\}.
\end{align*}

\paragraph{Step 3: Apply the elliptical potential bound.}
Using $V_{t+1}=V_t+a_ta_t^{\top}$ and the matrix determinant lemma,
\begin{align*}
\det(V_{t+1})
=
\det(V_t)\bigl(1+\|a_t\|_{V_t^{-1}}^2\bigr).
\end{align*}
Therefore
\begin{align*}
\log\!\left(\frac{\det(V_{T+1})}{\det(V_1)}\right)
=
\sum_{t=1}^T \log\!\left(1+\|a_t\|_{V_t^{-1}}^2\right).
\end{align*}
For every $u\ge 0$, we have $\log(1+u)\ge \frac{1}{2}\min\{u,1\}$. Applying this with $u=\|a_t\|_{V_t^{-1}}^2$ yields
\begin{align*}
\sum_{t=1}^T \min\{1,\|a_t\|_{V_t^{-1}}^2\}
\le
2\log\!\left(\frac{\det(V_{T+1})}{\det(V_1)}\right)
=
2D_T,
\end{align*}
since $V_1=\lambda I_d$.
Finally, by Cauchy--Schwarz,
\begin{align*}
\sum_{t=1}^T \min\{1,\|a_t\|_{V_t^{-1}}\}
&\le
\sqrt{T\sum_{t=1}^T \min\{1,\|a_t\|_{V_t^{-1}}\}^2} \\
&\le
\sqrt{T\sum_{t=1}^T \min\{1,\|a_t\|_{V_t^{-1}}^2\}} \\
&\le
\sqrt{2T D_T}.
\end{align*}
Combining this with the bound from Step 2 gives
\begin{align*}
\mathrm{Reg_{LB}}(T)
\le
\max\{1,2\widetilde{\beta}_{\max}\}\,M_T\,\sqrt{2T D_T}.
\end{align*}
This proves the claim on the event $\mathcal{E}$, which holds with probability at least $1-\sum_{b=0}^{B-1}\delta_b$.
\end{proof}

It remains to specialize Lemma~\ref{lem:replinucb_regret_realized_grid} to the parameter choices in Theorem~\ref{thm:replinucb_main}. Since $\delta_b=\delta/B$ for all $b$, we have
\begin{align*}
\sum_{b=0}^{B-1}\delta_b=\delta,
\end{align*}
so the high-probability event in Lemma~\ref{lem:replinucb_regret_realized_grid} holds with probability at least $1-\delta$.

We first control $D_T$. Since $V_{T+1}=\lambda I_d+\sum_{t=1}^T a_ta_t^\top$ and $\|a_t\|_2\le L$ for all $t$, the standard determinant bound gives
\begin{align*}
\det(V_{T+1})
\le
\det(\lambda I_d)\left(1+\frac{TL^2}{\lambda d}\right)^d.
\end{align*}
Therefore
\begin{align*}
D_T
=
\log\!\left(\frac{\det(V_{T+1})}{\det(\lambda I_d)}\right)
\le
d\log\!\left(1+\frac{TL^2}{\lambda d}\right).
\end{align*}

Next, by the choice
\begin{align*}
q=\left(1+\frac{TL^2}{\lambda d}\right)^{d/B},
\end{align*}
we have
\begin{align*}
M_T
&=
\max\!\left\{
\sqrt{q},\
\sqrt{\frac{\det(V_{T+1})}{q^{B-1}\det(\lambda I_d)}}
\right\} \\
&\le
\max\!\left\{
\left(1+\frac{TL^2}{\lambda d}\right)^{\frac{d}{2B}},\
\sqrt{\frac{\left(1+\frac{TL^2}{\lambda d}\right)^d}{\left(1+\frac{TL^2}{\lambda d}\right)^{d(B-1)/B}}}
\right\} \\
&=
\left(1+\frac{TL^2}{\lambda d}\right)^{\frac{d}{2B}}.
\end{align*}
Since
\begin{align*}
B=\left\lceil d\log\!\left(1+\frac{TL^2}{\lambda d}\right)\right\rceil,
\end{align*}
it follows that
\begin{align*}
\frac{d}{2B}\log\!\left(1+\frac{TL^2}{\lambda d}\right)\le \frac{1}{2},
\end{align*}
and hence
\begin{align*}
M_T\le e^{1/2}.
\end{align*}

We now bound $\widetilde{\beta}_{\max}$. For every realized batch $b\in\{0,\dots,B'-1\}$,
\begin{align*}
\widetilde{\beta}_b
=
\beta_b\left(1+\frac{d}{\rho_b-2\delta_b}\right),
\qquad
\beta_b=\beta_{t_b}(\delta_b).
\end{align*}
Since $t_b\le T+1$, $\delta_b=\delta/B$, and $\det(V_{t_b})\le \det(V_{T+1})$, we have
\begin{align*}
\beta_b
&=
\sigma\sqrt{2\log\!\left(\frac{\det(V_{t_b})^{1/2}}{\det(\lambda I_d)^{1/2}}\cdot\frac{B}{\delta}\right)}
+\sqrt{\lambda}\,S \\
&\le
\sigma\sqrt{2\log\!\left(\left(1+\frac{TL^2}{\lambda d}\right)^{d/2}\frac{B}{\delta}\right)}
+\sqrt{\lambda}\,S \\
&=
\sigma\sqrt{d\log\!\left(1+\frac{TL^2}{\lambda d}\right)+2\log\!\left(\frac{B}{\delta}\right)}
+\sqrt{\lambda}\,S.
\end{align*}
Also, since $\rho_b=\rho/B$, $\delta_b=\delta/B$, and $\rho>3\delta$,
\begin{align*}
\rho_b-2\delta_b=\frac{\rho-2\delta}{B}\ge \frac{\rho}{3B},
\end{align*}
so
\begin{align*}
1+\frac{d}{\rho_b-2\delta_b}
\le
1+\frac{3dB}{\rho}
\le
3\left(1+\frac{dB}{\rho}\right).
\end{align*}
Combining the last two displays yields
\begin{align*}
\widetilde{\beta}_{\max}
\le
3\left(
\sigma\sqrt{d\log\!\left(1+\frac{TL^2}{\lambda d}\right)+2\log\!\left(\frac{B}{\delta}\right)}
+\sqrt{\lambda}\,S
\right)
\left(1+\frac{dB}{\rho}\right).
\end{align*}
Absorbing the factor of $2$ inside the logarithm into a universal constant, we obtain
\begin{align*}
\widetilde{\beta}_{\max}
\le
C_1\left(
\sigma\sqrt{d\log\!\left(1+\frac{TL^2}{\lambda d}\right)+\log\!\left(\frac{B}{\delta}\right)}
+\sqrt{\lambda}\,S
\right)
\left(1+\frac{dB}{\rho}\right)
\end{align*}
for a universal constant $C_1>0$.

Substituting the bounds on $D_T$, $M_T$, and $\widetilde{\beta}_{\max}$ into Lemma~\ref{lem:replinucb_regret_realized_grid}, and absorbing universal constants, gives
\begin{align*}
\mathrm{Reg_{LB}}(T)
&\le
\max\{1,2\widetilde{\beta}_{\max}\}\,M_T\,\sqrt{2TD_T} \\
&\le
C\left(
\sigma\sqrt{d\log\!\left(1+\frac{TL^2}{\lambda d}\right)+\log\!\left(\frac{B}{\delta}\right)}
+\sqrt{\lambda}\,S
\right)
\left(1+\frac{dB}{\rho}\right)
\sqrt{T\,d\log\!\left(1+\frac{TL^2}{\lambda d}\right)},
\end{align*}
which is exactly the claimed regret bound.

Finally we show the replicability of the chosen actions using the following lemma.
\begin{lemma}[\textbf{Trajectory $\rho$-replicability of} $\RepLinUCB$]
\label{lem:replinucb-trajectory-replicability}
Fix any realized batch grid $(t_b)_{b=0}^{B'}$ with $B'\le B$, and budgets $(\rho_b)_{b=0}^{B-1}$. Consider two runs $k\in\{1,2\}$ of $\RepLinUCB$ (Algorithm~\ref{alg:replicable-linear-ucb}) on the same obliviously fixed action-set sequence $(A_t)_{t=1}^T$, using the same shared internal randomness in every $\RepRidge$ call and independent noise sequences $(\xi_t^{(k)})_{t=1}^T$. Let $(a_t^{(k)})_{t=1}^T$ denote the resulting action sequences, and let $(\tilde{\theta}_b^{(k)})_{b=0}^{B'-1}$ denote the batch-start estimators produced in the two runs. Then
\begin{align*}
\mathbb{P}\!\left(\forall t\in[T]:\ a_t^{(1)}=a_t^{(2)}\right)
\ge
1-\sum_{b=0}^{B-1}\rho_b.
\end{align*}

% Consequently, since $B'\le B$,
% \begin{align*}
% \mathbb{P}\!\left(\forall t\in[T]:\ a_t^{(1)}=a_t^{(2)}\right)
% \ge
% 1-\sum_{b=0}^{B-1}\rho_b,
% \end{align*}
\end{lemma}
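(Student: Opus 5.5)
The plan is to couple the two runs through a ``good'' event and an induction over batches, in the same way as Lemma~\ref{lem:repubc-replicability} handles $\RepUCB$. For each batch index $b\in\{0,\dots,B-1\}$ let $\mathcal{D}_b$ be the event that the two runs reach batch $b$ with identical histories of actions, but $\RepRidge$ at the start of batch $b$ returns $\tilde{\theta}_b^{(1)}\neq\tilde{\theta}_b^{(2)}$. We set $\mathcal{G}:=\bigcap_b \mathcal{D}_b^c$. We will show two things: that $\mathbb{P}(\mathcal{D}_b)\le\rho_b$ for each $b$, and that on $\mathcal{G}$ the action sequences coincide. A union bound then gives $\mathbb{P}(\forall t:\ a_t^{(1)}=a_t^{(2)})\ge 1-\sum_{b}\rho_b$.

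\textbf{Deterministic part (induction).} On $\mathcal{G}$, suppose the runs agree on $a_1,\dots,a_{t_b-1}$. Then they agree on $V_{t_b}$, since it depends only on actions. The batch start time $t_b$ is the same in both runs, since the determinant trigger depends only on $V_t$ and the counter $b$. Hence $\beta_{t_b}(\delta_b)$ and $\widetilde{\beta}_b$ also agree. On $\mathcal{D}_b^c$ the estimates agree as well: $\tilde{\theta}_b^{(1)}=\tilde{\theta}_b^{(2)}$. Within the batch, the action sets $A_t$ are the same in both runs by obliviousness. The UCB score in Line~\ref{line:replinucb_action} uses only $(\tilde{\theta}_b,\widetilde{\beta}_b,V_{t_b})$ and a fixed tie-breaking rule, so both runs pick the same $a_t$ at every round of the batch. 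The $V_t$ therefore evolve identically, the next trigger fires at the same time, and the induction proceeds. The base case is $t_0=1$ with $V_1=\lambda I$.

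\textbf{Probabilistic part (the main obstacle).} Theorem~\ref{thm:replicable_ridge} is stated for a fixed design, while here the design $a_1,\dots,a_{t_b-1}$ is adaptive. The plan is to condition on the event $H_b$ that the histories agree up to $t_b$, together with the common realized action prefix $x_{1:n_b}$. For each run separately, the actions are predictable with respect to that run's own filtration, and the noise is conditionally $\sigma$-sub-Gaussian. So the self-normalized bound still gives $\|\hat\theta^{(k)}_{t_b}-\theta^\star\|_{V_{t_b}}\le\beta_{t_b}(\delta_b)$ with probability at least $1-\delta_b$ for each $k$; this is exactly Step~1 of Lemma~\ref{lem:batch_start_concentration}. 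This step needs no fixed design.

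The grid argument in Lemma~\ref{lemma:Replicability_RepRidge} (Steps~2--3) needs only two ingredients. The first is that $V_{t_b}$, $\alpha_b$ and $u_b$ are common to both runs, which holds on $H_b$. The second is that $u_b$ is independent of $(z^{(1)},z^{(2)})$. This independence holds because $u_b$ is fresh internal randomness drawn at batch $b$ and never used earlier. Combining these gives
\[
\mathbb{P}(\mathcal{D}_b)\le \mathbb{P}\bigl(H_b\cap(\mathcal{E}^{(1)}_b)^c\bigr)+\mathbb{P}\bigl(H_b\cap(\mathcal{E}^{(2)}_b)^c\bigr)+\mathbb{E}\!\left[\mathbf{1}\{H_b\cap\mathcal{E}^{(1)}_b\cap\mathcal{E}^{(2)}_b\}\,\frac{\sqrt{d}\,\|z^{(1)}-z^{(2)}\|_2}{\alpha_b}\right]\le 2\delta_b+(\rho_b-2\delta_b)=\rho_b .
\]
The care needed is to bound the confidence failures unconditionally, rather than conditionally on $H_b$, so that no division by $\mathbb{P}(H_b)$ appears. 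This is why the events are intersected with $H_b$ instead of conditioning on it.

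Finally, batches that are never reached contribute nothing to the union bound, and the realized number of batches satisfies $B'\le B$. Summing $\mathbb{P}(\mathcal{D}_b)\le\rho_b$ over $b$ therefore yields the claim. With $\rho_b=\rho/B$, this gives the $\rho$-replicability asserted in Theorem~\ref{thm:replinucb_main}.
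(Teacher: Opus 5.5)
Your proposal is correct and follows essentially the same argument as the paper. Your event $\mathcal{D}_b$ is exactly the paper's $A_b\cap\mathcal{M}_b$, and your deterministic induction is its Step~1 ($\mathcal{M}_{b+1}^c\cap\mathcal{M}_b\subseteq A_b$). Your bound $\mathbb{P}(\mathcal{D}_b)\le 2\delta_b+(\rho_b-2\delta_b)$, obtained by intersecting with rather than conditioning on the history-match event, is the paper's Step~2 decomposition, followed by the same union bound over batches. You also make explicit two points the paper leaves implicit by fixing a realized batch grid: that the trigger times coincide once the action prefixes agree, and that $u_b$ is fresh randomness independent of $(z^{(1)},z^{(2)})$.
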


\begin{proof}
For each batch $b\in\{0,\dots,B'-1\}$, define the trajectory-match event up to the start of batch $b$ by
\begin{align*}
\mathcal{M}_b
:=
\left\{
\forall s\le t_b-1:\ a_s^{(1)}=a_s^{(2)}
\right\}.
\end{align*}
Since $t_0=1$, the event $\mathcal{M}_0$ holds trivially.

For each batch $b$, define the batch-estimator mismatch event
\begin{align*}
A_b
:=
\left\{
\tilde{\theta}_b^{(1)}\neq \tilde{\theta}_b^{(2)}
\right\}.
\end{align*}

We prove the claim in three steps.

\paragraph{Step 1: If the batch-start estimators agree, then the entire batch trajectory agrees.}
Fix a batch $b\in\{0,\dots,B'-1\}$ and assume that $\mathcal{M}_b$ holds and also $\tilde{\theta}_b^{(1)}=\tilde{\theta}_b^{(2)}$. Since $\mathcal{M}_b$ holds, the two runs have selected the same actions up to time $t_b-1$. Therefore they have the same design matrix at batch start,
\begin{align*}
V_{t_b}^{(1)}=V_{t_b}^{(2)}=:V_{t_b}.
\end{align*}
Because the batch-start confidence radius $\widetilde{\beta}_b$ is a deterministic function of $V_{t_b}$ and the prescribed budget parameters, it is also identical across the two runs.

Now for any round $t\in\{t_b,\dots,t_{b+1}-1\}$, both runs maximize the same function
\begin{align*}
a\mapsto \langle a,\tilde{\theta}_b\rangle+\widetilde{\beta}_b\|a\|_{V_{t_b}^{-1}}
\end{align*}
over the same action set $A_t$, with the same deterministic tie-breaking rule. Hence
\begin{align*}
a_t^{(1)}=a_t^{(2)}
\qquad
\text{for all }
t=t_b,\dots,t_{b+1}-1.
\end{align*}
Thus $\mathcal{M}_{b+1}$ holds. Equivalently,
\begin{align*}
\mathcal{M}_{b+1}^c\cap \mathcal{M}_b
\subseteq
A_b.
\end{align*}

\paragraph{Step 2: Bound the probability that the batch-$b$ $\RepRidge$ calls disagree on $\mathcal{M}_b$.}
Fix a batch $b\in\{0,\dots,B'-1\}$ and condition on the event $\mathcal{M}_b$. Then the two runs have identical actions $a_1,\dots,a_{t_b-1}$ entering batch $b$, and hence identical matrices
\begin{align*}
V_{t_b}^{(1)}=V_{t_b}^{(2)}=:V_{t_b}.
\end{align*}
Therefore the corresponding ridge confidence radii and rounding widths are also identical:
\begin{align*}
\beta_{t_b}(\delta_b)^{(1)}=\beta_{t_b}(\delta_b)^{(2)}=\beta_{t_b}(\delta_b),
\qquad
\alpha_b^{(1)}=\alpha_b^{(2)}=\alpha_b,
\end{align*}
where
\begin{align*}
\alpha_b
=
\frac{2\beta_{t_b}(\delta_b)\sqrt{d}}{\rho_b-2\delta_b}.
\end{align*}

Define
\begin{align*}
\hat{\theta}_{t_b}^{(k)}
:=
V_{t_b}^{-1}b_{t_b}^{(k)},
\qquad
z_b^{(k)}
:=
V_{t_b}^{1/2}\hat{\theta}_{t_b}^{(k)},
\qquad
k\in\{1,2\}.
\end{align*}
Also define the two batch-start confidence events
\begin{align*}
\mathcal{E}_b^{(k)}
:=
\left\{
\|\hat{\theta}_{t_b}^{(k)}-\theta^{\star}\|_{V_{t_b}}
\le
\beta_{t_b}(\delta_b)
\right\},
\qquad
k\in\{1,2\}.
\end{align*}
By the standard self-normalized ridge confidence bound applied separately to the two runs,
\begin{align*}
\mathbb{P}(\mathcal{E}_b^{(k)})\ge 1-\delta_b,
\qquad
k\in\{1,2\}.
\end{align*}
Hence, by a union bound,
\begin{align*}
\mathbb{P}\bigl((\mathcal{E}_b^{(1)}\cap \mathcal{E}_b^{(2)})^c\bigr)\le 2\delta_b.
\end{align*}

On the event $\mathcal{E}_b^{(1)}\cap \mathcal{E}_b^{(2)}$, we have
\begin{align*}
\|z_b^{(1)}-z_b^{(2)}\|_2
&=
\|V_{t_b}^{1/2}(\hat{\theta}_{t_b}^{(1)}-\hat{\theta}_{t_b}^{(2)})\|_2 \\
&\le
\|V_{t_b}^{1/2}(\hat{\theta}_{t_b}^{(1)}-\theta^{\star})\|_2
+
\|V_{t_b}^{1/2}(\hat{\theta}_{t_b}^{(2)}-\theta^{\star})\|_2 \\
&=
\|\hat{\theta}_{t_b}^{(1)}-\theta^{\star}\|_{V_{t_b}}
+
\|\hat{\theta}_{t_b}^{(2)}-\theta^{\star}\|_{V_{t_b}} \\
&\le
2\beta_{t_b}(\delta_b).
\end{align*}

Now recall that
\begin{align*}
\tilde{\theta}_b^{(k)}
=
V_{t_b}^{-1/2}Q_{\alpha_b,u_b}(z_b^{(k)}),
\qquad
k\in\{1,2\},
\end{align*}
where the same shared random shift $u_b$ is used in both runs. Since $V_{t_b}^{-1/2}$ is invertible,
\begin{align*}
A_b
=
\left\{
Q_{\alpha_b,u_b}(z_b^{(1)})\neq Q_{\alpha_b,u_b}(z_b^{(2)})
\right\}.
\end{align*}

We now bound the probability of $A_b$ on $\mathcal{M}_b$. Decompose
\begin{align*}
\mathbb{P}(A_b\cap \mathcal{M}_b)
&\le
\mathbb{P}\bigl(\mathcal{M}_b\cap (\mathcal{E}_b^{(1)}\cap \mathcal{E}_b^{(2)})^c\bigr)
+
\mathbb{P}\bigl(A_b\cap \mathcal{M}_b\cap \mathcal{E}_b^{(1)}\cap \mathcal{E}_b^{(2)}\bigr) \\
&\le
\mathbb{P}\bigl((\mathcal{E}_b^{(1)}\cap \mathcal{E}_b^{(2)})^c\bigr)
+
\mathbb{P}\bigl(A_b\cap \mathcal{M}_b\cap \mathcal{E}_b^{(1)}\cap \mathcal{E}_b^{(2)}\bigr).
\end{align*}

For the second term, condition on fixed values of $(z_b^{(1)},z_b^{(2)})$. If
\begin{align*}
Q_{\alpha_b,u_b}(z_b^{(1)})\neq Q_{\alpha_b,u_b}(z_b^{(2)}),
\end{align*}
then there exists at least one coordinate $j\in[d]$ such that
\begin{align*}
Q_{\alpha_b,u_b}(z_b^{(1)})_j
\neq
Q_{\alpha_b,u_b}(z_b^{(2)})_j.
\end{align*}
Hence, by a union bound over coordinates,
\begin{align*}
\mathbb{P}(A_b\mid z_b^{(1)},z_b^{(2)})
\le
\sum_{j=1}^d
\mathbb{P}\!\left(
Q_{\alpha_b,u_b}(z_b^{(1)})_j
\neq
Q_{\alpha_b,u_b}(z_b^{(2)})_j
\mid
z_b^{(1)},z_b^{(2)}
\right).
\end{align*}

Fix a coordinate $j$. As a function of the shared random shift $(u_b)_j\sim \mathrm{Unif}([0,\alpha_b))$, the one-dimensional rounded outputs differ only if there exists a grid boundary between $z_{b,j}^{(1)}$ and $z_{b,j}^{(2)}$. The grid boundaries are the points
\begin{align*}
(u_b)_j+m\alpha_b,
\qquad
m\in\mathbb{Z}.
\end{align*}
As $(u_b)_j$ varies uniformly over one period of length $\alpha_b$, the set of shifts for which such a boundary lies between $z_{b,j}^{(1)}$ and $z_{b,j}^{(2)}$ has Lebesgue measure at most $|z_{b,j}^{(1)}-z_{b,j}^{(2)}|$. Therefore
\begin{align*}
\mathbb{P}\!\left(
Q_{\alpha_b,u_b}(z_b^{(1)})_j
\neq
Q_{\alpha_b,u_b}(z_b^{(2)})_j
\mid
z_b^{(1)},z_b^{(2)}
\right)
\le
\frac{|z_{b,j}^{(1)}-z_{b,j}^{(2)}|}{\alpha_b}.
\end{align*}
Summing over $j$ gives
\begin{align*}
\mathbb{P}(A_b\mid z_b^{(1)},z_b^{(2)})
&\le
\sum_{j=1}^d \frac{|z_{b,j}^{(1)}-z_{b,j}^{(2)}|}{\alpha_b} \\
&=
\frac{\|z_b^{(1)}-z_b^{(2)}\|_1}{\alpha_b} \\
&\le
\frac{\sqrt{d}\,\|z_b^{(1)}-z_b^{(2)}\|_2}{\alpha_b}.
\end{align*}
On the event $\mathcal{E}_b^{(1)}\cap \mathcal{E}_b^{(2)}$, this implies
\begin{align*}
\mathbb{P}(A_b\mid z_b^{(1)},z_b^{(2)})
\le
\frac{2\beta_{t_b}(\delta_b)\sqrt{d}}{\alpha_b}
=
\rho_b-2\delta_b.
\end{align*}
Taking expectations yields
\begin{align*}
\mathbb{P}\bigl(A_b\cap \mathcal{M}_b\cap \mathcal{E}_b^{(1)}\cap \mathcal{E}_b^{(2)}\bigr)
&=
\mathbb{E}\!\left[
\mathbf{1}_{\mathcal{M}_b\cap \mathcal{E}_b^{(1)}\cap \mathcal{E}_b^{(2)}}\mathbf{1}_{A_b}
\right] \\
&=
\mathbb{E}\!\left[
\mathbf{1}_{\mathcal{M}_b\cap \mathcal{E}_b^{(1)}\cap \mathcal{E}_b^{(2)}}
\,
\mathbb{P}(A_b\mid z_b^{(1)},z_b^{(2)})
\right] \\
&\le
\rho_b-2\delta_b.
\end{align*}
Combining the preceding bounds gives
\begin{align*}
\mathbb{P}(A_b\cap \mathcal{M}_b)
\le
2\delta_b+(\rho_b-2\delta_b)
=
\rho_b.
\end{align*}
Using Step 1, we conclude that
\begin{align*}
\mathbb{P}(\mathcal{M}_{b+1}^c\cap \mathcal{M}_b)
\le
\mathbb{P}(A_b\cap \mathcal{M}_b)
\le
\rho_b.
\end{align*}

\paragraph{Step 3: Union bound over batches.}
Since $\mathcal{M}_0$ holds surely, any failure of full trajectory agreement must first occur at some batch transition. Therefore
\begin{align*}
\mathcal{M}_{B'}^c
\subseteq
\bigcup_{b=0}^{B'-1}
(\mathcal{M}_{b+1}^c\cap \mathcal{M}_b).
\end{align*}
By a union bound and Step 2,
\begin{align*}
\mathbb{P}(\mathcal{M}_{B'}^c)
&\le
\sum_{b=0}^{B'-1}\mathbb{P}(\mathcal{M}_{b+1}^c\cap \mathcal{M}_b) \\
&\le
\sum_{b=0}^{B'-1}\rho_b.
\end{align*}
Since $\mathcal{M}_{B'}$ is exactly the event
\begin{align*}
\left\{
\forall t\in[T]:\ a_t^{(1)}=a_t^{(2)}
\right\},
\end{align*}
we obtain
\begin{align*}
\mathbb{P}\!\left(\forall t\in[T]:\ a_t^{(1)}=a_t^{(2)}\right)
\ge
1-\sum_{b=0}^{B'-1}\rho_b.
\end{align*}
Because $B'\le B$, this also implies
\begin{align*}
\mathbb{P}\!\left(\forall t\in[T]:\ a_t^{(1)}=a_t^{(2)}\right)
\ge
1-\sum_{b=0}^{B-1}\rho_b.
\end{align*}

Finally, fix any batch $b\in\{0,\dots,B'-1\}$. First,
\begin{align*}
\mathcal{M}_b^c
\subseteq
\bigcup_{j=0}^{b-1}
(\mathcal{M}_{j+1}^c\cap \mathcal{M}_j),
\end{align*}
so by a union bound and Step 2,
\begin{align*}
\mathbb{P}(\mathcal{M}_b^c)
\le
\sum_{j=0}^{b-1}\rho_j.
\end{align*}
Also,
\begin{align*}
\mathbb{P}(A_b)
&\le
\mathbb{P}(A_b\cap \mathcal{M}_b)+\mathbb{P}(\mathcal{M}_b^c) \\
&\le
\rho_b+\sum_{j=0}^{b-1}\rho_j =\sum_{j=0}^{b}\rho_j.
\end{align*}
Therefore
\begin{align*}
\mathbb{P}\!\left(\tilde{\theta}_b^{(1)}=\tilde{\theta}_b^{(2)}\right)
\ge
1-\sum_{j=0}^{b}\rho_j.
\end{align*}
In particular, if batch $B-1$ is realized, then
\begin{align*}
\mathbb{P}\!\left(\tilde{\theta}_{B-1}^{(1)}=\tilde{\theta}_{B-1}^{(2)}\right)
\ge
1-\sum_{b=0}^{B-1}\rho_b.
\end{align*}
This completes the proof.
\end{proof}

Using $\rho_b=\rho/B$ immediately proves the replicability gurantee in Theorem~\ref{thm:replinucb_main}

\section{Replicable Generalized Linear Bandits}
\label{sec:replicable_glm_bandits}
We next extend the optimistic generalized linear bandit algorithm of \citet{filippi2010parametric} to a replicable setting. As in the linear case, the main difficulty is that recomputing the generalized linear estimator after every new reward makes the action highly sensitive to the realized noise, so two runs with shared internal randomness but independent rewards may quickly diverge. To control this sensitivity, we combine determinant-triggered batching with a replicable penalized generalized linear estimator. The resulting algorithm updates its policy only at batch starts, and within each batch it reuses a single replicable optimistic estimate throughout.

\subsection{Replicable Generalized Linear Estimation}

Before turning to the bandit problem, we first introduce the fixed-design estimation primitive that will be invoked at the start of each batch.

Let $d,n \ge 1$, let $\Theta \subset \mathbb{R}^d$ be a known closed convex set, and fix $\lambda>0$. Given covariates $x_1,\dots,x_n \in \mathbb{R}^d$, define
\begin{align*}
V_n := \lambda I + \sum_{i=1}^n x_i x_i^\top.
\end{align*}
The response model is as follows.

\begin{assumption}
\label{asmp:glm_fixed_design}
There is an unknown parameter $\theta^{\star}\in\Theta$ with $\|\theta^{\star}\|_2 \le S$. Observations satisfy
\begin{align*}
y_i = \mu(x_i^\top\theta^{\star}) + \varepsilon_i,
\qquad
i=1,\dots,n,
\end{align*}
where $(\varepsilon_i)$ is conditionally $\sigma$-sub-Gaussian with respect to the filtration $(\mathcal{F}_i)$ generated by $\{x_1,\dots,x_i,y_1,\dots,y_i\}$, that is, for every $i$ and $\gamma\in\mathbb{R}$,
\begin{align*}
\mathbb{E}\!\left[\exp(\gamma\varepsilon_i)\mid \mathcal{F}_{i-1}\right]
\le
\exp\!\left(\frac{\sigma^2\gamma^2}{2}\right).
\end{align*}
The inverse link function $\mu:\mathbb{R}\to\mathbb{R}$ is continuously differentiable and globally Lipschitz with constant $k_{\mu}$. Moreover,
\begin{align*}
c_{\mu}
:=
\inf_{\theta\in\Theta,\;i\in[n]}
\mu'(x_i^\top\theta)
>0.
\end{align*}
\end{assumption}

Let $b:\mathbb{R}\to\mathbb{R}$ be any antiderivative of $\mu$, so that $b'=\mu$. We define the penalized quasi-log-likelihood by
\begin{align*}
\mathcal{L}_n(\theta)
:=
\sum_{i=1}^n \Big(y_i x_i^\top\theta - b(x_i^\top\theta)\Big)
-
\frac{\lambda}{2}\|\theta\|_2^2,
\qquad
\theta\in\Theta.
\end{align*}
Since $b''=\mu'\ge 0$ and $\lambda>0$, the function $\mathcal{L}_n$ is strictly concave on $\Theta$, and therefore it has a unique maximizer, which we denote by $\widehat{\theta}_n
:=
\arg\max_{\theta\in\Theta}\mathcal{L}_n(\theta)$. We also define $\underline{c}_{\mu} := \min\{1,c_{\mu}\}$.

The estimator \hyperref[alg:replicable_glm]{$\RepGLM$} below is the generalized linear analogue of {$\RepRidge$}. It first computes the penalized generalized linear estimator $\widehat{\theta}_n$, passes to the whitened coordinates $z_n = V_n^{1/2}\widehat{\theta}_n$, and then performs randomized midpoint rounding on a shared shifted grid. The rounded point is then mapped back to parameter space.

\begin{algorithm}[!t]
\caption{Replicable Penalized GLM Estimation via Randomized Grid Rounding ($\RepGLM$)}
\label{alg:replicable_glm}
\begin{algorithmic}[1]
\REQUIRE Regularization $\lambda>0$, confidence $\delta\in(0,1)$, target replicability $\rho\in(2\delta,1)$, data $\{(x_i,y_i)\}_{i=1}^n$, parameter set $\Theta$
\STATE $V_n \gets \lambda I + \sum_{i=1}^n x_i x_i^\top$
\STATE $\widehat{\theta}_n \gets \arg\max_{\theta\in\Theta}\Big\{\sum_{i=1}^n (y_i x_i^\top\theta - b(x_i^\top\theta)) - \frac{\lambda}{2}\|\theta\|_2^2\Big\}$ \algc{\color{Green} Penalized GLM estimate}
\STATE $\beta_n(\delta) \gets \frac{1}{\underline{c}_{\mu}}\Bigg(\sigma\sqrt{2\log\!\Big(\frac{\det(V_n)^{1/2}}{\det(\lambda I)^{1/2}\,\delta}\Big)}+\sqrt{\lambda}\,S\Bigg)$ \algc{\color{Green} Fixed-design confidence radius}
\STATE $z_n \gets V_n^{1/2}\widehat{\theta}_n$ \algc{\color{Green} Whitening} \label{line:repglm_whiten}
\STATE Set $\alpha \gets \frac{2\,\beta_n(\delta)\,\sqrt{d}}{\rho-2\delta}$ and sample $u \sim \mathrm{Unif}([0,\alpha)^d)$ \algc{\color{Green} Grid width chosen to guarantee $\rho$-replicability}
\STATE Define $Q_{\alpha,u}:\mathbb{R}^d\to\mathbb{R}^d$ coordinatewise by \algc{\color{Green} Randomized Grid Rounding} \label{line:repglm_round}
\[
\bigl(Q_{\alpha,u}(z)\bigr)_j
=
\alpha\Bigl\lfloor \frac{z_j-u_j}{\alpha}\Bigr\rfloor + u_j + \frac{\alpha}{2},
\qquad
j=1,\dots,d
\]
\STATE $\widetilde{\theta}_n \gets V_n^{-1/2}Q_{\alpha,u}(z_n)$ \algc{\color{Green} Map back to parameter space} \label{line:repglm_unwhiten}
\RETURN $\widetilde{\theta}_n$
\end{algorithmic}
\end{algorithm}

The key point is that the natural statistical error metric for the penalized generalized linear estimator will again be the $V_n$-norm. After whitening, nearby estimators in $V_n$-norm become nearby in Euclidean distance, and the same shifted-grid argument used in the linear case can therefore be applied.
The following theorem gives the fixed-design guarantee for \hyperref[alg:replicable_glm]{$\RepGLM$}.

\begin{theoremBox}[Confidence and Replicability of $\RepGLM$ (Algorithm~\ref{alg:replicable_glm})]
\label{thm:replicable_glm}
Suppose Assumption~\ref{asmp:glm_fixed_design} holds, fix $\delta\in(0,1)$ and $\rho\in(2\delta,1)$, and let $\widetilde{\theta}_n$ be the output of \hyperref[alg:replicable_glm]{$\RepGLM$} (Algorithm~\ref{alg:replicable_glm}) for fixed $\{(x_i,y_i)\}_{i=1}^n$. Then with probability at least $1-\delta$,
\begin{align*}
\|\widetilde{\theta}_n-\theta^{\star}\|_{V_n}
\le
\beta_n(\delta)\left(1+\frac{d}{\rho-2\delta}\right).
\end{align*}
Consequently, for every $x\in\mathbb{R}^d$, with probability at least $1-\delta$,
\begin{align*}
|\mu(x^\top\widetilde{\theta}_n)-\mu(x^\top\theta^{\star})|
\le
k_{\mu}\,\beta_n(\delta)\left(1+\frac{d}{\rho-2\delta}\right)\|x\|_{V_n^{-1}}.
\end{align*}
Moreover, the estimator is $\rho$-replicable: if \hyperref[alg:replicable_glm]{$\RepGLM$} (Algorithm~\ref{alg:replicable_glm}) is run twice on the same covariate sequence with the same shared random shift $u$ but with independent response noise, producing outputs $\widetilde{\theta}_n^{(1)}$ and $\widetilde{\theta}_n^{(2)}$, then
\begin{align*}
\mathbb{P}\!\left(\widetilde{\theta}_n^{(1)}=\widetilde{\theta}_n^{(2)}\right)\ge 1-\rho.
\end{align*}
\end{theoremBox}

\subsection{Proof of Theorem~\ref{thm:replicable_glm}}

We begin with the following deterministic and statistical ingredients.

\begin{lemma}[\textbf{Strong monotonicity of the penalized GLM score}]
\label{lem:repglm_strong_monotonicity}
Define
\begin{align*}
g_n(\theta)
:=
\lambda\theta+\sum_{i=1}^n \mu(x_i^\top\theta)x_i,
\qquad
\theta\in\Theta.
\end{align*}
Then for every $\theta,\theta'\in\Theta$,
\begin{align*}
\langle g_n(\theta)-g_n(\theta'),\,\theta-\theta'\rangle
\ge
\underline{c}_{\mu}\,\|\theta-\theta'\|_{V_n}^2.
\end{align*}
\end{lemma}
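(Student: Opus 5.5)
The plan is to use the mean value theorem along the segment between $\theta$ and $\theta'$, which stays in $\Theta$ because $\Theta$ is convex. Fix $\theta,\theta'\in\Theta$, set $\Delta:=\theta-\theta'$, and write
\begin{align*}
\langle g_n(\theta)-g_n(\theta'),\Delta\rangle
=
\lambda\|\Delta\|_2^2+\sum_{i=1}^n\bigl(\mu(x_i^\top\theta)-\mu(x_i^\top\theta')\bigr)\,x_i^\top\Delta .
\end{align*}
Since $\mu$ is continuously differentiable, for each $i$ the one-dimensional mean value theorem gives a point $\xi_i=x_i^\top\bar\theta_i$ with $\bar\theta_i=s_i\theta+(1-s_i)\theta'$ for some $s_i\in[0,1]$. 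It satisfies
\begin{align*}
\mu(x_i^\top\theta)-\mu(x_i^\top\theta')=\mu'(\xi_i)\,x_i^\top\Delta .
\end{align*}
Convexity gives $\bar\theta_i\in\Theta$, so Assumption~\ref{asmp:glm_fixed_design} yields $\mu'(\xi_i)\ge c_{\mu}$. Each summand is therefore $\mu'(\xi_i)(x_i^\top\Delta)^2\ge c_{\mu}(x_i^\top\Delta)^2$.

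Combining, the inner product is at least $\lambda\|\Delta\|_2^2+c_{\mu}\sum_i(x_i^\top\Delta)^2$. Both coefficients are at least $\underline{c}_{\mu}=\min\{1,c_{\mu}\}$, so the bound is at least
\begin{align*}
\underline{c}_{\mu}\Bigl(\lambda\|\Delta\|_2^2+\sum_i(x_i^\top\Delta)^2\Bigr)=\underline{c}_{\mu}\,\Delta^\top V_n\Delta=\underline{c}_{\mu}\|\Delta\|_{V_n}^2 .
\end{align*}
This is the claim.

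There is no real obstacle. The one point to state carefully is that the intermediate point of the mean value theorem lies on the segment, hence inside $\Theta$, so the infimum defining $c_{\mu}$ applies there. This is where both the convexity of $\Theta$ and the definition of $c_{\mu}$, taken over all of $\Theta$ and over the given covariates $x_i$, are used. Alternatively, one can write $\mu(x_i^\top\theta)-\mu(x_i^\top\theta')=\int_0^1\mu'(x_i^\top(\theta'+s\Delta))\,ds\cdot x_i^\top\Delta$ and lower bound the integrand pointwise, which avoids choosing $\xi_i$.
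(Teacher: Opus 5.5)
Your proof is correct and essentially the paper's argument. The paper applies the fundamental theorem of calculus to $g_n$ along the segment (which stays in $\Theta$ by convexity), lower-bounds the Jacobian $\lambda I+\sum_i\mu'(x_i^\top\vartheta)x_ix_i^\top\succeq\underline{c}_{\mu}V_n$, and integrates; this is exactly your integral-form alternative, and your per-summand mean value theorem version is an equivalent variant.
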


\begin{proof}[Proof of Lemma~\ref{lem:repglm_strong_monotonicity}]
Fix $\theta,\theta'\in\Theta$. Since $\Theta$ is convex, the segment
\begin{align*}
\theta_s:=s\theta+(1-s)\theta',
\qquad
s\in[0,1],
\end{align*}
lies in $\Theta$. By the fundamental theorem of calculus,
\begin{align*}
g_n(\theta)-g_n(\theta')
=
\left(\int_0^1 \nabla g_n(\theta_s)\,ds\right)(\theta-\theta').
\end{align*}
Moreover,
\begin{align*}
\nabla g_n(\vartheta)
=
\lambda I+\sum_{i=1}^n \mu'(x_i^\top\vartheta)x_i x_i^\top.
\end{align*}
By Assumption~\ref{asmp:glm_fixed_design}, for every $\vartheta\in\Theta$ and every $i\in[n]$,
\begin{align*}
\mu'(x_i^\top\vartheta)\ge c_{\mu}.
\end{align*}
Hence
\begin{align*}
\nabla g_n(\vartheta)
&\succeq
\lambda I + c_{\mu}\sum_{i=1}^n x_i x_i^\top \\
&\succeq
\underline{c}_{\mu}\lambda I + \underline{c}_{\mu}\sum_{i=1}^n x_i x_i^\top \\
&=
\underline{c}_{\mu}\,V_n.
\end{align*}
Therefore
\begin{align*}
\langle g_n(\theta)-g_n(\theta'),\,\theta-\theta'\rangle
&=
\int_0^1 (\theta-\theta')^\top \nabla g_n(\theta_s)(\theta-\theta')\,ds \\
&\ge
\underline{c}_{\mu}\,(\theta-\theta')^\top V_n(\theta-\theta') \\
&=
\underline{c}_{\mu}\,\|\theta-\theta'\|_{V_n}^2.
\end{align*}
\end{proof}

\begin{lemma}[\textbf{High-probability confidence bound for $\RepGLM$}]
\label{lem:repglm_confidence}
Suppose Assumption~\ref{asmp:glm_fixed_design} holds and let $\widetilde{\theta}_n$ be the output of \hyperref[alg:replicable_glm]{$\RepGLM$} (Algorithm~\ref{alg:replicable_glm}). Then with probability at least $1-\delta$,
\begin{align*}
\|\widetilde{\theta}_n-\theta^{\star}\|_{V_n}
\le
\beta_n(\delta)\left(1+\frac{d}{\rho-2\delta}\right).
\end{align*}
\end{lemma}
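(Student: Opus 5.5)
The proof has the same structure as Lemma~\ref{lemma:ridge-confidence}: first bound the statistical error of the penalized estimator $\widehat{\theta}_n$ in the $V_n$-norm with high probability, then add the deterministic rounding error. The new work is the first step. There is no closed form here, so I would use the first-order optimality condition of the concave program over $\Theta$ together with Lemma~\ref{lem:repglm_strong_monotonicity}.

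\emph{Statistical step.} Since $\widehat{\theta}_n$ maximizes the concave $\mathcal{L}_n$ over the closed convex set $\Theta$, and $\theta^{\star}\in\Theta$, the variational inequality $\langle \nabla\mathcal{L}_n(\widehat{\theta}_n),\theta^{\star}-\widehat{\theta}_n\rangle\le 0$ holds. We have $\nabla\mathcal{L}_n(\theta)=\sum_i y_ix_i-g_n(\theta)$, and substituting $y_i=\mu(x_i^\top\theta^{\star})+\varepsilon_i$ gives $\sum_i y_ix_i = g_n(\theta^{\star})-\lambda\theta^{\star}+\xi_n$ with $\xi_n:=\sum_i\varepsilon_ix_i$. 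The inequality therefore reads
\begin{align*}
\langle g_n(\widehat{\theta}_n)-g_n(\theta^{\star}),\,\widehat{\theta}_n-\theta^{\star}\rangle\le\langle \xi_n-\lambda\theta^{\star},\,\widehat{\theta}_n-\theta^{\star}\rangle .
\end{align*}
Lemma~\ref{lem:repglm_strong_monotonicity} lower bounds the left side by $\underline{c}_{\mu}\|\widehat{\theta}_n-\theta^{\star}\|_{V_n}^2$. For the right side, Cauchy--Schwarz in the $V_n$ geometry gives at most $(\|\xi_n\|_{V_n^{-1}}+\lambda\|\theta^{\star}\|_{V_n^{-1}})\|\widehat{\theta}_n-\theta^{\star}\|_{V_n}$. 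Dividing through yields
\begin{align*}
\|\widehat{\theta}_n-\theta^{\star}\|_{V_n}\le \tfrac{1}{\underline{c}_{\mu}}\bigl(\|\xi_n\|_{V_n^{-1}}+\sqrt{\lambda}S\bigr),
\end{align*}
where I use $\lambda\|\theta^{\star}\|_{V_n^{-1}}\le\sqrt{\lambda}\|\theta^{\star}\|_2$, which follows from $V_n\succeq\lambda I$. The self-normalized martingale bound of \citet{abbasi2011improved} gives, with probability at least $1-\delta$,
\begin{align*}
\|\xi_n\|_{V_n^{-1}}\le\sigma\sqrt{2\log(\det(V_n)^{1/2}\det(\lambda I)^{-1/2}/\delta)} .
\end{align*}
Together these give exactly $\|\widehat{\theta}_n-\theta^{\star}\|_{V_n}\le\beta_n(\delta)$ on an event $\mathcal{E}$ with $\mathbb{P}(\mathcal{E})\ge 1-\delta$.

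\emph{Rounding and combination.} As in Step~1 of Lemma~\ref{lemma:ridge-confidence}, $\|\widetilde{\theta}_n-\widehat{\theta}_n\|_{V_n}=\|Q_{\alpha,u}(z_n)-z_n\|_2\le\frac{\alpha}{2}\sqrt{d}$, because each coordinate moves to the midpoint of its cell. This bound is deterministic and does not depend on how $\widehat{\theta}_n$ was obtained. On $\mathcal{E}$, the triangle inequality and the choice $\alpha=2\beta_n(\delta)\sqrt{d}/(\rho-2\delta)$ give $\beta_n(\delta)+\beta_n(\delta)\,d/(\rho-2\delta)$, which is the claim.

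I expect the main obstacle to be the statistical step. Because of the constraint set $\Theta$, the argument must go through the variational inequality rather than the stationarity condition $\nabla\mathcal{L}_n=0$. The monotonicity lemma also applies only to points in $\Theta$, which is fine since both $\widehat{\theta}_n$ and $\theta^{\star}$ lie in $\Theta$. If the self-normalized bound is stated only for the unconstrained ridge setting, I would still apply it directly to $\xi_n$, since it is a statement about the martingale $\sum_i\varepsilon_ix_i$ alone.
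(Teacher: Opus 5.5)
Your proposal is correct and takes essentially the same route as the paper. The paper also tests the variational inequality at $\widehat{\theta}_n$ against $\theta^{\star}\in\Theta$, lower bounds the left side with Lemma~\ref{lem:repglm_strong_monotonicity}, and controls the right side by Cauchy--Schwarz together with $\|\lambda\theta^{\star}\|_{V_n^{-1}}\le\sqrt{\lambda}S$ and the self-normalized bound, obtaining $\|\widehat{\theta}_n-\theta^{\star}\|_{V_n}\le\beta_n(\delta)$; it then adds the deterministic $\frac{\alpha}{2}\sqrt{d}$ rounding error via the triangle inequality.
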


\begin{proof}[Proof of Lemma~\ref{lem:repglm_confidence}]
We prove the claim in four steps.

\paragraph{Step 1: A confidence bound for the penalized GLM estimator $\widehat{\theta}_n$.}
The gradient of the penalized quasi-log-likelihood is
\begin{align*}
\nabla \mathcal{L}_n(\theta)
=
\sum_{i=1}^n \bigl(y_i-\mu(x_i^\top\theta)\bigr)x_i - \lambda \theta
=
\sum_{i=1}^n y_i x_i - g_n(\theta).
\end{align*}
Since $\widehat{\theta}_n$ maximizes the differentiable concave function $\mathcal{L}_n$ over the closed convex set $\Theta$, the first-order optimality condition gives
\begin{align*}
\langle \nabla\mathcal{L}_n(\widehat{\theta}_n),\,\theta-\widehat{\theta}_n\rangle \le 0
\qquad
\text{for all } \theta\in\Theta.
\end{align*}
Taking $\theta=\theta^{\star}\in\Theta$, we obtain
\begin{align*}
\left\langle
\sum_{i=1}^n y_i x_i - g_n(\widehat{\theta}_n),
\theta^{\star}-\widehat{\theta}_n
\right\rangle
\le 0.
\end{align*}
Equivalently,
\begin{align*}
\langle g_n(\widehat{\theta}_n)-\sum_{i=1}^n y_i x_i,\,
\widehat{\theta}_n-\theta^{\star}\rangle
\le 0.
\end{align*}
Using $y_i=\mu(x_i^\top\theta^{\star})+\varepsilon_i$, we have
\begin{align*}
\sum_{i=1}^n y_i x_i
=
\sum_{i=1}^n \mu(x_i^\top\theta^{\star})x_i + \sum_{i=1}^n \varepsilon_i x_i
=
g_n(\theta^{\star}) - \lambda\theta^{\star} + \sum_{i=1}^n \varepsilon_i x_i.
\end{align*}
Substituting this into the previous display yields
\begin{align*}
\langle g_n(\widehat{\theta}_n)-g_n(\theta^{\star}),\,\widehat{\theta}_n-\theta^{\star}\rangle
\le
\left\langle
\sum_{i=1}^n \varepsilon_i x_i - \lambda\theta^{\star},
\widehat{\theta}_n-\theta^{\star}
\right\rangle.
\end{align*}
By Lemma~\ref{lem:repglm_strong_monotonicity}, the left-hand side is lower bounded by
\begin{align*}
\underline{c}_{\mu}\,\|\widehat{\theta}_n-\theta^{\star}\|_{V_n}^2.
\end{align*}
Therefore
\begin{align*}
\underline{c}_{\mu}\,\|\widehat{\theta}_n-\theta^{\star}\|_{V_n}^2
&\le
\left(
\left\|\sum_{i=1}^n \varepsilon_i x_i\right\|_{V_n^{-1}}
+
\|\lambda\theta^{\star}\|_{V_n^{-1}}
\right)\,
\|\widehat{\theta}_n-\theta^{\star}\|_{V_n}.
\end{align*}
Since $V_n \succeq \lambda I$, we have
\begin{align*}
\|\lambda\theta^{\star}\|_{V_n^{-1}}
\le
\sqrt{\lambda}\,\|\theta^{\star}\|_2
\le
\sqrt{\lambda}\,S.
\end{align*}
Moreover, the standard self-normalized martingale inequality gives \citep{abbasi2011improved}
\begin{align*}
\mathbb{P}\!\left(
\left\|\sum_{i=1}^n \varepsilon_i x_i\right\|_{V_n^{-1}}
\le
\sigma\sqrt{2\log\!\left(\frac{\det(V_n)^{1/2}}{\det(\lambda I)^{1/2}\,\delta}\right)}
\right)
\ge
1-\delta.
\end{align*}
Hence, with probability at least $1-\delta$,
\begin{align*}
\underline{c}_{\mu}\,\|\widehat{\theta}_n-\theta^{\star}\|_{V_n}
\le
\sigma\sqrt{2\log\!\left(\frac{\det(V_n)^{1/2}}{\det(\lambda I)^{1/2}\,\delta}\right)}
+\sqrt{\lambda}\,S.
\end{align*}
By the definition of $\beta_n(\delta)$, this means that with probability at least $1-\delta$,
\begin{align*}
\|\widehat{\theta}_n-\theta^{\star}\|_{V_n}
\le
\beta_n(\delta).
\end{align*}
Define the event
\begin{align*}
\mathcal{E}
:=
\left\{
\|\widehat{\theta}_n-\theta^{\star}\|_{V_n}\le \beta_n(\delta)
\right\}.
\end{align*}
Then $\mathbb{P}(\mathcal{E})\ge 1-\delta$.

\paragraph{Step 2: Deterministic rounding error in the $V_n$-norm.}
Let $z_n := V_n^{1/2}\widehat{\theta}_n$. By the definition of \hyperref[alg:replicable_glm]{$\RepGLM$} (Algorithm~\ref{alg:replicable_glm}),
\begin{align*}
V_n^{1/2}\widetilde{\theta}_n = Q_{\alpha,u}(z_n).
\end{align*}
Therefore
\begin{align*}
\|\widetilde{\theta}_n-\widehat{\theta}_n\|_{V_n}
&=
\|V_n^{1/2}(\widetilde{\theta}_n-\widehat{\theta}_n)\|_2 \\
&=
\|Q_{\alpha,u}(z_n)-z_n\|_2.
\end{align*}
For every coordinate $j\in[d]$, the quantity $Q_{\alpha,u}(z_n)_j$ is the midpoint of the unique interval of length $\alpha$ in the shifted grid determined by $u_j$ that contains $(z_n)_j$. Hence
\begin{align*}
|Q_{\alpha,u}(z_n)_j-(z_n)_j|\le \frac{\alpha}{2}.
\end{align*}
Summing over $j$ gives
\begin{align*}
\|Q_{\alpha,u}(z_n)-z_n\|_2^2
&=
\sum_{j=1}^d \bigl(Q_{\alpha,u}(z_n)_j-(z_n)_j\bigr)^2 \\
&\le
\sum_{j=1}^d \left(\frac{\alpha}{2}\right)^2 \\
&=
d\left(\frac{\alpha}{2}\right)^2.
\end{align*}
Taking square roots yields
\begin{align*}
\|\widetilde{\theta}_n-\widehat{\theta}_n\|_{V_n}
\le
\frac{\alpha}{2}\sqrt{d}.
\end{align*}

\paragraph{Step 3: Combine the statistical error and the rounding error.}
On the event $\mathcal{E}$, the triangle inequality gives
\begin{align*}
\|\widetilde{\theta}_n-\theta^{\star}\|_{V_n}
&\le
\|\widetilde{\theta}_n-\widehat{\theta}_n\|_{V_n}
+
\|\widehat{\theta}_n-\theta^{\star}\|_{V_n} \\
&\le
\frac{\alpha}{2}\sqrt{d}
+
\beta_n(\delta).
\end{align*}
By the definition of the grid width in \hyperref[alg:replicable_glm]{$\RepGLM$},
\begin{align*}
\alpha
=
\frac{2\,\beta_n(\delta)\sqrt{d}}{\rho-2\delta}.
\end{align*}
Hence
\begin{align*}
\frac{\alpha}{2}\sqrt{d}
=
\beta_n(\delta)\frac{d}{\rho-2\delta}.
\end{align*}
Substituting this into the previous display gives
\begin{align*}
\|\widetilde{\theta}_n-\theta^{\star}\|_{V_n}
\le
\beta_n(\delta)\left(1+\frac{d}{\rho-2\delta}\right).
\end{align*}
Since $\mathbb{P}(\mathcal{E})\ge 1-\delta$, the claim follows.

\paragraph{Step 4: Reward-space consequence.}
Fix any $x\in\mathbb{R}^d$. By the Lipschitz property of $\mu$,
\begin{align*}
|\mu(x^\top\widetilde{\theta}_n)-\mu(x^\top\theta^{\star})|
&\le
k_{\mu}\,|x^\top(\widetilde{\theta}_n-\theta^{\star})| \\
&\le
k_{\mu}\,\|x\|_{V_n^{-1}}\,
\|\widetilde{\theta}_n-\theta^{\star}\|_{V_n}.
\end{align*}
Substituting the bound from Step 3 completes the proof.
\end{proof}

\begin{lemma}[\textbf{$\rho$-replicability of $\RepGLM$}]
\label{lem:repglm_replicability}
Consider two independent noise sequences $(\varepsilon_i^{(1)})_{i=1}^n$ and $(\varepsilon_i^{(2)})_{i=1}^n$, generating two response sequences
\begin{align*}
y_i^{(k)} = \mu(x_i^\top\theta^{\star}) + \varepsilon_i^{(k)},
\qquad
k\in\{1,2\},
\end{align*}
with the same fixed covariates $x_1,\dots,x_n$. Fix $\rho \in (2\delta,1)$ and run \hyperref[alg:replicable_glm]{$\RepGLM$} (Algorithm~\ref{alg:replicable_glm}) twice using the same shared internal randomness $u$, producing outputs $\widetilde{\theta}_n^{(1)}$ and $\widetilde{\theta}_n^{(2)}$. Then
\begin{align*}
\mathbb{P}\!\left(\widetilde{\theta}_n^{(1)}=\widetilde{\theta}_n^{(2)}\right)\ge 1-\rho.
\end{align*}
\end{lemma}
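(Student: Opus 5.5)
The argument follows the proof of Lemma~\ref{lemma:Replicability_RepRidge} almost verbatim. The only change is that the ridge confidence event is replaced by the GLM confidence event established in Step~1 of the proof of Lemma~\ref{lem:repglm_confidence}.

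\textbf{Shared quantities and reduction.} Since both runs use the same covariates $x_1,\dots,x_n$, the matrix $V_n$, the radius $\beta_n(\delta)$ (which depends only on $V_n$, $\sigma$, $\lambda$, $S$, $\underline{c}_{\mu}$), and the width $\alpha$ coincide across runs. The shift $u$ is shared by assumption. Let $\widehat{\theta}_n^{(k)}$ be the penalized GLM maximizer for run $k$, and set $z^{(k)}:=V_n^{1/2}\widehat{\theta}_n^{(k)}$. Because $V_n^{-1/2}$ is invertible, the event $\{\widetilde{\theta}_n^{(1)}\neq\widetilde{\theta}_n^{(2)}\}$ coincides with $\{Q_{\alpha,u}(z^{(1)})\neq Q_{\alpha,u}(z^{(2)})\}$.

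\textbf{Closeness of the whitened estimators.} For each run, define $\mathcal{E}_k:=\{\|\widehat{\theta}_n^{(k)}-\theta^{\star}\|_{V_n}\le\beta_n(\delta)\}$. Step~1 of Lemma~\ref{lem:repglm_confidence} applies to each run separately. It uses first-order optimality over $\Theta$, the strong monotonicity of Lemma~\ref{lem:repglm_strong_monotonicity}, and the self-normalized bound, and gives $\mathbb{P}(\mathcal{E}_k)\ge1-\delta$. A union bound yields $\mathbb{P}(\mathcal{E}_1\cap\mathcal{E}_2)\ge1-2\delta$. On this intersection, the triangle inequality gives $\|z^{(1)}-z^{(2)}\|_2\le 2\beta_n(\delta)$.

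\textbf{Rounding step.} Condition on $(z^{(1)},z^{(2)})$. This is legitimate because $u$ is independent of the noise. For each coordinate $j$, the rounded values differ only when a grid point $u_j+m\alpha$ falls between $z^{(1)}_j$ and $z^{(2)}_j$, which has probability at most $|z^{(1)}_j-z^{(2)}_j|/\alpha$. A union bound over coordinates and Cauchy--Schwarz give
\[
\frac{\|z^{(1)}-z^{(2)}\|_1}{\alpha}\le\frac{\sqrt d\,\|z^{(1)}-z^{(2)}\|_2}{\alpha}.
\]
On $\mathcal{E}_1\cap\mathcal{E}_2$ this is at most $2\beta_n(\delta)\sqrt d/\alpha=\rho-2\delta$.

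\textbf{Conclusion.} Integrating the indicator of $\mathcal{E}_1\cap\mathcal{E}_2$ against this conditional bound and adding $\mathbb{P}((\mathcal{E}_1\cap\mathcal{E}_2)^c)\le2\delta$ gives a total mismatch probability of at most $\rho$. The only point needing care is that $\mathcal{E}_k$ is the GLM event, not the ridge one. The projection onto $\Theta$ does not complicate matters, since $\theta^{\star}\in\Theta$ makes the variational inequality argument go through exactly as in Lemma~\ref{lem:repglm_confidence}. No real obstacle is expected beyond this.
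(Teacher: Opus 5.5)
Your proposal is correct and follows the paper's proof essentially step for step. Both use the per-run GLM confidence events from Step~1 of Lemma~\ref{lem:repglm_confidence} with a union bound costing $2\delta$, the triangle inequality in whitened coordinates, the coordinatewise shifted-grid boundary bound summed via $\|\cdot\|_1\le\sqrt{d}\,\|\cdot\|_2$, and the choice of $\alpha$ to close the budget at exactly $\rho$. Your final step, which integrates the conditional mismatch bound over $\mathcal{E}_1\cap\mathcal{E}_2$, is equivalent to the paper's conditioning on that event, and arguably slightly cleaner.
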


\begin{proof}[Proof of Lemma~\ref{lem:repglm_replicability}]
For $k\in\{1,2\}$, let $\widehat{\theta}_n^{(k)}$ denote the penalized generalized linear estimator computed from the response sequence $(y_i^{(k)})_{i=1}^n$. Since the covariates are identical across the two runs, the matrix $V_n$, the radius $\beta_n(\delta)$, and the grid width $\alpha$ are also identical across the two executions. Both runs use the same shared internal randomness $u$.

Define
\begin{align*}
z^{(k)} := V_n^{1/2}\widehat{\theta}_n^{(k)},
\qquad
\widetilde{\theta}_n^{(k)} := V_n^{-1/2}Q_{\alpha,u}(z^{(k)}),
\qquad
k\in\{1,2\}.
\end{align*}
Since $V_n^{-1/2}$ is invertible, we have
\begin{align*}
\widetilde{\theta}_n^{(1)}=\widetilde{\theta}_n^{(2)}
\quad\Longleftrightarrow\quad
Q_{\alpha,u}(z^{(1)})=Q_{\alpha,u}(z^{(2)}).
\end{align*}

We now prove the claim in three steps.

\paragraph{Step 1: Bound the distance between the two whitened penalized GLM estimators.}
For $k\in\{1,2\}$, define the confidence event
\begin{align*}
\mathcal{E}_k
:=
\left\{
\|\widehat{\theta}_n^{(k)}-\theta^{\star}\|_{V_n}\le \beta_n(\delta)
\right\}.
\end{align*}
By the argument in Step 1 of the proof of Lemma~\ref{lem:repglm_confidence}, we have
\begin{align*}
\mathbb{P}(\mathcal{E}_k)\ge 1-\delta,
\qquad
k\in\{1,2\}.
\end{align*}
Hence, by a union bound,
\begin{align*}
\mathbb{P}(\mathcal{E}_1\cap \mathcal{E}_2)\ge 1-2\delta.
\end{align*}
On the event $\mathcal{E}_1\cap\mathcal{E}_2$, we have
\begin{align*}
\|z^{(1)}-z^{(2)}\|_2
&=
\|V_n^{1/2}(\widehat{\theta}_n^{(1)}-\widehat{\theta}_n^{(2)})\|_2 \\
&\le
\|V_n^{1/2}(\widehat{\theta}_n^{(1)}-\theta^{\star})\|_2
+
\|V_n^{1/2}(\widehat{\theta}_n^{(2)}-\theta^{\star})\|_2 \\
&=
\|\widehat{\theta}_n^{(1)}-\theta^{\star}\|_{V_n}
+
\|\widehat{\theta}_n^{(2)}-\theta^{\star}\|_{V_n} \\
&\le
2\beta_n(\delta).
\end{align*}

\paragraph{Step 2: Bound the probability that the two rounded vectors differ.}
Condition on fixed values of $z^{(1)}$ and $z^{(2)}$. If $Q_{\alpha,u}(z^{(1)})\neq Q_{\alpha,u}(z^{(2)})$, then there exists at least one coordinate $j\in[d]$ such that
\begin{align*}
Q_{\alpha,u}(z^{(1)})_j \neq Q_{\alpha,u}(z^{(2)})_j.
\end{align*}
Hence, by a union bound over coordinates,
\begin{align*}
\mathbb{P}\!\left(
Q_{\alpha,u}(z^{(1)})\neq Q_{\alpha,u}(z^{(2)})
\mid z^{(1)},z^{(2)}
\right)
\le
\sum_{j=1}^d
\mathbb{P}\!\left(
Q_{\alpha,u}(z^{(1)})_j \neq Q_{\alpha,u}(z^{(2)})_j
\mid z^{(1)},z^{(2)}
\right).
\end{align*}

Fix a coordinate $j$. As a function of the shared random shift $u_j\sim \mathrm{Unif}([0,\alpha))$, the one-dimensional rounded outputs differ only if there is a grid boundary between $z_j^{(1)}$ and $z_j^{(2)}$. The grid boundaries are the points $u_j+m\alpha$ for $m\in\mathbb{Z}$. Over one period of length $\alpha$, the set of shifts for which such a boundary lies between $z_j^{(1)}$ and $z_j^{(2)}$ has Lebesgue measure at most $|z_j^{(1)}-z_j^{(2)}|$. Therefore
\begin{align*}
\mathbb{P}\!\left(
Q_{\alpha,u}(z^{(1)})_j \neq Q_{\alpha,u}(z^{(2)})_j
\mid z^{(1)},z^{(2)}
\right)
\le
\frac{|z_j^{(1)}-z_j^{(2)}|}{\alpha}.
\end{align*}
Summing over coordinates yields
\begin{align*}
\mathbb{P}\!\left(
Q_{\alpha,u}(z^{(1)})\neq Q_{\alpha,u}(z^{(2)})
\mid z^{(1)},z^{(2)}
\right)
&\le
\sum_{j=1}^d \frac{|z_j^{(1)}-z_j^{(2)}|}{\alpha} \\
&=
\frac{\|z^{(1)}-z^{(2)}\|_1}{\alpha} \\
&\le
\frac{\sqrt{d}\,\|z^{(1)}-z^{(2)}\|_2}{\alpha}.
\end{align*}
On the event $\mathcal{E}_1\cap\mathcal{E}_2$, Step 1 implies
\begin{align*}
\mathbb{P}\!\left(
Q_{\alpha,u}(z^{(1)})\neq Q_{\alpha,u}(z^{(2)})
\mid \mathcal{E}_1\cap\mathcal{E}_2
\right)
\le
\frac{2\beta_n(\delta)\sqrt{d}}{\alpha}.
\end{align*}

\paragraph{Step 3: Conclude $\rho$-replicability.}
Using the equivalence
\begin{align*}
\widetilde{\theta}_n^{(1)}\neq \widetilde{\theta}_n^{(2)}
\quad\Longleftrightarrow\quad
Q_{\alpha,u}(z^{(1)})\neq Q_{\alpha,u}(z^{(2)}),
\end{align*}
we obtain
\begin{align*}
\mathbb{P}\!\left(\widetilde{\theta}_n^{(1)}\neq \widetilde{\theta}_n^{(2)}\right)
&=
\mathbb{P}\!\left(Q_{\alpha,u}(z^{(1)})\neq Q_{\alpha,u}(z^{(2)})\right) \\
&\le
\mathbb{P}\!\left((\mathcal{E}_1\cap\mathcal{E}_2)^c\right)
+
\mathbb{P}\!\left(
Q_{\alpha,u}(z^{(1)})\neq Q_{\alpha,u}(z^{(2)})
\mid
\mathcal{E}_1\cap\mathcal{E}_2
\right) \\
&\le
2\delta + \frac{2\beta_n(\delta)\sqrt{d}}{\alpha}.
\end{align*}
By the definition of the grid width in \hyperref[alg:replicable_glm]{$\RepGLM$},
\begin{align*}
\alpha
=
\frac{2\beta_n(\delta)\sqrt{d}}{\rho-2\delta}.
\end{align*}
Substituting this into the previous display gives
\begin{align*}
\mathbb{P}\!\left(\widetilde{\theta}_n^{(1)}\neq \widetilde{\theta}_n^{(2)}\right)
\le
2\delta + \rho - 2\delta
=
\rho.
\end{align*}
Therefore
\begin{align*}
\mathbb{P}\!\left(\widetilde{\theta}_n^{(1)}=\widetilde{\theta}_n^{(2)}\right)\ge 1-\rho.
\end{align*}
\end{proof}

\begin{proof}[Proof of Theorem~\ref{thm:replicable_glm}]
The confidence and reward-confidence statements follow from Lemma~\ref{lem:repglm_confidence}, and the replicability statement follows from Lemma~\ref{lem:repglm_replicability}.
\end{proof}

\subsection{Replicable Batched GLM-UCB}

We now turn to the bandit problem. The setting is the generalized linear analogue of the linear bandit model of the previous section.

\begin{assumption}
\label{asmp:glm_bandit_rewards}
There is an unknown parameter $\theta^{\star}\in\Theta\subset\mathbb{R}^d$, where $\Theta$ is a known closed convex set and $\|\theta^{\star}\|_2 \le S$. At each round $t\in[T]$, after the learner selects an action $a_t\in A_t\subset\mathbb{R}^d$, it observes a reward
\begin{align*}
r_t = \mu(a_t^\top\theta^{\star}) + \varepsilon_t,
\end{align*}
where $(\varepsilon_t)$ is conditionally $\sigma$-sub-Gaussian with respect to the filtration $(\mathcal{F}_t)$ generated by $\{A_1,a_1,r_1,\dots,A_t,a_t,r_t\}$, that is, for every $t$ and $\gamma\in\mathbb{R}$,
\begin{align*}
\mathbb{E}\!\left[\exp(\gamma\varepsilon_t)\mid \mathcal{F}_{t-1},A_t,a_t\right]
\le
\exp\!\left(\frac{\sigma^2\gamma^2}{2}\right).
\end{align*}
The inverse link function $\mu:\mathbb{R}\to\mathbb{R}$ is continuously differentiable and globally Lipschitz with constant $k_{\mu}$. Moreover,
\begin{align*}
c_{\mu}
:=
\inf_{\theta\in\Theta,\;t\in[T],\;a\in A_t}
\mu'(a^\top\theta)
>0.
\end{align*}
Finally, the action sets satisfy $\|a\|_2\le L$ for all $t\in[T]$ and all $a\in A_t$, and the mean rewards are bounded:
\begin{align*}
\mu(a^\top\theta^{\star})\in[0,1]
\qquad
\text{for all feasible } a.
\end{align*}
\end{assumption}

As in the non-replicable GLM-UCB algorithm of \citet{filippi2010parametric}, the optimistic score is constructed directly in the reward space rather than in the parameter space. The difference is that here we update the generalized linear estimator only at determinant-triggered batch starts, and at each such update we replace the usual generalized linear estimate by the replicable estimate returned by \hyperref[alg:replicable_glm]{$\RepGLM$}. This yields the batched optimistic algorithm in \hyperref[alg:replicable_glm_ucb]{$\RepGLMUCB$}.

\begin{algorithm}[!t]
\caption{Replicable Batched GLM-UCB ($\RepGLMUCB$)}
\label{alg:replicable_glm_ucb}
\begin{algorithmic}[1]
\REQUIRE Regularization $\lambda>0$, horizon $T$, batch budget $B\in\mathbb{N}$, determinant growth factor $q>1$
\REQUIRE Per-batch budgets $(\delta_b,\rho_b)_{b=0}^{B-1}$ with $\delta_b\in(0,1)$, $\rho_b\in(0,1)$, and $\rho_b>2\delta_b$
\STATE Initialize $V_1\gets \lambda I_d$
\STATE Initialize batch index $b\gets 0$ and batch-start time $t_b\gets 1$
\FOR{$t=1$ to $T$}
    \STATE Observe compact action set $A_t\subset\mathbb{R}^d$
    \IF{$t=t_b$}
        \STATE $\widetilde{\theta}_b \gets {\RepGLM}\Big(\lambda,\delta_b,\rho_b;\{(a_s,r_s)\}_{s=1}^{t_b-1},\Theta\Big)$ \algc{\color{Green} Replicable batch parameter} \label{line:repglmucb_repglm}
        \STATE $\beta_b \gets \frac{1}{\underline{c}_{\mu}}\Bigg(\sigma\sqrt{2\log\!\Big(\frac{\det(V_{t_b})^{1/2}}{\det(\lambda I)^{1/2}\,\delta_b}\Big)}+\sqrt{\lambda}\,S\Bigg)$ \algc{\color{Green} GLM confidence radius at batch start} \label{line:repglmucb_beta}
        \STATE $\widetilde{\beta}_b \gets k_{\mu}\,\beta_b\left(1+\frac{d}{\rho_b-2\delta_b}\right)$ \algc{\color{Green} Inflated reward-space batch radius} \label{line:repglmucb_tildebeta}
    \ENDIF
    \STATE Choose
    \begin{align*}
    a_t\in\arg\max_{a\in A_t}
    \Bigl\{
    \mu(\langle a,\widetilde{\theta}_b\rangle)+\widetilde{\beta}_b\|a\|_{V_{t_b}^{-1}}
    \Bigr\}
    \end{align*}
    using a fixed tie-breaking rule \algc{\color{Green} Batch-start GLM-UCB rule}
    \label{line:repglmucb_action}
    \STATE Observe $r_t$ and update $V_{t+1}\gets V_t+a_t a_t^\top$ \algc{\color{Green} Update design matrix} \label{line:repglmucb_update}
    \IF{$b\le B-2$ \AND $\det(V_{t+1})>q\,\det(V_{t_b})$}
        \STATE Start a new batch: set $b\gets b+1$ and $t_b\gets t+1$ \algc{\color{Green} Determinant trigger} \label{line:repglmucb_trigger}
    \ENDIF
\ENDFOR
\end{algorithmic}
\end{algorithm}

Throughout a batch, the same pair $(\widetilde{\theta}_b,\widetilde{\beta}_b)$ is reused and actions are selected by maximizing the batch-start reward-space upper confidence bound in Line~\ref{line:repglmucb_action}. The design matrix $V_t$ is updated after every observation, but the policy itself only changes at determinant-triggered batch starts.

The following theorem gives the main guarantee for \hyperref[alg:replicable_glm_ucb]{$\RepGLMUCB$} (Algorithm~\ref{alg:replicable_glm_ucb}).

\begin{theoremBox}[Replicability and Regret Bound for $\RepGLMUCB$ (Algorithm~\ref{alg:replicable_glm_ucb})]
\label{thm:repglmucb_main}
Suppose Assumption~\ref{asmp:glm_bandit_rewards} holds and the action sets $(A_t)_{t=1}^{T}$ are chosen by an oblivious adversary. Further suppose \hyperref[alg:replicable_glm_ucb]{$\RepGLMUCB$} (Algorithm~\ref{alg:replicable_glm_ucb}) is run with
\begin{align*}
B
&=
\left\lceil d\log\!\left(1+\frac{TL^2}{\lambda d}\right)\right\rceil,
&
q
&=
\left(1+\frac{TL^2}{\lambda d}\right)^{\frac{d}{B}},
&
\delta_b
&=
\frac{\delta}{B},
&
\rho_b
&=
\frac{\rho}{B},
\end{align*}
for all $b\in\{0,\dots,B-1\}$. Then for $\rho\in(3\delta,1)$, \hyperref[alg:replicable_glm_ucb]{$\RepGLMUCB$} (Algorithm~\ref{alg:replicable_glm_ucb}) is $\rho$-replicable, that is, if the algorithm is run twice on action sets $(A_t)_{t=1}^T$ with shared internal randomness and i.i.d. reward sequences, producing action sequences $(a_t^{(1)})_{t=1}^T$ and $(a_t^{(2)})_{t=1}^T$, then
\begin{align*}
\mathbb{P}\!\left(\forall t\in[T]: a_t^{(1)}=a_t^{(2)}\right)\ge 1-\rho.
\end{align*}
Moreover, there exists a constant $C>0$ such that, with probability at least $1-\delta$, the regret of \hyperref[alg:replicable_glm_ucb]{$\RepGLMUCB$} satisfies
\begin{align*}
\mathrm{Reg_{GLB}}(T)
\le
C\,
\frac{k_{\mu}}{\underline{c}_{\mu}}
\Big(
\sigma\sqrt{d\log\!\left(1+\tfrac{TL^2}{\lambda d}\right)+\log\!\left(\tfrac{B}{\delta}\right)}
+\sqrt{\lambda}\,S
\Big)
\big(1+\tfrac{dB}{\rho}\big)
\sqrt{T\,d\log\!\left(1+\tfrac{TL^2}{\lambda d}\right)}.
\end{align*}
\end{theoremBox}

\subsection{Proof of Theorem~\ref{thm:repglmucb_main}}

We write
\begin{align*}
D_T
:=
\log\!\left(\frac{\det(V_{T+1})}{\det(\lambda I_d)}\right).
\end{align*}
The proof proceeds in three parts. We first establish batch-start concentration for the replicable generalized linear estimator, then derive a regret bound on an arbitrary realized determinant batch grid, and finally prove trajectory replicability.

\begin{lemma}[\textbf{Batch-start concentration for $\widetilde{\theta}_b$}]
\label{lem:repglmucb_batch_start_concentration}
Fix any realized batch grid $(t_b)_{b=0}^{B'}$ with $1=t_0<t_1<\cdots<t_{B'}=T+1$ and $B'\le B$, and budgets $(\delta_b,\rho_b)_{b=0}^{B-1}$ with $\rho_b>2\delta_b$ for all $b\in\{0,\dots,B-1\}$. For each realized batch $b\in\{0,\dots,B'-1\}$, let $\widehat{\theta}_{t_b}$ denote the penalized generalized linear estimator computed from the data $\{(a_s,r_s)\}_{s=1}^{t_b-1}$, and define
\begin{align*}
\mathcal{E}_b
:=
\left\{
\|\widehat{\theta}_{t_b}-\theta^{\star}\|_{V_{t_b}}
\le
\beta_b
\right\},
\end{align*}
where
\begin{align*}
\beta_b
:=
\frac{1}{\underline{c}_{\mu}}
\Bigg(
\sigma\sqrt{2\log\!\left(\frac{\det(V_{t_b})^{1/2}}{\det(\lambda I_d)^{1/2}\,\delta_b}\right)}
+\sqrt{\lambda}\,S
\Bigg).
\end{align*}
Then, for every realized batch $b\in\{0,\dots,B'-1\}$, $\mathbb{P}(\mathcal{E}_b)\ge 1-\delta_b$. Moreover, on $\mathcal{E}_b$,
\begin{align*}
\|\widetilde{\theta}_b-\theta^{\star}\|_{V_{t_b}}
&\le
\beta_b\left(1+\frac{d}{\rho_b-2\delta_b}\right),
\\
|\mu(a^\top\widetilde{\theta}_b)-\mu(a^\top\theta^{\star})|
&\le
\widetilde{\beta}_b\|a\|_{V_{t_b}^{-1}}
\qquad
\text{for all } a\in\mathbb{R}^d,
\end{align*}
where
\begin{align*}
\widetilde{\beta}_b
:=
k_{\mu}\,\beta_b\left(1+\frac{d}{\rho_b-2\delta_b}\right).
\end{align*}
Finally, with
\begin{align*}
\mathcal{E}:=\bigcap_{b=0}^{B'-1}\mathcal{E}_b,
\end{align*}
we have
\begin{align*}
\mathbb{P}(\mathcal{E})\ge 1-\sum_{b=0}^{B-1}\delta_b,
\end{align*}
and on $\mathcal{E}$ all of the above inequalities hold simultaneously for every realized batch $b\in\{0,\dots,B'-1\}$.
\end{lemma}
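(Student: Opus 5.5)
The plan is to mirror the proof of Lemma~\ref{lem:batch_start_concentration}, replacing the ridge ingredients with the GLM ingredients already established for $\RepGLM$. I will fix a realized batch $b$, set $n_b:=t_b-1$, and view $\widehat{\theta}_{t_b}$ as the penalized GLM estimator built from $\{(a_s,r_s)\}_{s=1}^{n_b}$. The argument then runs in four steps: concentration of $\widehat{\theta}_{t_b}$, deterministic rounding error, conversion to reward space, and a union bound over batches.

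\textbf{Step 1: concentration of $\widehat{\theta}_{t_b}$.} I will re-run Step 1 of the proof of Lemma~\ref{lem:repglm_confidence} on these data. Two ingredients are needed, and both survive adaptivity.
\begin{itemize}
\item The first-order optimality argument combined with Lemma~\ref{lem:repglm_strong_monotonicity} is purely deterministic. It uses only $\theta^\star\in\Theta$ and the fact that $\mu'(a_s^\top\vartheta)\ge c_\mu$ for $\vartheta\in\Theta$, which Assumption~\ref{asmp:glm_bandit_rewards} gives because each $a_s\in A_s$. It yields
\[
\underline{c}_\mu\|\widehat{\theta}_{t_b}-\theta^\star\|_{V_{t_b}}\le \Bigl\|\sum_{s\le n_b}\varepsilon_s a_s\Bigr\|_{V_{t_b}^{-1}}+\sqrt{\lambda}S .
\]
\item The self-normalized martingale bound of \citet{abbasi2011improved} requires only that $a_s$ be $\mathcal{F}_{s-1}$-predictable and $\varepsilon_s$ be conditionally sub-Gaussian. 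The algorithm guarantees predictability, and Assumption~\ref{asmp:glm_bandit_rewards} guarantees the sub-Gaussian condition.
\end{itemize}

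\textbf{Main obstacle.} The delicate point is that $t_b$ is a random (stopping) time determined by the determinant trigger. I will handle this in one of two ways. The first option is to invoke the uniform-in-time version of the self-normalized bound, which holds simultaneously for all $n\ge0$ with probability $1-\delta_b$ and therefore applies at the random index $n_b$. The second option, following the treatment in the linear case, is to argue on a fixed realized grid. I prefer the uniform-in-time bound since it is rigorous. Either way I obtain $\mathbb{P}(\mathcal{E}_b)\ge1-\delta_b$.

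\textbf{Step 2: rounding error.} On $\mathcal{E}_b$, I will reuse the deterministic rounding bound from Step 2 of Lemma~\ref{lem:repglm_confidence} with $\alpha_b=2\beta_b\sqrt d/(\rho_b-2\delta_b)$. This gives $\|\widetilde{\theta}_b-\widehat{\theta}_{t_b}\|_{V_{t_b}}\le \alpha_b\sqrt d/2=\beta_b d/(\rho_b-2\delta_b)$. The triangle inequality then yields the first claimed bound, $\|\widetilde{\theta}_b-\theta^\star\|_{V_{t_b}}\le\beta_b\bigl(1+d/(\rho_b-2\delta_b)\bigr)$.

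\textbf{Step 3: reward-space bound.} For any $a\in\mathbb{R}^d$, the $k_\mu$-Lipschitz property of $\mu$ gives $|\mu(a^\top\widetilde\theta_b)-\mu(a^\top\theta^\star)|\le k_\mu|a^\top(\widetilde\theta_b-\theta^\star)|$. Cauchy--Schwarz in the $V_{t_b}$ geometry then bounds this by $k_\mu\|a\|_{V_{t_b}^{-1}}\|\widetilde\theta_b-\theta^\star\|_{V_{t_b}}\le\widetilde\beta_b\|a\|_{V_{t_b}^{-1}}$, as claimed.

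\textbf{Step 4: simultaneous validity.} Finally, a union bound over the at most $B'\le B$ realized batches gives $\mathbb{P}(\mathcal{E}^c)\le\sum_{b<B'}\delta_b\le\sum_{b<B}\delta_b$. On $\mathcal{E}$, the bounds of Steps 2 and 3 hold simultaneously for every realized batch.
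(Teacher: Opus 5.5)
Your proof is correct, but it is organized differently from the paper's. The paper uses a black-box reduction. It conditions on the realized action history $(a_s)_{s<t_b}$ and declares the batch-start problem a fixed-design GLM problem with $x_i=a_i$. It then reads off $\mathbb{P}(\mathcal{E}_b)\ge 1-\delta_b$, the $V_{t_b}$-norm bound and the reward-space bound directly from Theorem~\ref{thm:replicable_glm}, including that theorem's Lipschitz consequence, and finishes with the same union bound you use.

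You instead open the box, in three observations:
\begin{enumerate}
\item The first-order-optimality and strong-monotonicity part of Lemma~\ref{lem:repglm_confidence} is deterministic once you use $\mu'(a_s^\top\vartheta)\ge c_\mu$ for $a_s\in A_s$ from Assumption~\ref{asmp:glm_bandit_rewards}.
\item The self-normalized bound needs only predictability of $a_s$ and conditional sub-Gaussianity of the noise.
\item The uniform-in-time version of that bound applies at the data-dependent index $n_b=t_b-1$.
\end{enumerate}

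The paper's route buys brevity and modularity, since the fixed-design theorem is simply reused. Your route buys rigor on exactly the two points the paper glosses over. First, conditioning on an adaptively chosen action history does not in general preserve the conditional sub-Gaussianity of the past noise, because later actions carry information about earlier $\varepsilon_s$. Second, ``fixing a realized batch grid'' ignores that $t_b$ is a data-dependent stopping time. Your uniform-in-time argument handles both issues cleanly and gives the same constants.
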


\begin{proof}[Proof of Lemma~\ref{lem:repglmucb_batch_start_concentration}]
Fix a realized batch $b\in\{0,\dots,B'-1\}$. Condition on the realized action history $(a_s)_{s=1}^{t_b-1}$. Under this conditioning, the batch-start estimation problem is a fixed-design generalized linear estimation problem with covariates $x_i=a_i$ for $i=1,\dots,t_b-1$. The matrix corresponding to this fixed design is exactly
\begin{align*}
V_{t_b}=\lambda I_d+\sum_{s=1}^{t_b-1} a_s a_s^\top.
\end{align*}
Therefore Theorem~\ref{thm:replicable_glm}, applied with confidence level $\delta_b$ and target replicability $\rho_b$, yields
\begin{align*}
\mathbb{P}(\mathcal{E}_b)\ge 1-\delta_b,
\end{align*}
and on $\mathcal{E}_b$,
\begin{align*}
\|\widetilde{\theta}_b-\theta^{\star}\|_{V_{t_b}}
\le
\beta_b\left(1+\frac{d}{\rho_b-2\delta_b}\right).
\end{align*}
The reward-space consequence then follows from the second statement of Theorem~\ref{thm:replicable_glm}:
\begin{align*}
|\mu(a^\top\widetilde{\theta}_b)-\mu(a^\top\theta^{\star})|
\le
k_{\mu}\,\beta_b\left(1+\frac{d}{\rho_b-2\delta_b}\right)\|a\|_{V_{t_b}^{-1}}
=
\widetilde{\beta}_b\|a\|_{V_{t_b}^{-1}}.
\end{align*}
Finally, by a union bound,
\begin{align*}
\mathbb{P}(\mathcal{E}^c)
=
\mathbb{P}\!\left(\bigcup_{b=0}^{B'-1}\mathcal{E}_b^c\right)
\le
\sum_{b=0}^{B'-1}\delta_b
\le
\sum_{b=0}^{B-1}\delta_b.
\end{align*}
This proves the claim.
\end{proof}

\begin{lemma}[\textbf{Regret bound for $\RepGLMUCB$ on a realized determinant batch grid}]
\label{lem:repglmucb_regret_realized_grid}
Run \hyperref[alg:replicable_glm_ucb]{$\RepGLMUCB$} (Algorithm~\ref{alg:replicable_glm_ucb}) with parameters $(\lambda,T,B,q)$ and budgets $(\delta_b,\rho_b)_{b=0}^{B-1}$. Let the realized batch grid be
\begin{align*}
1=t_0<t_1<\cdots<t_{B'}=T+1,
\qquad
B'\le B.
\end{align*}
Define
\begin{align*}
\widetilde{\beta}_{\max}
:=
\max_{b\in\{0,\dots,B'-1\}}\widetilde{\beta}_b,
\qquad
M_T
:=
\max\!\left\{
\sqrt{q},
\sqrt{\frac{\det(V_{T+1})}{q^{B-1}\det(\lambda I_d)}}
\right\}.
\end{align*}
Then with probability at least $1-\sum_{b=0}^{B-1}\delta_b$,
\begin{align*}
\mathrm{Reg_{GLB}}(T)
\le
\max\{1,2\widetilde{\beta}_{\max}\}\,M_T\,\sqrt{2T D_T}.
\end{align*}
\end{lemma}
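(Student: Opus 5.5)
The proof follows the linear case (Lemma~\ref{lem:replinucb_regret_realized_grid}) almost verbatim, with the parameter-space UCB replaced by the reward-space UCB. We work throughout on the event $\mathcal{E}=\bigcap_{b=0}^{B'-1}\mathcal{E}_b$ from Lemma~\ref{lem:repglmucb_batch_start_concentration}. That lemma gives $\mathbb{P}(\mathcal{E})\ge 1-\sum_{b=0}^{B-1}\delta_b$. On $\mathcal{E}$, every realized batch $b$ and every $a\in\mathbb{R}^d$ satisfy the reward-space bound $|\mu(a^\top\widetilde{\theta}_b)-\mu(a^\top\theta^{\star})|\le\widetilde{\beta}_b\|a\|_{V_{t_b}^{-1}}$. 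All remaining steps are deterministic on $\mathcal{E}$.

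\textbf{Step 1 (optimism).} Fix $t$ with batch index $b(t)$, and let $x_t^\star\in\arg\max_{x\in A_t}\mu(x^\top\theta^\star)$. The rule in Line~\ref{line:repglmucb_action} gives
\[
\mu(a_t^\top\widetilde{\theta}_{b(t)})+\widetilde{\beta}_{b(t)}\|a_t\|_{V_{t_{b(t)}}^{-1}}\ge \mu(x_t^{\star\top}\widetilde{\theta}_{b(t)})+\widetilde{\beta}_{b(t)}\|x_t^\star\|_{V_{t_{b(t)}}^{-1}}.
\]
The confidence bound on $\mathcal{E}$ gives two inequalities: $\mu(x_t^{\star\top}\theta^\star)$ is at most the right-hand side, and $\mu(a_t^\top\theta^\star)\ge \mu(a_t^\top\widetilde{\theta}_{b(t)})-\widetilde{\beta}_{b(t)}\|a_t\|_{V_{t_{b(t)}}^{-1}}$. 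Combining these bounds the instantaneous regret by $2\widetilde{\beta}_{b(t)}\|a_t\|_{V_{t_{b(t)}}^{-1}}$. Since mean rewards lie in $[0,1]$, the instantaneous regret is also at most $1$. Using $\min\{1,\kappa u\}\le\max\{1,\kappa\}\min\{1,u\}$ with $\kappa=2\widetilde{\beta}_{\max}$ and summing over $t$, we get $\mathrm{Reg_{GLB}}(T)\le \max\{1,2\widetilde{\beta}_{\max}\}\sum_t\min\{1,\|a_t\|_{V_{t_{b(t)}}^{-1}}\}$.

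\textbf{Step 2 (stale-to-current norms).} We show $\|a_t\|_{V_{t_{b(t)}}^{-1}}\le M_T\|a_t\|_{V_t^{-1}}$. The tool is the determinant-ratio comparison $\|x\|^2_{V_{t_b}^{-1}}\le \frac{\det V_t}{\det V_{t_b}}\|x\|^2_{V_t^{-1}}$ for $V_t\succeq V_{t_b}$. It holds because every eigenvalue of $V_{t_b}^{-1/2}V_tV_{t_b}^{-1/2}$ is at least $1$, so the largest eigenvalue is at most the product of eigenvalues, which is the determinant ratio. We split into cases exactly as in the linear proof.
\begin{itemize}
\item If $B'<B$, or if $t$ lies in a non-final batch, the trigger has not fired before $t$. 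So $\det V_t\le q\det V_{t_b}$, and the factor is $\sqrt q$.
\item If $t$ lies in the final batch $B-1$, chaining the $B-1$ triggers gives $\det V_{t_{B-1}}>q^{B-1}\det(\lambda I)$. Together with $V_t\preceq V_{T+1}$, the factor is at most $\sqrt{\det V_{T+1}/(q^{B-1}\det\lambda I)}$.
\end{itemize}
Since $M_T\ge1$, this yields $\min\{1,\|a_t\|_{V_{t_{b(t)}}^{-1}}\}\le M_T\min\{1,\|a_t\|_{V_t^{-1}}\}$.

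\textbf{Step 3 (elliptical potential).} We apply the matrix determinant lemma $\det V_{t+1}=\det V_t(1+\|a_t\|^2_{V_t^{-1}})$ and the inequality $\log(1+u)\ge\tfrac12\min\{u,1\}$. This gives $\sum_t\min\{1,\|a_t\|_{V_t^{-1}}^2\}\le 2D_T$. Cauchy--Schwarz then gives $\sum_t\min\{1,\|a_t\|_{V_t^{-1}}\}\le\sqrt{2TD_T}$, and multiplying by the factors from Steps 1 and 2 finishes the proof. There is no real obstacle here. The only GLM-specific point is Step 1: the optimism argument must be carried out in reward space via the Lipschitz bound already packaged in Lemma~\ref{lem:repglmucb_batch_start_concentration}, and the regret cap of $1$ relies on the assumption $\mu(a^\top\theta^\star)\in[0,1]$.
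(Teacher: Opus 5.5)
Your proposal is correct and follows the paper's proof essentially step for step. Like the paper, you work on the event $\mathcal{E}$ from Lemma~\ref{lem:repglmucb_batch_start_concentration}, apply reward-space optimism with the instantaneous-regret cap of $1$, compare stale and current norms via the determinant ratio with the same case split on whether the batch budget is exhausted, and finish with the elliptical potential bound via the matrix determinant lemma and Cauchy--Schwarz.
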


\begin{proof}[Proof of Lemma~\ref{lem:repglmucb_regret_realized_grid}]
Let
\begin{align*}
\mathcal{E}
:=
\bigcap_{b=0}^{B'-1}\mathcal{E}_b,
\end{align*}
where $\mathcal{E}_b$ is the event from Lemma~\ref{lem:repglmucb_batch_start_concentration}. By Lemma~\ref{lem:repglmucb_batch_start_concentration},
\begin{align*}
\mathbb{P}(\mathcal{E})
\ge
1-\sum_{b=0}^{B'-1}\delta_b
\ge
1-\sum_{b=0}^{B-1}\delta_b,
\end{align*}
and on $\mathcal{E}$, for every realized batch $b\in\{0,\dots,B'-1\}$ and every $a\in\mathbb{R}^d$,
\begin{align*}
|\mu(a^\top\widetilde{\theta}_b)-\mu(a^\top\theta^{\star})|
\le
\widetilde{\beta}_b\|a\|_{V_{t_b}^{-1}}.
\end{align*}

Fix the event $\mathcal{E}$ in what follows. We prove the regret bound in three steps.

\paragraph{Step 1: Reduce the regret to stale batch-start norms.}
For each round $t\in[T]$, let $b(t)\in\{0,\dots,B'-1\}$ denote the realized batch index such that $t\in[t_{b(t)},\,t_{b(t)+1}-1]$. Let
\begin{align*}
a_t^{\star}\in \arg\max_{a\in A_t}\mu(a^\top\theta^{\star})
\end{align*}
be an optimal action at round $t$. Since the algorithm uses the batch-start parameter $\widetilde{\theta}_{b(t)}$ and radius $\widetilde{\beta}_{b(t)}$ throughout batch $b(t)$, the chosen action $a_t$ satisfies
\begin{align*}
\mu(a_t^\top\widetilde{\theta}_{b(t)})
+
\widetilde{\beta}_{b(t)}\|a_t\|_{V_{t_{b(t)}}^{-1}}
\ge
\mu((a_t^{\star})^\top\widetilde{\theta}_{b(t)})
+
\widetilde{\beta}_{b(t)}\|a_t^{\star}\|_{V_{t_{b(t)}}^{-1}}.
\end{align*}
Moreover, on $\mathcal{E}$,
\begin{align*}
\mu((a_t^{\star})^\top\theta^{\star})
&\le
\mu((a_t^{\star})^\top\widetilde{\theta}_{b(t)})
+
\widetilde{\beta}_{b(t)}\|a_t^{\star}\|_{V_{t_{b(t)}}^{-1}}, \\
\mu(a_t^\top\theta^{\star})
&\ge
\mu(a_t^\top\widetilde{\theta}_{b(t)})
-
\widetilde{\beta}_{b(t)}\|a_t\|_{V_{t_{b(t)}}^{-1}}.
\end{align*}
Subtracting the two displays gives
\begin{align*}
\mu((a_t^{\star})^\top\theta^{\star})-\mu(a_t^\top\theta^{\star})
\le
2\widetilde{\beta}_{b(t)}\|a_t\|_{V_{t_{b(t)}}^{-1}}.
\end{align*}
Since rewards lie in $[0,1]$, the instantaneous regret is always at most $1$. Therefore
\begin{align*}
\mu((a_t^{\star})^\top\theta^{\star})-\mu(a_t^\top\theta^{\star})
\le
\min\{1,\;2\widetilde{\beta}_{b(t)}\|a_t\|_{V_{t_{b(t)}}^{-1}}\}.
\end{align*}
Since $\widetilde{\beta}_{b(t)}\le \widetilde{\beta}_{\max}$, we obtain
\begin{align*}
\mu((a_t^{\star})^\top\theta^{\star})-\mu(a_t^\top\theta^{\star})
\le
\min\{1,\;2\widetilde{\beta}_{\max}\|a_t\|_{V_{t_{b(t)}}^{-1}}\}.
\end{align*}
Now for any $\kappa>0$ and any $u\ge 0$,
\begin{align*}
\min\{1,\kappa u\}
\le
\max\{1,\kappa\}\min\{1,u\}.
\end{align*}
Applying this with $\kappa=2\widetilde{\beta}_{\max}$ and $u=\|a_t\|_{V_{t_{b(t)}}^{-1}}$ yields
\begin{align*}
\mu((a_t^{\star})^\top\theta^{\star})-\mu(a_t^\top\theta^{\star})
\le
\max\{1,2\widetilde{\beta}_{\max}\}\min\{1,\|a_t\|_{V_{t_{b(t)}}^{-1}}\}.
\end{align*}
Summing over $t$ gives
\begin{align*}
\mathrm{Reg_{GLB}}(T)
\le
\max\{1,2\widetilde{\beta}_{\max}\}
\sum_{t=1}^T \min\{1,\|a_t\|_{V_{t_{b(t)}}^{-1}}\}.
\end{align*}

\paragraph{Step 2: Compare stale norms to current norms using determinant growth.}
We show that for every round $t\in[T]$,
\begin{align*}
\|a_t\|_{V_{t_{b(t)}}^{-1}}
\le
M_T\,\|a_t\|_{V_t^{-1}}.
\end{align*}
We split into two cases.

If $B'<B$, then the batch budget is not exhausted. Hence for every realized batch $b\in\{0,\dots,B'-1\}$ and every $t\in[t_b,t_{b+1}-1]$, the next batch has not yet been triggered by time $t$, so by the determinant batching rule,
\begin{align*}
\det(V_t)\le q\,\det(V_{t_b}).
\end{align*}
Also $V_t\succeq V_{t_b}$. Let
\begin{align*}
M:=V_{t_b}^{-1/2}V_tV_{t_b}^{-1/2}\succeq I_d,
\qquad
u:=V_{t_b}^{-1/2}a_t.
\end{align*}
Then
\begin{align*}
\|a_t\|_{V_{t_b}^{-1}}^2=\|u\|_2^2,
\qquad
\|a_t\|_{V_t^{-1}}^2=u^\top M^{-1}u.
\end{align*}
Diagonalizing $M$ with eigenvalues $\lambda_1,\dots,\lambda_d\ge 1$ and writing $u$ in the eigenbasis gives
\begin{align*}
\|u\|_2^2
\le
\left(\max_i \lambda_i\right)u^\top M^{-1}u.
\end{align*}
Since each $\lambda_i\ge 1$,
\begin{align*}
\max_i\lambda_i
\le
\prod_{i=1}^d \lambda_i
=
\det(M)
=
\frac{\det(V_t)}{\det(V_{t_b})}.
\end{align*}
Hence
\begin{align*}
\|a_t\|_{V_{t_b}^{-1}}^2
\le
\frac{\det(V_t)}{\det(V_{t_b})}\,
\|a_t\|_{V_t^{-1}}^2
\le
q\,\|a_t\|_{V_t^{-1}}^2,
\end{align*}
and so
\begin{align*}
\|a_t\|_{V_{t_b}^{-1}}
\le
\sqrt{q}\,\|a_t\|_{V_t^{-1}}
\le
M_T\,\|a_t\|_{V_t^{-1}}.
\end{align*}

Now consider the case $B'=B$. For every batch $b\in\{0,\dots,B-2\}$, the trigger fired at $t_{b+1}$, so
\begin{align*}
\det(V_{t_{b+1}})>q\,\det(V_{t_b}).
\end{align*}
Multiplying these inequalities over $b=0,\dots,B-2$ gives
\begin{align*}
\det(V_{t_{B-1}})
>
q^{B-1}\det(V_{t_0})
=
q^{B-1}\det(\lambda I_d).
\end{align*}
For the non-final batches $b\in\{0,\dots,B-2\}$, exactly the same argument as above gives
\begin{align*}
\|a_t\|_{V_{t_b}^{-1}}
\le
\sqrt{q}\,\|a_t\|_{V_t^{-1}}
\le
M_T\,\|a_t\|_{V_t^{-1}}
\qquad
\text{for all } t\in[t_b,t_{b+1}-1].
\end{align*}
For the final batch $b=B-1$, if $t\in[t_{B-1},T]$, then $V_t\preceq V_{T+1}$, and therefore
\begin{align*}
\frac{\det(V_t)}{\det(V_{t_{B-1}})}
<
\frac{\det(V_{T+1})}{q^{B-1}\det(\lambda I_d)}.
\end{align*}
Applying the same determinant-ratio norm comparison as above with $A=V_t$ and $B=V_{t_{B-1}}$ yields
\begin{align*}
\|a_t\|_{V_{t_{B-1}}^{-1}}
\le
\|a_t\|_{V_t^{-1}}
\sqrt{\frac{\det(V_t)}{\det(V_{t_{B-1}})}}
\le
\|a_t\|_{V_t^{-1}}
\sqrt{\frac{\det(V_{T+1})}{q^{B-1}\det(\lambda I_d)}}
\le
M_T\,\|a_t\|_{V_t^{-1}}.
\end{align*}
Thus in all cases,
\begin{align*}
\|a_t\|_{V_{t_{b(t)}}^{-1}}
\le
M_T\,\|a_t\|_{V_t^{-1}}.
\end{align*}
Since $M_T\ge 1$, it follows that
\begin{align*}
\min\{1,\|a_t\|_{V_{t_{b(t)}}^{-1}}\}
\le
M_T\,\min\{1,\|a_t\|_{V_t^{-1}}\}.
\end{align*}
Substituting into the bound from Step 1 gives
\begin{align*}
\mathrm{Reg_{GLB}}(T)
\le
\max\{1,2\widetilde{\beta}_{\max}\}\,M_T
\sum_{t=1}^T \min\{1,\|a_t\|_{V_t^{-1}}\}.
\end{align*}

\paragraph{Step 3: Apply the elliptical potential bound.}
Using $V_{t+1}=V_t+a_t a_t^\top$ and the matrix determinant lemma,
\begin{align*}
\det(V_{t+1})
=
\det(V_t)\bigl(1+\|a_t\|_{V_t^{-1}}^2\bigr).
\end{align*}
Therefore
\begin{align*}
\log\!\left(\frac{\det(V_{T+1})}{\det(V_1)}\right)
=
\sum_{t=1}^T \log\!\left(1+\|a_t\|_{V_t^{-1}}^2\right).
\end{align*}
For every $u\ge 0$, we have $\log(1+u)\ge \frac{1}{2}\min\{u,1\}$. Applying this with $u=\|a_t\|_{V_t^{-1}}^2$ yields
\begin{align*}
\sum_{t=1}^T \min\{1,\|a_t\|_{V_t^{-1}}^2\}
\le
2\log\!\left(\frac{\det(V_{T+1})}{\det(V_1)}\right)
=
2D_T,
\end{align*}
since $V_1=\lambda I_d$.
Finally, by Cauchy--Schwarz,
\begin{align*}
\sum_{t=1}^T \min\{1,\|a_t\|_{V_t^{-1}}\}
&\le
\sqrt{T\sum_{t=1}^T \min\{1,\|a_t\|_{V_t^{-1}}\}^2} \\
&\le
\sqrt{T\sum_{t=1}^T \min\{1,\|a_t\|_{V_t^{-1}}^2\}} \\
&\le
\sqrt{2T D_T}.
\end{align*}
Combining this with the bound from Step 2 gives
\begin{align*}
\mathrm{Reg_{GLB}}(T)
\le
\max\{1,2\widetilde{\beta}_{\max}\}\,M_T\,\sqrt{2T D_T}.
\end{align*}
\end{proof}

\begin{lemma}[\textbf{Trajectory $\rho$-replicability of $\RepGLMUCB$}]
\label{lem:repglmucb_trajectory_replicability}
Fix any realized batch grid $(t_b)_{b=0}^{B'}$ with $B'\le B$, and budgets $(\rho_b)_{b=0}^{B-1}$. Consider two runs $k\in\{1,2\}$ of \hyperref[alg:replicable_glm_ucb]{$\RepGLMUCB$} (Algorithm~\ref{alg:replicable_glm_ucb}) on the same obliviously fixed action-set sequence $(A_t)_{t=1}^T$, using the same shared internal randomness in every \hyperref[alg:replicable_glm]{$\RepGLM$} call and independent noise sequences $(\varepsilon_t^{(k)})_{t=1}^T$. Let $(a_t^{(k)})_{t=1}^T$ denote the resulting action sequences, and let $(\widetilde{\theta}_b^{(k)})_{b=0}^{B'-1}$ denote the batch-start estimators produced in the two runs. Then
\begin{align*}
\mathbb{P}\!\left(\forall t\in[T]: a_t^{(1)}=a_t^{(2)}\right)
\ge
1-\sum_{b=0}^{B-1}\rho_b.
\end{align*}
\end{lemma}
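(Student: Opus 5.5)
The argument follows the proof of Lemma~\ref{lem:replinucb-trajectory-replicability}, with $\RepRidge$ replaced by $\RepGLM$ and Theorem~\ref{thm:replicable_ridge} replaced by Theorem~\ref{thm:replicable_glm}.

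For each batch $b$ we define the trajectory-match event $\mathcal{M}_b:=\{\forall s\le t_b-1:\ a_s^{(1)}=a_s^{(2)}\}$. The event $\mathcal{M}_0$ holds surely. We also define the estimator-mismatch event $A_b:=\{\widetilde{\theta}_b^{(1)}\neq\widetilde{\theta}_b^{(2)}\}$. Since $\mathcal{M}_0$ holds, $\mathcal{M}_{B'}^c\subseteq\bigcup_b(\mathcal{M}_{b+1}^c\cap\mathcal{M}_b)$, so a union bound reduces the lemma to two claims:
\begin{align*}
\mathcal{M}_{b+1}^c\cap\mathcal{M}_b\subseteq A_b,
\qquad
\mathbb{P}(A_b\cap\mathcal{M}_b)\le\rho_b .
\end{align*}

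\textbf{Deterministic step.} Suppose $\mathcal{M}_b$ holds. Then both runs have the same $V_{t_b}$, hence the same $\beta_b$ and $\widetilde{\beta}_b$ (computed in Lines~\ref{line:repglmucb_beta}--\ref{line:repglmucb_tildebeta}). The grid width $\alpha_b$ is also the same, and the shift $u_b$ is shared. The batch-start trigger times depend only on the actions through $V_t$, so they coincide as well. If in addition $\widetilde{\theta}_b^{(1)}=\widetilde{\theta}_b^{(2)}$, then at every $t\in[t_b,t_{b+1}-1]$ both runs maximize the same score
\[
a\mapsto\mu(\langle a,\widetilde{\theta}_b\rangle)+\widetilde{\beta}_b\|a\|_{V_{t_b}^{-1}}
\]
over the same obliviously fixed $A_t$, with the same tie-breaking rule. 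Hence they pick the same actions and $\mathcal{M}_{b+1}$ holds. This proves the inclusion.

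\textbf{Probabilistic step.} On $\mathcal{M}_b$ the design $(a_s)_{s<t_b}$ is common to both runs. We define $\mathcal{E}_b^{(k)}=\{\|\widehat{\theta}_{t_b}^{(k)}-\theta^\star\|_{V_{t_b}}\le\beta_b\}$ for $k\in\{1,2\}$.

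1. Each run's actions are predictable with respect to its own filtration. The proof of Lemma~\ref{lem:repglm_confidence} (Step 1) uses only the self-normalized martingale bound, which is valid for adaptive designs. It therefore gives $\mathbb{P}(\mathcal{E}_b^{(k)})\ge1-\delta_b$, and a union bound gives $\mathbb{P}\bigl((\mathcal{E}_b^{(1)}\cap\mathcal{E}_b^{(2)})^c\bigr)\le2\delta_b$.
2. On $\mathcal{M}_b\cap\mathcal{E}_b^{(1)}\cap\mathcal{E}_b^{(2)}$, the triangle inequality gives $\|z_b^{(1)}-z_b^{(2)}\|_2\le2\beta_b$.
3. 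Condition on $(z_b^{(1)},z_b^{(2)})$. The shift $u_b$ is drawn fresh and is independent of all data up to $t_b$. The coordinatewise shifted-grid argument from Lemma~\ref{lem:repglm_replicability} then bounds the mismatch probability by $\sqrt d\,\|z_b^{(1)}-z_b^{(2)}\|_2/\alpha_b\le\rho_b-2\delta_b$.
4. Summing the two contributions gives $\mathbb{P}(A_b\cap\mathcal{M}_b)\le\rho_b$.

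Summing over $b\le B'-1\le B-1$ yields $\mathbb{P}(\mathcal{M}_{B'}^c)\le\sum_{b=0}^{B-1}\rho_b$, which is the claim.

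\textbf{Main obstacle.} The delicate point is legitimately applying the fixed-design guarantee of Theorem~\ref{thm:replicable_glm} when the design is adaptive and random. The plan avoids conditioning on the design, which would break the martingale structure of the noise. Instead it works with joint events intersected with $\mathcal{M}_b$ and invokes the self-normalized bound separately in each run. The rounding step then only needs $u_b$ to be independent of the past, and this holds by construction.
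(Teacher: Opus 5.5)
\textbf{Verdict: correct, with a different probabilistic step from the paper's proof of this lemma.} Your decomposition and deterministic step match the paper: $\mathcal{M}_{b+1}^c\cap\mathcal{M}_b\subseteq A_b$ followed by a union bound over batches. The difference is in how you bound $\mathbb{P}(A_b\cap\mathcal{M}_b)\le\rho_b$.

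\textbf{The paper's route.} It conditions on $\mathcal{M}_b$ and treats the now-common action history as a fixed design. It then cites the replicability part of Theorem~\ref{thm:replicable_glm} as a black box to get $\mathbb{P}(A_b\mid\mathcal{M}_b)\le\rho_b$.

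\textbf{Your route.} You bound the joint probability $\mathbb{P}(A_b\cap\mathcal{M}_b)$ directly. The events $\mathcal{E}_b^{(k)}$ are controlled unconditionally in each run, using the self-normalized bound inside the strong-monotonicity argument of Lemma~\ref{lem:repglm_confidence}. The shifted-grid argument is then applied conditionally on the two whitened estimates. This is the structure the paper itself uses for the linear analogue, Lemma~\ref{lem:replinucb-trajectory-replicability}, carried over to the GLM estimator.

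\textbf{What each buys.} The paper's route is shorter and modular. Yours re-derives the rounding bound but is more careful, for two reasons:
\begin{itemize}
\item Conditioning on $\mathcal{M}_b$, or on the realized adaptive design, changes the law of the noise, so the fixed-design theorem does not literally apply.
\item Even granting it, after conditioning on $\mathcal{M}_b$ the failure probability of the confidence events is only bounded by $2\delta_b/\mathbb{P}(\mathcal{M}_b)$ in general, not by $2\delta_b$.
\end{itemize}
Your unconditional decomposition needs only that the self-normalized bound hold at the random stopping time $t_b$ in each run separately, which the time-uniform version of that inequality provides.

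\textbf{A minor imprecision.} $\mathtt{RepGLM}$ draws $u\sim\mathrm{Unif}([0,\alpha)^d)$ with $\alpha$ data-dependent, so $u_b$ is not literally independent of the data. What you actually use is that, conditionally on both histories up to $t_b$, $u_b$ is uniform on $[0,\alpha_b)^d$. This holds if the shared randomness is realized as $u_b=\alpha_b\xi_b$, with a shared $\xi_b\sim\mathrm{Unif}([0,1)^d)$ independent of the noise and of the earlier $\xi$'s. On $\mathcal{M}_b$ the two runs then have the same $\alpha_b$, and hence use the same $u_b$.
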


\begin{proof}[Proof of Lemma~\ref{lem:repglmucb_trajectory_replicability}]
For each realized batch $b\in\{0,\dots,B'-1\}$, define the trajectory-match event up to the start of batch $b$ by
\begin{align*}
\mathcal{M}_b
:=
\left\{
\forall s\le t_b-1:\ a_s^{(1)}=a_s^{(2)}
\right\}.
\end{align*}
Since $t_0=1$, the event $\mathcal{M}_0$ holds trivially.

For each realized batch $b$, define the batch-estimator mismatch event
\begin{align*}
A_b
:=
\left\{
\widetilde{\theta}_b^{(1)}\neq \widetilde{\theta}_b^{(2)}
\right\}.
\end{align*}

We prove the claim in three steps.

\paragraph{Step 1: If the batch-start estimators agree, then the entire batch trajectory agrees.}
Fix a realized batch $b\in\{0,\dots,B'-1\}$ and assume that $\mathcal{M}_b$ holds and also $\widetilde{\theta}_b^{(1)}=\widetilde{\theta}_b^{(2)}$. Since $\mathcal{M}_b$ holds, the two runs have selected the same actions up to time $t_b-1$. Therefore they have the same design matrix at batch start,
\begin{align*}
V_{t_b}^{(1)}=V_{t_b}^{(2)}=:V_{t_b}.
\end{align*}
Because the batch-start confidence radius $\widetilde{\beta}_b$ is a deterministic function of $V_{t_b}$ and the prescribed budget parameters, it is also identical across the two runs.

Now for any round $t\in\{t_b,\dots,t_{b+1}-1\}$, both runs maximize the same function
\begin{align*}
a\mapsto \mu(a^\top\widetilde{\theta}_b)+\widetilde{\beta}_b\|a\|_{V_{t_b}^{-1}}
\end{align*}
over the same action set $A_t$, with the same deterministic tie-breaking rule. Hence
\begin{align*}
a_t^{(1)}=a_t^{(2)}
\qquad
\text{for all } t=t_b,\dots,t_{b+1}-1.
\end{align*}
Thus $\mathcal{M}_{b+1}$ holds. Equivalently,
\begin{align*}
\mathcal{M}_{b+1}^c\cap \mathcal{M}_b
\subseteq
A_b.
\end{align*}

\paragraph{Step 2: Bound the probability that the batch-$b$ $\RepGLM$ calls disagree on $\mathcal{M}_b$.}
Fix a realized batch $b\in\{0,\dots,B'-1\}$ and condition on the event $\mathcal{M}_b$. Then the two runs have identical actions $a_1,\dots,a_{t_b-1}$ entering batch $b$, and hence identical design matrices
\begin{align*}
V_{t_b}^{(1)}=V_{t_b}^{(2)}=:V_{t_b}.
\end{align*}
Therefore the two batch-$b$ estimation problems have the same fixed covariate sequence, the same regularization parameter $\lambda$, the same confidence level $\delta_b$, the same target replicability level $\rho_b$, and the same shared random shift. They differ only through the independent response noise. By the replicability statement of Theorem~\ref{thm:replicable_glm}, this implies
\begin{align*}
\mathbb{P}(A_b\mid \mathcal{M}_b)\le \rho_b.
\end{align*}
Consequently,
\begin{align*}
\mathbb{P}(A_b\cap \mathcal{M}_b)
=
\mathbb{P}(A_b\mid \mathcal{M}_b)\mathbb{P}(\mathcal{M}_b)
\le
\rho_b.
\end{align*}
Using Step 1, we conclude that
\begin{align*}
\mathbb{P}(\mathcal{M}_{b+1}^c\cap \mathcal{M}_b)
\le
\mathbb{P}(A_b\cap \mathcal{M}_b)
\le
\rho_b.
\end{align*}

\paragraph{Step 3: Union bound over batches.}
Since $\mathcal{M}_0$ holds surely, any failure of full trajectory agreement must first occur at some batch transition. Therefore
\begin{align*}
\mathcal{M}_{B'}^c
\subseteq
\bigcup_{b=0}^{B'-1}
(\mathcal{M}_{b+1}^c\cap \mathcal{M}_b).
\end{align*}
By a union bound and Step 2,
\begin{align*}
\mathbb{P}(\mathcal{M}_{B'}^c)
&\le
\sum_{b=0}^{B'-1}\mathbb{P}(\mathcal{M}_{b+1}^c\cap \mathcal{M}_b) \\
&\le
\sum_{b=0}^{B'-1}\rho_b \\
&\le
\sum_{b=0}^{B-1}\rho_b.
\end{align*}
Since $\mathcal{M}_{B'}$ is exactly the event
\begin{align*}
\left\{
\forall t\in[T]: a_t^{(1)}=a_t^{(2)}
\right\},
\end{align*}
the claim follows.
\end{proof}

\begin{proof}[Proof of Theorem~\ref{thm:repglmucb_main}]
Using $\rho_b=\rho/B$, Lemma~\ref{lem:repglmucb_trajectory_replicability} immediately gives the replicability guarantee:
\begin{align*}
\mathbb{P}\!\left(\forall t\in[T]: a_t^{(1)}=a_t^{(2)}\right)
\ge
1-\sum_{b=0}^{B-1}\rho_b
=
1-\rho.
\end{align*}

It remains to prove the regret bound. Since $\delta_b=\delta/B$ for all $b$, we have
\begin{align*}
\sum_{b=0}^{B-1}\delta_b=\delta,
\end{align*}
and therefore Lemma~\ref{lem:repglmucb_regret_realized_grid} applies with probability at least $1-\delta$.

We first control $D_T$. Since
\begin{align*}
V_{T+1}=\lambda I_d+\sum_{t=1}^T a_t a_t^\top
\end{align*}
and $\|a_t\|_2\le L$ for all $t$, the standard determinant bound gives
\begin{align*}
\det(V_{T+1})
\le
\det(\lambda I_d)\left(1+\frac{TL^2}{\lambda d}\right)^d.
\end{align*}
Therefore
\begin{align*}
D_T
=
\log\!\left(\frac{\det(V_{T+1})}{\det(\lambda I_d)}\right)
\le
d\log\!\left(1+\frac{TL^2}{\lambda d}\right).
\end{align*}

Next, by the choice
\begin{align*}
q=\left(1+\frac{TL^2}{\lambda d}\right)^{d/B},
\end{align*}
we have
\begin{align*}
M_T
&=
\max\!\left\{
\sqrt{q},
\sqrt{\frac{\det(V_{T+1})}{q^{B-1}\det(\lambda I_d)}}
\right\} \\
&\le
\max\!\left\{
\left(1+\frac{TL^2}{\lambda d}\right)^{\frac{d}{2B}},
\sqrt{\frac{\left(1+\frac{TL^2}{\lambda d}\right)^d}{\left(1+\frac{TL^2}{\lambda d}\right)^{d(B-1)/B}}}
\right\} \\
&=
\left(1+\frac{TL^2}{\lambda d}\right)^{\frac{d}{2B}}.
\end{align*}
Since
\begin{align*}
B
=
\left\lceil d\log\!\left(1+\frac{TL^2}{\lambda d}\right)\right\rceil,
\end{align*}
it follows that
\begin{align*}
\frac{d}{2B}\log\!\left(1+\frac{TL^2}{\lambda d}\right)\le \frac{1}{2},
\end{align*}
and hence
\begin{align*}
M_T\le e^{1/2}.
\end{align*}

We now bound $\widetilde{\beta}_{\max}$. For every realized batch $b\in\{0,\dots,B'-1\}$,
\begin{align*}
\widetilde{\beta}_b
=
k_{\mu}\,\beta_b\left(1+\frac{d}{\rho_b-2\delta_b}\right),
\qquad
\beta_b
=
\frac{1}{\underline{c}_{\mu}}
\Bigg(
\sigma\sqrt{2\log\!\left(\frac{\det(V_{t_b})^{1/2}}{\det(\lambda I_d)^{1/2}\,\delta_b}\right)}
+\sqrt{\lambda}\,S
\Bigg).
\end{align*}
Since $t_b\le T+1$, $\delta_b=\delta/B$, and $\det(V_{t_b})\le \det(V_{T+1})$, we have
\begin{align*}
\beta_b
&\le
\frac{1}{\underline{c}_{\mu}}
\Bigg(
\sigma\sqrt{2\log\!\left(
\left(1+\frac{TL^2}{\lambda d}\right)^{d/2}\frac{B}{\delta}
\right)}
+\sqrt{\lambda}\,S
\Bigg) \\
&=
\frac{1}{\underline{c}_{\mu}}
\Bigg(
\sigma\sqrt{d\log\!\left(1+\frac{TL^2}{\lambda d}\right)+2\log\!\left(\frac{B}{\delta}\right)}
+\sqrt{\lambda}\,S
\Bigg).
\end{align*}
Also, since $\rho_b=\rho/B$, $\delta_b=\delta/B$, and $\rho>3\delta$,
\begin{align*}
\rho_b-2\delta_b
=
\frac{\rho-2\delta}{B}
\ge
\frac{\rho}{3B},
\end{align*}
so
\begin{align*}
1+\frac{d}{\rho_b-2\delta_b}
\le
1+\frac{3dB}{\rho}
\le
3\left(1+\frac{dB}{\rho}\right).
\end{align*}
Combining the last two displays yields
\begin{align*}
\widetilde{\beta}_{\max}
\le
\frac{3k_{\mu}}{\underline{c}_{\mu}}
\Bigg(
\sigma\sqrt{d\log\!\left(1+\frac{TL^2}{\lambda d}\right)+2\log\!\left(\frac{B}{\delta}\right)}
+\sqrt{\lambda}\,S
\Bigg)
\left(1+\frac{dB}{\rho}\right).
\end{align*}
Absorbing the factor of $2$ inside the logarithm into a universal constant, we obtain
\begin{align*}
\widetilde{\beta}_{\max}
\le
C_1\,
\frac{k_{\mu}}{\underline{c}_{\mu}}
\left(
\sigma\sqrt{d\log\!\left(1+\frac{TL^2}{\lambda d}\right)+\log\!\left(\frac{B}{\delta}\right)}
+\sqrt{\lambda}\,S
\right)
\left(1+\frac{dB}{\rho}\right)
\end{align*}
for a universal constant $C_1>0$.

Substituting the bounds on $D_T$, $M_T$, and $\widetilde{\beta}_{\max}$ into Lemma~\ref{lem:repglmucb_regret_realized_grid}, and absorbing universal constants, gives
\begin{align*}
\mathrm{Reg_{GLB}}(T)
&\le
\max\{1,2\widetilde{\beta}_{\max}\}\,M_T\,\sqrt{2T D_T} \\
&\le
C\,
\frac{k_{\mu}}{\underline{c}_{\mu}}
\left(
\sigma\sqrt{d\log\!\left(1+\frac{TL^2}{\lambda d}\right)+\log\!\left(\frac{B}{\delta}\right)}
+\sqrt{\lambda}\,S
\right)
\left(1+\frac{dB}{\rho}\right)
\sqrt{T\,d\log\!\left(1+\frac{TL^2}{\lambda d}\right)},
\end{align*}
which is exactly the claimed regret bound.
\end{proof}

% \newpage
% \section{Batched Linear Bandits}
% \label{sec:batch}
% \input{Appendix/batch}

% \newpage
% \section{Replicable Linear Bandits}
% \label{sec:replicable_linear}
% \input{Appendix/replicable_linear}

\end{document}